\numberwithin{equation}{section}
\crefname{equation}{}{}
\newtheorem{theorem}{Theorem}
\newtheorem{lemma}{Lemma}
\newtheorem{proposition}{Proposition}
\theoremstyle{definition}
\theoremstyle{definition}\newtheorem{example}{Example}
\theoremstyle{definition}
\theoremstyle{definition}\newtheorem{assumption}{Assumption}
\theoremstyle{remark}\newtheorem{remark}{Remark}
\theoremstyle{remark}
\newcommand{\z}{\mathbf{z}}
\newcommand{\supp}{\mathrm{supp}}
\newcommand{\prox}{\mathrm{prox}}
\begin{document}


\twocolumn[
\aistatstitle{A Unified Dynamic Approach to Sparse Model Selection}
\aistatsauthor{Chendi Huang \And Yuan Yao}
\aistatsaddress{
    School of Mathematical Sciences,\\
    Peking University,\\
    Beijing, China\\
    \texttt{cdhuang@pku.edu.cn}
    \And
    Department of Mathematics,\\
    Hong Kong University of Science and Technology,\\
    HKSAR, China\\
    \texttt{yuany@ust.hk}
}]

\begin{abstract}
    Sparse model selection is ubiquitous from linear regression to graphical models where regularization paths, as a family of estimators upon the regularization parameter varying, are computed when the regularization parameter is unknown or decided data-adaptively. Traditional computational methods rely on solving a set of optimization problems where the regularization parameters are fixed on a grid that might be inefficient. In this paper, we introduce a simple iterative regularization path, which follows the dynamics of a sparse Mirror Descent algorithm or a generalization of Linearized Bregman Iterations with nonlinear loss. Its performance is competitive to \texttt{glmnet} with a further bias reduction. A path consistency theory is presented that under the Restricted Strong Convexity (RSC) and the Irrepresentable Condition (IRR), the path will first evolve in a subspace with no false positives and reach an estimator that is sign-consistent or of minimax optimal $\ell_2$ error rate. Early stopping regularization is required to prevent overfitting. Application examples are given in sparse logistic regression and Ising models for NIPS coauthorship.
\end{abstract}

\section{Introduction}
\label{sec:intro}


In high dimensional statistics and machine learning, the data $\z$ is often assumed to be generated from a statistical model $\mathcal{P}(\alpha^\star, \beta^\star)$ with a sparse parameter $\beta^\star$, and the purpose is to estimate $\beta^\star$ typically via the following optimization approach,
\begin{equation}
    \label{eq:pen-opt}
    \min_{\alpha, \beta} \left( \ell\left( \alpha, \beta; \z \right) + \lambda P\left( \beta \right) \right),
\end{equation}
where $\ell(\alpha, \beta; \z)$ is a loss function depending on data $\z$ and parameter $(\alpha, \beta)$, usually based on likelihood, and $P(\beta)$ is a penalty function. For simplicity, we shall omit the dependence on $\z$ for the loss when it is clear from the context.

\begin{example}[Sparse linear regression model]
    \label{thm:model-linear}
    Let $X = (x^{(1)}, \ldots, x^{(n)})^T \in \mathbb{R}^{n\times p}$ be a fixed design matrix, and $y = (y^{(1)}, \ldots, y^{(n)})^T \in \mathbb{R}^n$,
    \begin{equation*}
        y^{(i)} = \alpha^{\star} + \beta^{\star T} x^{(i)} + \epsilon^{(i)}\ (1\le i \le n),
    \end{equation*}
    with $\epsilon^{(i)}$'s i.i.d. drawn from $N(0,\sigma^2)$, and $\beta^{\star}$ sparse. Let
    \begin{equation*}
        \ell(\alpha, \beta; \z) = \| y - \alpha - X \beta \|_2^2 / (2n)
    \end{equation*}
    be the loss function for data $\z = (x^{(i)}, y^{(i)})_{i=1}^n$ and parameter $(\alpha, \beta)$ (intercept $\alpha$ and linear parameter $\beta$), as well as the Lasso penalty $P(\beta) = \|\beta\|_1$. For model selection consistency, \citet{Zhao_model_2006,wainwright_sharp_2009} showed it under Restricted Strong Convexity (RSC) and Irrepresentable Condition (IRR); under a weaker restricted eigenvalue condition, \citet{BicRitTsy09} established the $\ell_2$-error at minimax optimal rates. 
\end{example}

\begin{example}[Sparse logistic regression model]
    \label{thm:model-logistic}
    Let $x^{(i)}\in \mathbb{R}^p\ (1\le i\le n)$, and $y^{(i)}\in\left\{ 1,-1 \right\}$,
    \begin{equation*}
        \mathbb{P}\left( y^{(i)} = 1 | x^{(i)} \right) = 1 / \left( 1 + \exp\left( - \left( \alpha^{\star} + \beta^{\star T} x^{(i)} \right) \right) \right),
    \end{equation*}
    with $\beta^{\star}$ sparse. \citet{ravikumar_high-dimensional_2010} considered \cref{eq:pen-opt} with the loss function for data $\z = (y^{(i)}, x^{(i)})_{i=1}^n$ and parameter $(\alpha, \beta)$
    \begin{equation}
        \label{eq:loss-logistic}
        \begin{split}
            \ell\left( \alpha, \beta; \z \right) &= - \frac{1}{n} \sum_{i=1}^n \log \mathbb{P}_{\alpha,\beta} \left( y = y^{(i)} | x^{(i)} \right)\\
            &= \frac{1}{n} \sum_{i=1}^n \log\left( 1 + \exp\left( - \left( \alpha + \beta^T x^{(i)} \right) y^{(i)} \right) \right),
        \end{split}
    \end{equation}
    as well as $P(\beta) = \|\beta\|_1$. They also showed its selection/estimation consistency.
\end{example}

\begin{example}[Sparse Ising model]
    \label{thm:model-ising}
    $x^{(i)}\ (1\le i\le n)$ are drawn from $x\in \{1, -1\}^p$ whose population satisfies
    \begin{multline}
        \label{eq:model-ising}
        \mathbb{P}\left( x = \left( x_1, \ldots, x_p \right)^T \right) \\
        \propto \exp\left( \frac{1}{2} \sum_{j=1}^p \alpha_j^{\star} x_j + \frac{1}{2} \sum_{j<j'} \beta_{j,j'}^{\star} x_j x_{j'} \right),
    \end{multline}
    where $\alpha^{\star}\in \mathbb{R}^p,\ \beta^{\star}\in \mathbb{R}^{p\times p}$,\protect\footnote{We assume $\mathrm{diag}(\beta^{\star}) = 0$ and $\beta^{\star}$ is symmetric.} and $\beta^{\star}$ is sparse. \citet{ravikumar_high-dimensional_2010} studied sparse Ising model \cref{eq:model-ising} by the so-called \textit{neighborhood-based logistic regression}, based on the discussion on sparse logistic models in their paper. Specifically, despite the difficulty to deal with the whole $(\alpha^{\star}, \beta^{\star})$ by using likelihood-based loss functions of Ising model, they noticed that
    \begin{equation*}
        \mathbb{P} \left( x_j | x_{-j} \right) = 1 / \left( 1 + \exp\left( - \left( \alpha_j^{\star} + \beta_{-j, j}^{\star T} x_{-j} \right) x_j \right) \right).
    \end{equation*}
    Thus each $j$ corresponds to a sparse logistic regression problem, i.e. \Cref{thm:model-logistic}, with $y, x, \alpha, \beta$ replaced by $x_j, x_{-j}, \alpha_j, \beta_{-j,j}$. Thus they learned $(\alpha_j^{\star}, \beta_{-j,j}^{\star})$ (by $\ell_1$ regularized logistic regression) for each $j$, instead of dealing with $(\alpha^{\star}, \beta^{\star})$ directly. \citet{xue_nonconcave_2012} considered \cref{eq:pen-opt} with the loss $\ell(\alpha,\beta)$ being the \textit{negative composite conditional log-likelihood}
    \begin{equation}
        \label{eq:loss-ising-composite}
        \frac{1}{n} \sum_{i=1}^n \sum_{j=1}^p \log\left( 1 + \exp\left( - \left( \alpha_j + \beta_{-j,j}^T x_{-j}^{(i)} \right) x_j^{(i)} \right) \right).
    \end{equation}
    $P(\cdot)$ can be $\ell_1$ penalty, SCAD penalty or other positive penalty function defined on $[0, +\infty)$. Alternatively \citet{sohl-dickstein_minimum_2011} proposed an approach of Minimum Probability Flow (MPF) which in the case of Ising model uses the following loss
    \begin{equation}
        \label{eq:loss-ising-mpf}
        \frac{1}{n} \sum_{i=1}^n \sum_{j=1}^p \exp\left( - \frac{1}{2} \left( \alpha_j + \beta_{-j,j}^T x_{-j}^{(i)} \right) x_j^{(i)} \right).
    \end{equation}
    The minimizer of this function is a reasonable estimator of $(\alpha^{\star}, \beta^{\star})$. However their work did not treat sparse models in high-dimensional setting. When facing sparse Ising model, one may consider \cref{eq:pen-opt} with the loss $\ell(\alpha,\beta)$ being the expression in \cref{eq:loss-ising-mpf} and $P(\beta) = \|\beta\|_1$, which is not seen in literature to the best of our knowledge.
\end{example}

\begin{example}[Sparse Gaussian graphical model]
    \label{thm:model-gau}
    $x^{(i)}\ (1\le i\le n)$ are drawn from a multivariate Gaussian distribution with covariance $\Sigma^{\star}\in \mathbb{R}^{p\times p}$, and the \textit{precision matrix} $\Omega^{\star} = \Sigma^{\star - 1}$ is assumed to be sparse. \citet{yuan_model_2007,ravikumar_model_2008} studied \cref{eq:pen-opt}, with the loss function being the \emph{negative scaled log-likelihood}, and the penalty being the sum of the absolute values of the off-diagonal entries of the precision matrix.
\end{example}

In general, \citet{negahban_unified_2009} provided a unified framework for analyzing the statistical consistency of the estimators derived by solving \cref{eq:pen-opt} with a proper choice of $\lambda$. However in practice, since $\lambda$ is unknown, one typically needs to compute the regularization path $\beta_{\lambda}$ as regularization parameter $\lambda$ varies on a grid, e.g. the \texttt{lars} \citep{lars} or the coordinate descent in \texttt{glmnet}. Such regularization path algorithms can be inefficient in solving many optimization problems. 

In this paper, we look at the following three-line iterative algorithm which, despite its simplicity, leads to a novel unified scheme of regularization paths for all cases above,
\begin{subequations}
    \label{eq:glbi-show}
    \begin{align}
        \label{eq:glbi-show-a}
        \alpha_{k+1} &= \alpha_k - \kappa \delta_k \nabla_{\alpha} \ell\left( \alpha_k, \beta_k \right),\\
        \label{eq:glbi-show-b}
        z_{k+1} &= z_k - \delta_k \nabla_{\beta} \ell \left( \alpha_k, \beta_k \right),\\
        \label{eq:glbi-show-c}
        \beta_{k+1} &= \kappa \mathcal{S}\left( z_{k+1}, 1 \right),
    \end{align}
\end{subequations}
where $z_0 = \beta_0 = 0$, $\alpha_0$ can be arbitrary and is naturally set $\arg\min_{\alpha} \ell\left( \alpha, \beta_0 \right)$, step size $\delta_k=\delta$ and $\kappa$ are parameters whose selection to be discussed later, and the shrinkage operator $\mathcal{S}(\cdot, 1)$ is defined element-wise as
$\mathcal{S}\left( z, 1 \right) = \mathrm{sign}(z)\cdot \max\left( |z| - 1, 0 \right)$. Such an algorithm is easy for parallel implementation, with linear speed-ups demonstrated in experiment \Cref{sec:exp} below. 

To see the regularization paths returned by the iteration, \Cref{fig:show-path} compared it against the \texttt{glmnet}. Such simple iterative regularization paths exhibit competitive or even better performance than the Lasso regularization paths by \texttt{glmnet} in reducing the bias and improving the accuracy (\Cref{sec:simu-logistic} for more details).

\textbf{\emph{How does this simple iteration algorithm work?}}

There are two equivalent views on algorithm \Cref{eq:glbi-show}. First of all, it can be regarded as a mirror descent algorithm (MDA) \citep{NemYud83,BecTeb03,Nemirovski12}
\begin{align*}
    & (\alpha_{k+1},\beta_{k+1}) \\
    = {} & \arg \min_z  \left\{ \left<z, \delta \nabla_{\alpha,\beta} \ell(\alpha_k,\beta_k)\right> + B_\Phi(z,(\alpha_k,\beta_k)) \right\} \\
    := {} & \prox_{\Phi}(\delta \nabla_{\alpha,\beta} \ell(\alpha_k,\beta_k))
\end{align*}
where $B_\Phi$ is the bregman divergence associated with $\Phi$, i.e.  defined by 
\begin{equation}
B_\Phi(u,v) = \Phi(u) - \Phi(v) - \left \langle \partial \Phi(v), u - v \right \rangle.
\end{equation}
Now set $\Phi(\alpha,\beta) = \|\alpha\|_2^2 / (2\kappa) + \|\beta\|_1 + \|\beta\|_2^2 / (2\kappa)$ involving a Ridge ($\ell_2$) penalty on $\alpha$ and an elastic net type ($\ell_1$ and $\ell_2$) penalty on $\beta$. Hence $\partial_\alpha \Phi(\alpha,\beta) = \alpha/\kappa$ and $\partial_\beta \Phi(\alpha,\beta)= \rho + \beta/\kappa$ where $\rho \in \partial \|\beta\|_1$. With this, the optimization in MDA leads to \Cref{eq:glbi-show-a} and
\begin{equation}
    \label{eq:md-b}
    \rho_{k+1} + \frac{1}{\kappa} \beta_{k+1} = \rho_k + \frac{1}{\kappa} \beta_k - \delta \nabla_{\beta} \ell(\alpha_k,\beta_k),\ \rho_k \in \partial \|\beta_k\|_1, 
\end{equation}
which is equivalent to \Cref{eq:glbi-show-b}. There has been extensive studies on the convergence $\ell(\alpha_k,\beta_k)- \min_{\alpha,\beta}\ell(\alpha,\beta)\leq O(k^{-r})$ ($r>0$), which are however not suitable for statistical estimate above as such convergent solutions lead to overfitting estimators. 

\begin{figure}[!htpb]
    \centering
    \includegraphics[width = 0.7\linewidth]{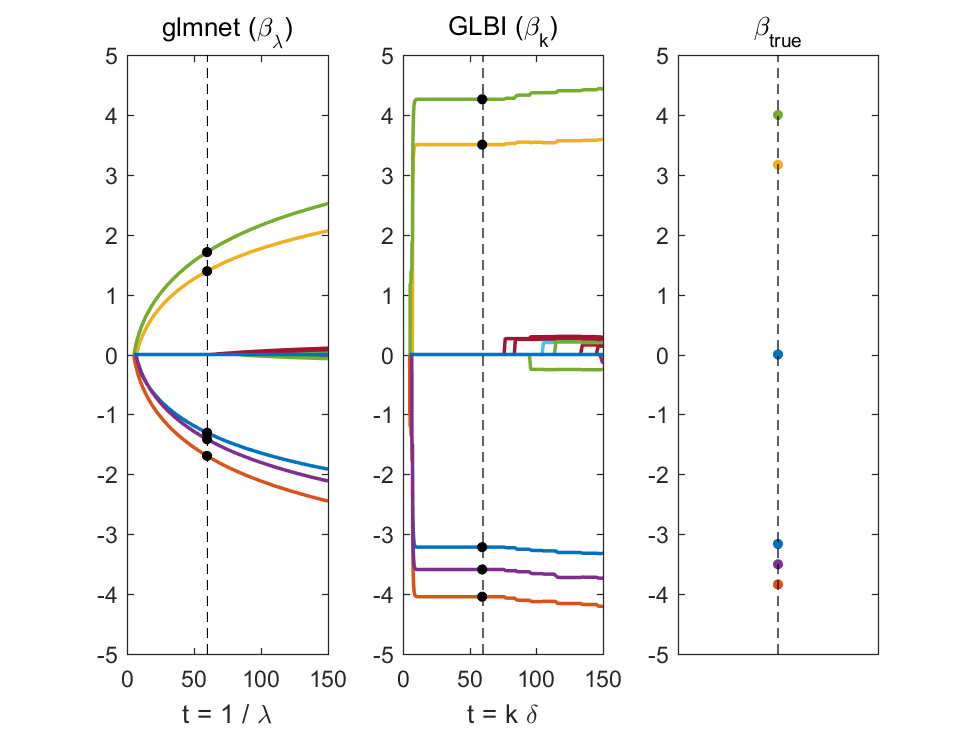} 
    \includegraphics[width = 0.9\linewidth]{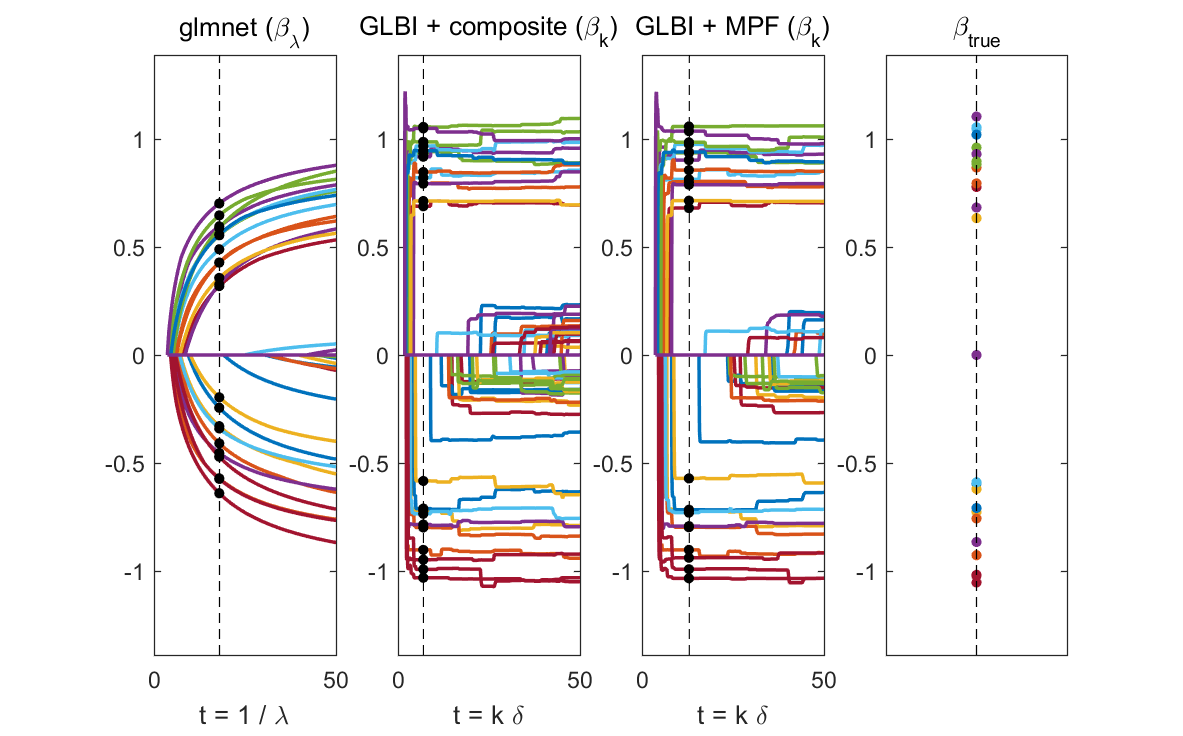}
    \caption{Top: path comparison between $\{\beta_\lambda\}\ (t=1/\lambda)$ by \texttt{glmnet} (left) and $\{\beta_k\}\ (t = k \delta)$ by GLBI (middle), for logistic models with true parameters (right). Bottom: path comparison of $\{\beta_\lambda\}\ (t=1/\lambda)$ by \texttt{glmnet} (left), $\{\beta_k\}\ (t = k \delta)$ by GLBI + composite loss (middle left), and $\{\beta_k\}\ (t = k \delta)$ by GLBI + MPF loss (middle right), for Ising models with true parameters (right). In both cases, \texttt{glmnet} selects biased estimates while GLBI finds more accurate ones.}
    \label{fig:show-path}
\end{figure}

An alternative dynamic view may lead to a deeper understanding of the regularization path. In fact for \Cref{thm:model-linear}, \cref{eq:glbi-show} reduces to the \textit{Linearized Bregman Iteration (LBI)} proposed by \citet{lbi} and analyzed by \citet{osher_sparse_2016} via its limit differential inclusions. It shows that equipped with the standard conditions as Lasso, an early stopping rule can find a point on the regularization path of \cref{eq:glbi-show} with the same sign pattern as true parameter (sign-consistency) and gives the unbiased oracle estimate, hence better than Lasso or any convex regularized estimates which are always biased. This can be generalized to our setting where \Cref{eq:md-b} is a discretization of the following dynamics
\begin{subequations}
    \label{eq:giss-show}
    \begin{align}
        \label{eq:giss-show-a}
        \dot{\alpha}(t) / \kappa &= - \nabla_{\alpha} \ell\left( \alpha(t), \beta(t) \right),\\
        \label{eq:giss-show-b}
        \dot{\rho}(t) + \dot{\beta}(t) / \kappa &= - \nabla_{\beta} \ell \left( \alpha(t), \beta(t) \right), \\
        \rho(t) & \in \partial \|\beta(t)\|_1.
    \end{align}
\end{subequations}
It is a restricted gradient flow (differential inclusion) where $\beta(t)$ has its sparse support controlled by $\rho(t)$. As $\kappa\to \infty$, it gives a sequence of estimates by minimizing $\ell$ with the sign pattern of $\beta(t)$ restricted on $\rho(t)$. Thus if an estimator $\beta(t)$ has the same sign pattern as $\beta^\star$, it must returns the \emph{unbiased oracle estimator} which is optimal. So it is natural to ask if there is a point on the path $\beta(t)$ (or $\beta_k$) which meets the sparsity pattern of true parameter $\beta^\star$. This is the \textbf{path consistency} problem to be addressed in this paper. In \Cref{sec:theory}, we shall present a theoretical framework as an answer, and \Cref{sec:exp} gives more applications, including Ising model learning for NIPS coauthorship.

Note that for \Cref{thm:model-logistic}, \cref{eq:glbi-show} reduces to the linearized Bregman iterations for logistic regression proposed by \citet{shi_new_2013} without a study of statistical consistency. A variable splitting scheme in comparison to generalized Lasso is studied in \citet{SplitLBI} which shows improved model selection consistency in some scenarios. Hence in this paper, we shall call the general form \cref{eq:glbi-show} as \emph{Generalized Linear Bregman Iterations (GLBI)}, in addition to (sparse) Mirror Descent flows.

\section{Path Consistency of GLBI}
\label{sec:theory}

Let $\theta^\star = (\alpha^\star, \beta^{\star T})^T$ denotes the true parameter, with sparse $\beta^\star$. Define $S := \supp (\beta^{\star})$ ($s := |S| \ll p$) as the index set corresponding to nonzero entries of $\beta$, and $S^c$ be its complement. Let $S_\alpha = (\alpha, S)$, and $S_\alpha = S$ when $\alpha$ drops. Let the \textit{oracle estimator} be
\begin{equation}
    \label{eq:orc-def-nonlinear}
    \theta^o = (\alpha^o, \beta^{o T})^T \in \arg\min_{\substack{\alpha, \beta\\ \beta_{S^c} = 0}} \ell\left( \alpha, \beta \right),
\end{equation}
which is an optimal estimate of $\theta^{\star}$. GLBI starts within the oracle subspace ($\{\theta = (\alpha, \beta^T)^T:\ \beta_{S^c} = 0\}$), and we are going to prove that under an Irrepresentable Condition (IRR) the dynamics will evolve in the oracle subspace with high probability before the stopping time $\bar{k}$, approaching the oracle estimator exponentially fast due to the Restricted Strong Convexity (RSC). Thus if all the true parameters are large enough, then we can identify their sign pattern correctly; otherwise, such a stopping time still finds an estimator (possibly with false positives) at minimax optimal $\ell_2$ error rate. Furthermore, if the algorithm continues beyond the stopping time, it might escape the oracle subspace and eventually reach overfitted estimates. Such a picture is illustrated in \Cref{fig:saddle}.

\begin{figure}[h]
    \centering
    \includegraphics[width = 0.75\linewidth]{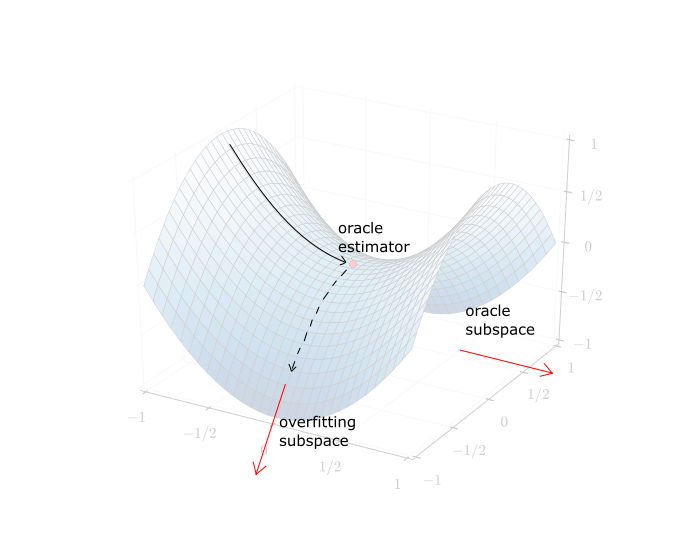}
    \caption{An illustration of global dynamics of the algorithm in this paper.}
    \label{fig:saddle}
\end{figure}

Hence, it is helpful to define the following \textbf{\emph{oracle dynamics}}: 
\begin{subequations}
    \label{eq:glbi-orc-show}
    \begin{align}
        \label{eq:glbi-orc-show-a}
        \alpha_{k+1}' &= \alpha_k' - \kappa \delta \nabla_{\alpha} \ell\left( \alpha_k', \beta_k' \right),\\
        \label{eq:glbi-orc-show-b}
        z_{k+1,S}' &= z_{k,S}' - \delta \nabla_S \ell \left( \alpha_k', \beta_k' \right),\\
        \label{eq:glbi-orc-show-c}
        \beta_{k+1,S}' &= \kappa \mathcal{S}\left( z_{k+1,S}', 1 \right),
    \end{align}
\end{subequations}
with $z_{k,S^c}' = \beta_{k,S^c}' \equiv 0_{p-s}$. Let $\theta_k' := (\alpha_k', \beta_k'^T)^T$.

\subsection{Basic Assumptions}

Now we are ready to state the general assumptions that can be reduced to existing ones. We write
\begin{gather*}
    \ell(\theta) := \ell\left( \alpha, \beta \right),\\
    \bar{H}(\theta) := \bar{H}(\alpha, \beta) := \int_0^1 \nabla^2 \ell\left( \theta^{\star} + \mu \left( \theta - \theta^{\star} \right) \right) \mathrm{d}\mu,\\
    \bar{H}^o(\theta) := \bar{H}^o(\alpha, \beta) := \int_0^1 \nabla^2 \ell\left( \theta^o + \mu \left( \theta - \theta^o \right) \right) \mathrm{d}\mu.
\end{gather*}

\begin{assumption}[Restricted Strong Convexity (RSC)]
    \label{thm:rsc-glbi}
    There exist $\lambda, \Lambda > 0$, such that for any $k\ge 0$, and for any $\theta$ on the line segment between $\theta_k'$ and $\theta^o$, or on the line segment between $\theta^{\star}$ and $\theta^o$,
    \begin{equation*}
        \lambda I \preceq \nabla_{S_\alpha, S_\alpha}^2 \ell(\theta) \preceq \Lambda I,
    \end{equation*}
\end{assumption}

\begin{assumption}[Irrepresentable Condition (IRR)]
    \label{thm:irr-glbi}
    There exist $\eta \in (0,1]$ and $C>0$ such that
    \begin{gather*}
        \sup_{K\ge 1} \left\| \sum_{k=0}^{K-1} \overline{\mathrm{irr}}_k \left( \begin{pmatrix} \alpha_{k+1}' / \kappa \\ z_{k+1,S}' \end{pmatrix} - \begin{pmatrix} \alpha_k' / \kappa \\ z_{k,S}' \end{pmatrix} \right) \right\|_{\infty} < 1 - \frac{\eta}{2},\\
        \sup_{k\ge 0} \left\| \overline{\mathrm{irr}}_k \right\|_{\infty} \le C,
    \end{gather*}
    where
    \begin{equation*}
        \overline{\mathrm{irr}}_k := \bar{H}_{S^c, S_{\alpha}}\left( \theta_k' \right) \cdot \bar{H}_{S_{\alpha}, S_{\alpha}} \left( \theta_k' \right)^{-1}.
    \end{equation*}
\end{assumption}

\begin{remark}
    For sparse linear regression problem (\Cref{thm:model-linear}) with no intercept ($\alpha$ drops), \Cref{thm:rsc-glbi} reduces to $\lambda I \preceq X_S^{*} X_S \preceq \Lambda I$. The lower bound is exactly the RSC proposed in linear problems. Although the upper bound is not needed in linear problems, it arises in the analysis for logistic problem by \citet{ravikumar_high-dimensional_2010} (see (A1) in Section 3.1 in their paper). Besides, $\overline{\mathrm{irr}}_k$ is constant and \Cref{thm:irr-glbi} reduces to
    \begin{gather*}
        \sup_{K\ge 1}\left\| X_{S^c}^{*} X_S \left( X_S^{*} X_S \right)^{-1} z_{K,S}' \right\|_{\infty} < 1 - \frac{\eta}{2},\\
        \left\| X_{S^c}^{*} X_S \left( X_S^{*} X_S \right)^{-1} \right\|_{\infty} \le C,
    \end{gather*}
    which is true with high probability, as long as the classical Irrepresentable Condition \citep{Zhao_model_2006} $\| X_{S^c}^{*} X_S (X_S^{*} X_S)^{-1} \|_{\infty} \le 1 - \eta$ holds along with $C \ge 1$ and $\kappa$ is large, since by \cref{eq:glbi-orc-cstc-l2},
    \begin{align*}
        \left\| z_{K,S}' \right\|_{\infty} &\le \left\| z_{K,S}' - \mathcal{S}\left( z_{K,S}', 1 \right) \right\|_{\infty} + \left\| \mathcal{S}\left( z_{K,S}', 1 \right) \right\|_{\infty}\\
        & \le 1 + \left\| \beta_{K,S}' \right\|_{\infty} / \kappa\\
        & \le 1 + \left( \left\| \beta_{K,S}' - \beta_S^o \right\|_2 + \left\| \beta_S^o \right\|_2 \right) / \kappa\\
        & \le 1 + \left( \sqrt{\Lambda / \lambda} + 1 \right) \left\| \beta_S^o \right\|_2 / \kappa\\
        &< (1 - \eta / 2) / (1 - \eta).
    \end{align*}
\end{remark}

\begin{remark}
    For sparse logistic regression problem (\Cref{thm:model-logistic}), we have the following proposition stating that \Cref{thm:rsc-glbi} and \ref{thm:irr-glbi} hold with high probability under some natural setting, along with condition \cref{eq:rsc-irr-logistic-cond}. See its proof in \Cref{sec:rsc-irr-logistic}. A slightly weaker condition compared to \cref{eq:rsc-irr-logistic-cond-b}, and a same version of \cref{eq:rsc-irr-logistic-cond-c}, can be found in \citet{ravikumar_high-dimensional_2010}, where $x^{(i)}$ are discrete.
\end{remark}

\begin{proposition}
    \label{thm:rsc-irr-logistic}
    In \Cref{thm:model-logistic}, we suppose $x^{(i)}$'s are i.i.d. drawn from some $X \sim N(0, \Sigma)$, where $\Sigma_{j,j}\le 1\ (1\le j\le p)$. Then there exist constants $C_0, C_1, C_2 > 0$, such that \Cref{thm:rsc-glbi} and \ref{thm:irr-glbi} hold with probability not less than $1 - C_0/p$, as long as $\kappa$ is sufficiently large and
    \begin{subequations}
        \label{eq:rsc-irr-logistic-cond}
        \begin{gather}
            \label{eq:rsc-irr-logistic-cond-a}
            \left\| \Sigma_{S^c, S} \Sigma_{S,S}^{-1} \right\|_{\infty} \le 1 - \eta,\\
            \label{eq:rsc-irr-logistic-cond-b}
            n / (\log n)^2 \ge C_1 s^4 \log p,\\
            \label{eq:rsc-irr-logistic-cond-c}
            \beta_{\min}^\star := \min_{j\in S} \left| \beta_j^\star \right| \ge C_2 \sqrt{(s\log p)/n}.
        \end{gather}
    \end{subequations}
\end{proposition}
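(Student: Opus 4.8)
The plan is to reduce both \Cref{thm:rsc-glbi} and \Cref{thm:irr-glbi} to population statements about the Fisher information $Q(\theta):=\E\big[\psi(\langle\theta,\tilde x\rangle)\,\tilde x\tilde x^T\big]$, where $\tilde x=(1,x^T)^T$ and $\psi(u)=e^{u}/(1+e^u)^2\in(0,\tfrac14]$ is the logistic weight, together with uniform concentration of the empirical Hessian $\nabla^2\ell(\theta)=\tfrac1n\sum_i\psi(\langle\theta,\tilde x^{(i)}\rangle)\,\tilde x^{(i)}\tilde x^{(i)T}$ around $Q(\theta)$ (this is the dynamic analogue of the Fisher-information argument of \citet{ravikumar_high-dimensional_2010}). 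First I would record, on the event that $\|\nabla\ell(\theta^\star)\|_\infty=O(\sqrt{(\log p)/n})$ and the empirical second moment of $\tilde x_{S_\alpha}$ is within $o(1)$ of $\tilde\Sigma_{S_\alpha,S_\alpha}:=\E[\tilde x_{S_\alpha}\tilde x_{S_\alpha}^T]=\mathrm{diag}(1,\Sigma_{S,S})$, that $\|\theta^o-\theta^\star\|_2=O(\sqrt{(s\log p)/n})$ and---invoking \cref{eq:glbi-orc-cstc-l2}---that the whole oracle trajectory $\{\theta_k'\}$ stays in a deterministic compact set $\mathcal K$ (a ball of radius $O(\|\beta^\star\|_2+|\alpha^\star|+1)$, since the oracle loss is coercive on the oracle subspace) that also contains every line segment appearing in \Cref{thm:rsc-glbi}. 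On $\mathcal K$ the Gaussian predictor $\langle\theta,\tilde x\rangle$ has variance $O(\|\beta^\star\|_2^2+1)$, so truncating at a constant level $B=B(\|\beta^\star\|_2)$ yields the population RSC $\psi(B)\big(\tilde\Sigma_{S_\alpha,S_\alpha}-o(1)I\big)\preceq Q(\theta)_{S_\alpha,S_\alpha}\preceq\tfrac14\tilde\Sigma_{S_\alpha,S_\alpha}$ uniformly over $\theta\in\mathcal K$, i.e.\ $\lambda_0\gtrsim\psi(B)\min\{1,\lambda_{\min}(\Sigma_{S,S})\}$ and $\Lambda_0\lesssim\max\{1,\lambda_{\max}(\Sigma_{S,S})\}$. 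A sub-exponential (matrix Bernstein) concentration for the $(s{+}1)$-dimensional block $\nabla^2_{S_\alpha,S_\alpha}\ell(\theta)$---again after truncating the Gaussian tails, which is where the factor $(\log n)^2$ in \eqref{eq:rsc-irr-logistic-cond-b} enters---has deviation $O(\sqrt{(s\log p)/n}\cdot\mathrm{polylog}(n))$ uniformly over a net of $\mathcal K$, then over $\mathcal K$ by Lipschitzness of $\theta\mapsto\nabla^2\ell(\theta)$; under \eqref{eq:rsc-irr-logistic-cond-b} this is $o(1)$, so \Cref{thm:rsc-glbi} holds with, say, $\lambda=\lambda_0/2$ and $\Lambda=2\Lambda_0$.

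For \Cref{thm:irr-glbi} the key structural fact---and the reason \eqref{eq:rsc-irr-logistic-cond-a} is phrased in terms of $\Sigma$ rather than of the Fisher information---is a Gaussian Stein identity. Since $\theta_k'$, and every point of the averaging segment from $\theta^\star$ to $\theta_k'$, has its $\beta$-block supported on $S$, integration by parts gives for $j\in S^c$ the identity $Q(\theta)_{j,S_\alpha}=\Sigma_{j,S}\,A(\theta)$ with $A(\theta)$ an $s\times(s{+}1)$ matrix depending only on $\beta_S$, on moments of $\psi$ and $\psi'$ evaluated at $\langle\theta,\tilde x\rangle$, and on $\Sigma_{S,\cdot}$, but not on $j$; hence $Q(\theta)_{S^c,S_\alpha}=\Sigma_{S^c,S}\Sigma_{S,S}^{-1}Q(\theta)_{S,S_\alpha}$, an identity preserved under averaging $\theta$ over a segment because $\Sigma_{S^c,S}\Sigma_{S,S}^{-1}$ is a fixed matrix. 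Writing the empirical averaged Hessian as $\bar H_{S^c,S_\alpha}(\theta_k')=\Sigma_{S^c,S}\Sigma_{S,S}^{-1}\bar H_{S,S_\alpha}(\theta_k')+R_k$ with $R_k$ the (mean-zero) empirical fluctuation of this identity, and using the plain linear-algebra fact $\bar H_{S,S_\alpha}(\theta_k')\,\bar H_{S_\alpha,S_\alpha}(\theta_k')^{-1}=(0\mid I_s)$ (the $S$-rows of $I_{s+1}$), I obtain $\overline{\mathrm{irr}}_k=(0\mid\Sigma_{S^c,S}\Sigma_{S,S}^{-1})+R_k\,\bar H_{S_\alpha,S_\alpha}(\theta_k')^{-1}$. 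Consequently, writing $w_k:=\big(\alpha_k'/\kappa,\ z_{k,S}'\big)$ so that $w_{k+1}-w_k=-\delta\nabla_{S_\alpha}\ell(\theta_k')$ and $\sum_{k<K}(w_{k+1}-w_k)=w_K-w_0$ has $S$-part $z_{K,S}'$, the sum in \Cref{thm:irr-glbi} telescopes to $\Sigma_{S^c,S}\Sigma_{S,S}^{-1}z_{K,S}'$ plus a residual $\sum_{k=0}^{K-1}R_k\,\bar H_{S_\alpha,S_\alpha}(\theta_k')^{-1}(w_{k+1}-w_k)$ (the leading $0$ block kills the $\alpha$-component, so no largeness of $\kappa$ is needed here). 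The first term has $\ell_\infty$-norm at most $(1-\eta)\|z_{K,S}'\|_\infty$, and by exactly the chain of inequalities in the remark following \Cref{thm:irr-glbi} (using the RSC just established and $\|\beta_S^o\|_2\le\|\beta^\star\|_2+O(\sqrt{(s\log p)/n})$) one has $\|z_{K,S}'\|_\infty\le 1+(\sqrt{\Lambda/\lambda}+1)\|\beta_S^o\|_2/\kappa<(1-\eta/2)/(1-\eta)$ once $\kappa$ is sufficiently large, hence $<1-\eta/2$ with room to spare; the same estimate gives $\|\overline{\mathrm{irr}}_k\|_\infty\le 1=:C$.

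The remaining, and main, difficulty is to control the residual, i.e.\ to show $\sup_{K\ge1}\big\|\sum_{k<K}R_k\,\bar H_{S_\alpha,S_\alpha}(\theta_k')^{-1}(w_{k+1}-w_k)\big\|_\infty=o(1)$ under \eqref{eq:rsc-irr-logistic-cond-b}, uniformly along the data-dependent trajectory. I would (i) bound $\|R_k\|_{\infty\to\infty}=O(s\sqrt{(\log p)/n}\cdot\mathrm{polylog}(n))$ uniformly over $\theta_k'\in\mathcal K$ by truncating Gaussian tails and union-bounding over a net of $\mathcal K$ and over the $(p-s)(s{+}1)$ entries---note the weights $\Sigma_{S^c,S}\Sigma_{S,S}^{-1}$ implicit in $R_k$ have absolute row sums $\le 1-\eta$, so they do not inflate the per-entry bound; (ii) rewrite $\bar H_{S_\alpha,S_\alpha}(\theta_k')^{-1}(w_{k+1}-w_k)=-\delta\big[(\theta_k'-\theta^\star)_{S_\alpha}+\bar H_{S_\alpha,S_\alpha}(\theta_k')^{-1}\nabla_{S_\alpha}\ell(\theta^\star)\big]$ using $w_{k+1}-w_k=-\delta\nabla_{S_\alpha}\ell(\theta_k')$ together with $\bar H_{S_\alpha,S_\alpha}(\theta_k')(\theta_k'-\theta^\star)_{S_\alpha}=\nabla_{S_\alpha}\ell(\theta_k')-\nabla_{S_\alpha}\ell(\theta^\star)$ (the fundamental-theorem identity, valid since $(\theta_k'-\theta^\star)_{S^c}=0$), so the residual becomes $-\delta\sum_k R_k(\theta_k'-\theta^\star)_{S_\alpha}$ plus a strictly lower-order piece carrying the factor $\|\nabla_{S_\alpha}\ell(\theta^\star)\|_\infty=O(\sqrt{(\log p)/n})$; (iii) bound $\delta\sum_{k<K}1=K\delta$ at the early-stopping time by the $O(1/\lambda)$ coming from the companion convergence analysis, so the dominant contribution is $O(1/\lambda)\cdot\sup_k\|R_k\|_{\infty\to\infty}\cdot\sup_k\|\theta_k'-\theta^\star\|_2$. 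Assembling (i)--(iii) with the a priori bound on $\sup_k\|\theta_k'-\theta^\star\|_2$ and absorbing polylogarithmic factors gives the stated requirement $n/(\log n)^2\gtrsim s^4\log p$, and a final union bound places all of the above on a single event of probability at least $1-C_0/p$. The two points that need the most care are keeping the powers of $s$ tight through the matrix $\infty\to\infty$ norms, and making every concentration statement hold uniformly over the random iterates $\{\theta_k'\}$ rather than at a single fixed $\theta$---both handled by the net-plus-Lipschitz argument on the fixed compact set $\mathcal K$.
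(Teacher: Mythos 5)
Your overall skeleton matches the paper's: you reduce to a Gaussian decoupling identity for the population Hessian (the paper's Lemma~\ref{thm:lemma-irr-const}, which is indeed an independence/Stein-type fact for the Gaussian design), you decompose $\overline{\mathrm{irr}}_k$ into the constant part $\mathrm{irr}_0=(0\mid\Sigma_{S^c,S}\Sigma_{S,S}^{-1})$ plus a mean-zero fluctuation, you telescope the $\mathrm{irr}_0$ term, and you split the fluctuation term via $\bar H_{S_\alpha,S_\alpha}(\theta_k')(\theta_k'-\theta^\star)_{S_\alpha}=\nabla_{S_\alpha}\ell(\theta_k')-\nabla_{S_\alpha}\ell(\theta^\star)$ into a $(\theta_k'-\theta^\star)$-piece and a $\nabla\ell(\theta^\star)$-piece --- exactly the paper's $R_1,R_2$ decomposition. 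The concentration-via-net-on-a-compact-set strategy is also the paper's (Lemma~\ref{thm:lemma-ULLN}). However, I see two genuine gaps.

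\textbf{(a) Circular boundedness argument.} You invoke \cref{eq:glbi-orc-cstc-l2} (equivalently \cref{eq:glbiss-orc-cstc-l2}) to conclude that the oracle trajectory $\{\theta_k'\}$ stays in a fixed compact set $\mathcal K$, and you then establish RSC on $\mathcal K$. But \cref{eq:glbi-orc-cstc-l2} is a \emph{consequence} of RSC (it is proved under Assumption~\ref{thm:rsc-glbi}), so this is circular. The paper sidesteps this by choosing the compact set to be a sublevel set of the \emph{population} loss, $D=\{\theta\in\Theta_S:\ L(\theta)\le 1\}$, and noting that (i) $\ell(\theta_k')$ is non-increasing along the oracle dynamics unconditionally (no RSC needed), hence the trajectory lies in $\hat D=\{\theta\in\Theta_S:\ \ell(\theta)\le\log 2\}$; and (ii) a uniform law of large numbers for the loss gives $\hat D\subseteq D$ with high probability. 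If you want to keep your version, you would need to replace the appeal to \cref{eq:glbi-orc-cstc-l2} by either this sublevel-set argument or an explicit bootstrap/induction that proves RSC on a ball and simultaneously shows the iterates stay in it --- as written, the step is unjustified.

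\textbf{(b) The residual bound in step (iii) does not close.} You bound the residual by $K\delta\cdot\sup_k\|R_k\|_{\infty\to\infty}\cdot\sup_k\|\theta_k'-\theta^\star\|_2$ and claim $K\delta=O(1/\lambda)$. Two problems. First, the relevant horizon is not $O(1/\lambda)$: the IRR supremum is over all $K\ge 1$, and the dynamics becomes stationary (so the sum stops accumulating) only after roughly $T_0\asymp\sqrt{n/\log p}$, which is also the order of $\bar k\delta$ in Theorem~\ref{thm:glbi-cstc}. Second, even with the correct $K\delta\asymp\sqrt{n/\log p}$, the crude product is
$\sqrt{n/\log p}\cdot O(s\sqrt{(\log p)/n})\cdot O(1)=O(s)$,
which does \emph{not} vanish. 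What saves the day in the paper is precisely the $\ell_2$ decay $\|\theta_k'-\theta^o\|_2\lesssim \sqrt{s}/(\lambda'k\delta)$ from \cref{eq:glbi-orc-cstc-l2}: instead of $\sup_k\|\theta_k'-\theta^\star\|_2$, one integrates $\int_1^{T_0}\mathrm{d}t/t=\log T_0\asymp\log n$, and the $(\log n)^2$ built into \eqref{eq:rsc-irr-logistic-cond-b} (which makes $\|\overline{\mathrm{irr}}_k-\mathrm{irr}_0\|_\infty\lesssim 1/(\sqrt s\,\log n)$) is there exactly to cancel this logarithm. The condition \eqref{eq:rsc-irr-logistic-cond-c} on $\beta_{\min}^\star$ is also load-bearing here: it forces sign consistency and hence stationarity by time $\asymp T_0$, which is what makes the cutoff of the integral legitimate in the first place. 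Your write-up mentions $\beta_{\min}^\star$ only in passing and does not use the $1/t$ decay in the residual estimate, so as stated the argument for \cref{eq:irr-glbiss-a} / the first display in Assumption~\ref{thm:irr-glbi} is incomplete.

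Minor remarks: working directly with discrete GLBI is fine (the paper proves the continuous GLBISS case and remarks the discrete case follows by modification); your observation that the leading $0$ block in $\mathrm{irr}_0$ kills the $\alpha/\kappa$ component is correct and a nice touch; and your bound on the telescoped main term via $\|z_{K,S}'\|_\infty\le 1+O(1/\kappa)$ is sound, but note that to leave room $\approx\eta/2$ for the residual you actually need the $O(1/\kappa)$ correction to be $o(\eta)$, not merely $<(1-\eta/2)/(1-\eta)-1$.
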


\subsection{Path Consistency Theorem}

\begin{theorem}[Consistency of GLBI]
    \label{thm:glbi-cstc}
    Under \Cref{thm:rsc-glbi} and \ref{thm:irr-glbi}, suppose $\kappa\ge 2 \|\theta_{S_\alpha}^o\|_2$, and $\bar{k}\in \mathbb{N}$ such that
    \begin{equation}
        \label{eq:glbi-kdelta-upper}
        \left( \bar{k} - 1 \right) \delta < \frac{\eta}{2(C+1)} \cdot \frac{1}{\left\| \nabla \ell\left( \alpha^{\star}, \beta^{\star} \right) \right\|_{\infty}} \le \bar{k} \delta.
    \end{equation}
    Define $\lambda'$ as in \cref{eq:lambda-prime}. We have the following properties.

    \textit{No-false-positive}: For all $0\le k \le \bar{k}$, the solution path of GLBI has no false-positive, i.e. $\beta_{k, S^c} = 0$.

    \textit{Sign consistency}: If $\bar{k} \ge 5$ and
    \begin{multline}
        \label{eq:beta-min-lower}
        \beta_{\min}^{\star} := \min_{j\in S} \left| \beta_j^{\star} \right| \ge \max \Bigg( 2 \left\| \beta_S^o - \beta_S^{\star} \right\|_{\infty}, \Bigg.\\
        \Bigg. \frac{(8\log s + 18) (C+1)}{\lambda' \eta \left( 1 - 4 / \bar{k} \right)} \left\| \nabla \ell\left( \alpha^{\star}, \beta^{\star} \right) \right\|_{\infty} \Bigg),
    \end{multline}
    then $\mathrm{sign}(\beta_{\bar{k}}) = \mathrm{sign}(\beta^{\star})$.

    \textit{$\ell_2$ consistency}: The $\ell_2$ error
    \begin{equation*}
        \left\| \begin{pmatrix} \alpha_{\bar{k}} - \alpha^{\star}\\ \beta_{\bar{k}} - \beta^{\star} \end{pmatrix} \right\|_2 \le \frac{20 (C+1) \sqrt{s}}{\lambda' \eta} \left\| \nabla \ell\left( \alpha^{\star}, \beta^{\star} \right) \right\|_{\infty}.
    \end{equation*}
\end{theorem}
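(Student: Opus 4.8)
The plan is to couple the true GLBI iterates $\theta_k$ to the \emph{oracle dynamics} $\theta_k'$ of \cref{eq:glbi-orc-show}, which by construction never leave the oracle subspace, to show that the two coincide up to the stopping time $\bar k$, and then to read off all three conclusions from the convergence of $\theta_k'$ to $\theta^o$ and the closeness of $\theta^o$ to $\theta^\star$. I would first study \cref{eq:glbi-orc-show} on its own: restricted to the coordinates $S_\alpha$ it is the (oracle version of the) linearized Bregman iteration for $\min_{\beta_{S^c}=0}\ell$, whose Hessian lies between $\lambda I$ and $\Lambda I$ by \cref{thm:rsc-glbi} and whose only fixed point solves $\nabla_{S_\alpha}\ell=0$, i.e.\ is $\theta^o$. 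Running the Bregman-divergence Lyapunov argument of \citet{osher_sparse_2016}, but tracking the dual variable $z_{k,S}'$ and the primal variable $\beta_{k,S}'$ simultaneously --- this is where $\kappa\ge 2\|\theta_{S_\alpha}^o\|_2$ is used, to keep both bounded, since $\Phi$ mixes an $\ell_1$ subgradient with an elastic-net $\ell_2$ term --- yields a linear-convergence estimate: $\|\theta_k'-\theta^o\|_2$ decays geometrically in $k$ towards a residual of order $(C+1)\|\nabla\ell(\theta^\star)\|_\infty/\lambda'$, with an $\ell_\infty$ counterpart carrying a $\log s$ factor. This is where $\lambda'$ of \cref{eq:lambda-prime} gets defined, and where the hypothesis $\bar k\ge 5$ (and the factors $1-4/\bar k$ and $8\log s+18$ in \cref{eq:beta-min-lower}) enter; for admissible step sizes $\bar k$ is large enough that the geometric transient is dominated by the residual.

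Next I would prove by induction on $k$ that $\theta_k=\theta_k'$ for $0\le k\le\bar k$. Once $\theta_{k-1}=\theta_{k-1}'$, the $\alpha$- and $S$-blocks of \cref{eq:glbi-show} reproduce those of \cref{eq:glbi-orc-show}, so the only thing to check is $\beta_{k,S^c}=0$, i.e.\ $\|z_{k,S^c}\|_\infty\le 1$; the base case is trivial since $z_0=0$. Summing \cref{eq:glbi-show-b} and using the induction hypothesis, $z_{k,S^c}=-\delta\sum_{j<k}\nabla_{S^c}\ell(\theta_j')$. Using $\nabla\ell(\theta_j')-\nabla\ell(\theta^\star)=\bar H(\theta_j')(\theta_j'-\theta^\star)$ and $\beta_{j,S^c}'=\beta_{S^c}^\star=0$, so that only the $S_\alpha$-block of the displacement contributes,
\[
\nabla_{S^c}\ell(\theta_j')=\nabla_{S^c}\ell(\theta^\star)+\overline{\mathrm{irr}}_j\bigl(\nabla_{S_\alpha}\ell(\theta_j')-\nabla_{S_\alpha}\ell(\theta^\star)\bigr);
\]
inserting $\nabla_{S_\alpha}\ell(\theta_j')=-\delta^{-1}\bigl((\alpha_{j+1}'/\kappa,z_{j+1,S}')-(\alpha_j'/\kappa,z_{j,S}')\bigr)$ from \cref{eq:glbi-orc-show} rewrites $z_{k,S^c}$ as exactly the telescoped quantity bounded in \cref{thm:irr-glbi}, plus $-\delta k\,\nabla_{S^c}\ell(\theta^\star)$ and $-\delta\sum_{j<k}\overline{\mathrm{irr}}_j\nabla_{S_\alpha}\ell(\theta^\star)$. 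The first piece is $<1-\eta/2$ by \cref{thm:irr-glbi}; by $\|\overline{\mathrm{irr}}_j\|_\infty\le C$ and the choice \cref{eq:glbi-kdelta-upper} of $\bar k$, the last two pieces contribute at most $\delta k(C+1)\|\nabla\ell(\theta^\star)\|_\infty\le\eta/2$ for $k\le\bar k$. Hence $\|z_{k,S^c}\|_\infty\le 1$, closing the induction and proving the no-false-positive claim.

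With $\theta_{\bar k}=\theta_{\bar k}'$ in hand, the two remaining claims follow by the triangle inequality through $\theta^o$. For $\ell_2$ consistency, $\|\theta_{\bar k}-\theta^\star\|_2\le\|\theta_{\bar k}'-\theta^o\|_2+\|\theta^o-\theta^\star\|_2$: the first term is bounded by the estimate of the first step at $k=\bar k$, and the second by the oracle first-order condition $\nabla_{S_\alpha}\ell(\theta^o)=0$ together with the RSC on the segment $\theta^\star$--$\theta^o$ (the second clause of \cref{thm:rsc-glbi}), which gives $\|\theta^o-\theta^\star\|_2\le\sqrt s\,\|\nabla\ell(\theta^\star)\|_\infty/\lambda$; collecting constants yields the stated $20(C+1)\sqrt s/(\lambda'\eta)$. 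For sign consistency I work coordinatewise: for $j\in S$, $|\beta_{\bar k,j}-\beta_j^\star|\le\|\beta_{\bar k,S}'-\beta_S^o\|_\infty+\|\beta_S^o-\beta_S^\star\|_\infty$, where the $\ell_\infty$ oracle bound together with the second branch of \cref{eq:beta-min-lower} makes the first summand $<\beta_{\min}^\star/2$ and the first branch makes the second $\le\beta_{\min}^\star/2$; hence $|\beta_{\bar k,j}-\beta_j^\star|<|\beta_j^\star|$, and combined with $\beta_{\bar k,S^c}=0$ this gives $\mathrm{sign}(\beta_{\bar k})=\mathrm{sign}(\beta^\star)$.

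The main obstacle is the second step. The IRR assumption \cref{thm:irr-glbi} is phrased entirely through the \emph{oracle} trajectory $\{\theta_k'\}$, which is precisely what lets the argument decouple, but it forces the induction to be organised so carefully that one is entitled to invoke it at every step; one then has to balance the bounded cumulative IRR sum against the perturbation $\delta k(C+1)\|\nabla\ell(\theta^\star)\|_\infty$, which grows linearly in $k$, so that their total stays $\le 1$ for $k$ all the way up to $\bar k$ --- which is exactly what the stopping rule \cref{eq:glbi-kdelta-upper} is calibrated to guarantee. A lesser difficulty is deriving the linear-convergence estimate of the first step with explicit, $\kappa$-dependent constants, since the Bregman scheme couples an $\ell_1$ subgradient with an $\ell_2$ term and its primal and dual iterates must be controlled together.
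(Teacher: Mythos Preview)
Your three-step plan --- analyze the oracle dynamics \cref{eq:glbi-orc-show}, couple $\theta_k$ to $\theta_k'$ by bounding $\|z_{k,S^c}\|_\infty$ via the IRR telescoping identity, then triangle through $\theta^o$ --- is exactly the paper's architecture. Your decomposition of $z_{k,S^c}$ into the IRR-controlled telescoping sum plus the two terms bounded by $k\delta(C+1)\|\nabla\ell(\theta^\star)\|_\infty$ is precisely what the paper does (it is the discrete analogue of its Lemma~\ref{thm:glbiss-nfp}), and your handling of the sign and $\ell_2$ claims via $\theta^o$ matches the paper's proof of the continuous Theorem~\ref{thm:glbiss-cstc}.

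The one place your sketch is off is the first step. The oracle dynamics does \emph{not} ``decay geometrically towards a residual'': its unique fixed point is $\theta^o$ itself (since $\nabla_{S_\alpha}\ell(\theta^o)=0$), and it converges there with no residual; the gap to $\theta^\star$ enters only afterwards, through $\|\theta^o-\theta^\star\|_2\le\sqrt{s}\,\|\nabla\ell(\theta^\star)\|_\infty/\lambda$. What the paper actually proves is a discrete \emph{generalized Bihari inequality} $\Psi_{k+1}-\Psi_k\le -\delta\lambda' F^{-1}(\Psi_k)$ for the potential $\Psi_k = D^{\rho_{k,S}'}(\beta_S^o,\beta_{k,S}') + d_k^2/(2\kappa)$, where $F$ is \emph{piecewise} (affine/square-root) across three regimes determined by $\beta_{\min}^o$ and $s$. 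Integrating that piecewise relation --- not a linear recursion --- is what yields both the $d_k\le (8\sqrt{s}+2d_0/\kappa)/(\lambda' k\delta)$ bound feeding the $\ell_2$ claim and the threshold time $\tau_\infty'(\mu)$ with its $(2\log s+4+d_0/\kappa)/(\lambda'\beta_{\min}^o)+4\delta$ term. That trailing $+4\delta$ is exactly where the factor $1-4/\bar k$ and the hypothesis $\bar k\ge 5$ come from, and the passage through the three regimes of $F$ is where the $\log s$ arises. A single-regime geometric Lyapunov argument will not reproduce these constants; you need the piecewise $F$ of Lemmas~\ref{thm:glbi-orc-gbi}--\ref{thm:glbi-orc-cstc}. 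Otherwise your plan is sound.
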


The proof of \Cref{thm:glbi-cstc} is collected in \Cref{sec:proof-glbi-cstc}, which largely follows the analysis of differential inclusion \Cref{eq:glbi-show}, given in \Cref{sec:glbiss-cstc}, as its discretization. 

\begin{remark}
    For sparse linear regression problem (\Cref{thm:model-linear}), with high probability we have
    \begin{equation*}
        \left\| \nabla \ell\left( \alpha^{\star}, \beta^{\star} \right) \right\|_{\infty} \lesssim \sqrt{\frac{\log p}{n}},\ \left\| \beta_S^o - \beta_S^{\star} \right\|_{\infty} \lesssim \sqrt{\frac{\log s}{n}}.
    \end{equation*}
    Hence pick $\bar{k} \delta \sim \sqrt{n / \log p}$ satisfying \cref{eq:glbi-kdelta-upper}, by \Cref{thm:glbi-cstc} the sign consistency is guaranteed at $\bar{k}$ if
    \begin{equation*}
        \beta_{\min}^{\star} \gtrsim (\log s) \sqrt{(\log p)/n}.
    \end{equation*}
    The $\ell_2$ error bound reaches the minimax optimal rate:
    \begin{equation*}
        \left\| \begin{pmatrix} \alpha_{\bar{k}} - \alpha^o\\ \beta_{\bar{k}} - \beta^o \end{pmatrix} \right\|_2 \lesssim \sqrt{\frac{s\log p}{n}}.
    \end{equation*}
\end{remark}

\begin{remark}
    For sparse logistic regression problem (\Cref{thm:model-logistic}), with high probability we have
    \begin{align*}
        \left\| \nabla \ell\left( \alpha^{\star}, \beta^{\star} \right) \right\|_{\infty} & \lesssim \sqrt{(\log p)/n},\\
        \left\| \beta_S^o - \beta_S^{\star} \right\|_{\infty} & \lesssim \left\| \beta_S^o - \beta_S^{\star} \right\|_2 \lesssim \sqrt{s} \left\| \nabla \ell\left( \alpha^{\star}, \beta^{\star} \right) \right\|_{\infty}\\
        & \lesssim \sqrt{(s \log p)/n}.
    \end{align*}
    Hence the sign consistency is guaranteed at some $\bar{k} \sim \sqrt{n / \log p}$ if
    \begin{equation*}
        \beta_{\min}^{\star} \gtrsim \sqrt{(s \log p)/n}
    \end{equation*}
    (meeting Condition (19) in \citet{ravikumar_high-dimensional_2010}). The $\ell_2$ error rate $\lesssim \sqrt{(s\log p) / n}$ is minimax optimal.
\end{remark}

\section{Experiments}
\label{sec:exp}


As for the setting of algorithm parameters: $\kappa$ should be large, and then $\delta \sim 1 / (\kappa \Lambda)$ is automatically calculated based on $\kappa$ (as long as $\kappa\delta \Lambda < 2$, such that $\lambda'$ is positive in \cref{eq:lambda-prime}). In practice, a small $\delta$ can prevent the iterations from oscillations.

\subsection{Efficiency of Parallel Computing}
\label{sec:parallel}

\citet{osher_sparse_2016} has elaborated that LBI can easily be implemented in parallel and distributed manners, and applied on very large-scale datasets. Likewise, GLBI can be parallelized in many usual applications. We now take the logistic model \Cref{thm:model-logistic} as an example to explain the details. The iteration \cref{eq:glbi-show} (generally taking $\delta_k = \delta$) can be written as
\begin{subequations}
    \label{eq:glbi-show-logistic}
    \begin{align}
        \alpha_{k+1} &= \alpha_k - \kappa \delta f\left( \alpha_k, X \beta_k \right),\\
        z_{k+1} &= z_k - \delta X^T g\left( \alpha_k, X \beta_k \right),\\
        \beta_{k+1} &= \kappa \mathcal{S}\left( z_{k+1}, 1 \right),
    \end{align}
\end{subequations}
where $f: \mathbb{R}^{n+1} \rightarrow \mathbb{R},\ g: \mathbb{R}^{n+1} \rightarrow \mathbb{R}^n$ such that
\begin{align*}
    g\left( \alpha, w \right)_i :={}& - \frac{1}{n} \cdot \frac{1}{1 + \exp\left( - \left( \alpha + w_i \right) y^{(i)} \right)} y^{(i)} \in \mathbb{R}\\
    &\ \left( 1\le i \le n;\ w \in \mathbb{R}^n \right)\\
    f\left( \alpha, w \right) :={}& 1_n^T \cdot g (\alpha, w)\\
    ={}& - \frac{1}{n} \sum_{i=1}^n \frac{1}{1 + \exp\left( - \left( \alpha + w_i \right) y^{(i)} \right)} y^{(i)}.
\end{align*}
Suppose
\begin{equation*}
    X = \left[ X_1, X_2, \ldots, X_L \right] \in \mathbb{R}^{n\times p},
\end{equation*}
where $X_l$'s are submatrices stored in a distributed manner on a set of networked workstations. The sizes of $X_l$'s are flexible and can be chosen for good load balancing. Let each workstation $l$ hold data $y$ and $X_l$, and variables $z_{k,l}$ and $X_l \beta_{k,l}$ which are parts of $z_k$ and summands of $w_k := X \beta_k$, respectively. The iteration \cref{eq:glbi-show-logistic} is carried out as
\begin{gather*}
    \begin{cases}
        \alpha_{k+1} = \alpha_k - \kappa \delta f(\alpha_k,w_k),\\
        z_{k+1,l} = z_{k,l} - \delta X_l^T g(\alpha_k,w_k),\\
        w_{k+1,l} = \kappa X_l \mathcal{S}(z_{k+1,l}, 1)
    \end{cases}
    \text{in parallel for $l$}\\
    w_{k+1} = \sum_{l=1}^L w_{k+1,l}\ \ \ \ \text{(all-reduce summation)},
\end{gather*}
where the all-reduce summation step collects inputs from and then returns the sum to all the $L$ workstations. It is the sum of $L$ $n$-dimensional vectors. Therefore, the communication cost is independent of $p$ no matter how the all-reduce step is implemented. It is important to note that the algorithm is not changed at all. particularly, increasing $L$, does not increase the number of iterations. So the parallel implementation is truly scalable.

If $t_L$ denotes the time cost of a single GLBI run with $L$ workstations under the same dataset and the same algorithmic settings, it is expected that $t_L \sim 1 / L$. Here we show this by an example. Construct a logistic model in \Cref{sec:simu-logistic} with $M = 1,\ r = 0.25$, and three settings for $(p,s,n)$: (I) $(2000,\ 200,\ 6000)$, (II) $(5000,\ 500,\ 15000)$, (III) $(10000,\ 1000,\ 30000)$. For each setting, we run our parallelized version of GLBI algorithm written in C++, with $\kappa = 10,\ \delta = 0.1,\ k_{\max} = 1000 k_0$, where $k_0$ is the maximal $k$ such that $\beta_1 = \cdots = \beta_k = 0$, and the path is early stopped at the $k_{\max}$-th iteration. The recorded $t_L$'s are shown in \Cref{fig:parallel-time-cost}. The left panel shows $t_L$ (in seconds) while the right panel shows $t_1 / t_L$, for $L = 1, \ldots, 8$. We see truly $t_L \sim 1/L$, which is expected in our parallel and distributed treatment. When $L$ is large, our package can deal with very large scale problems.

\begin{figure}[h]
    \centering
    \includegraphics[width = 0.8 \linewidth]{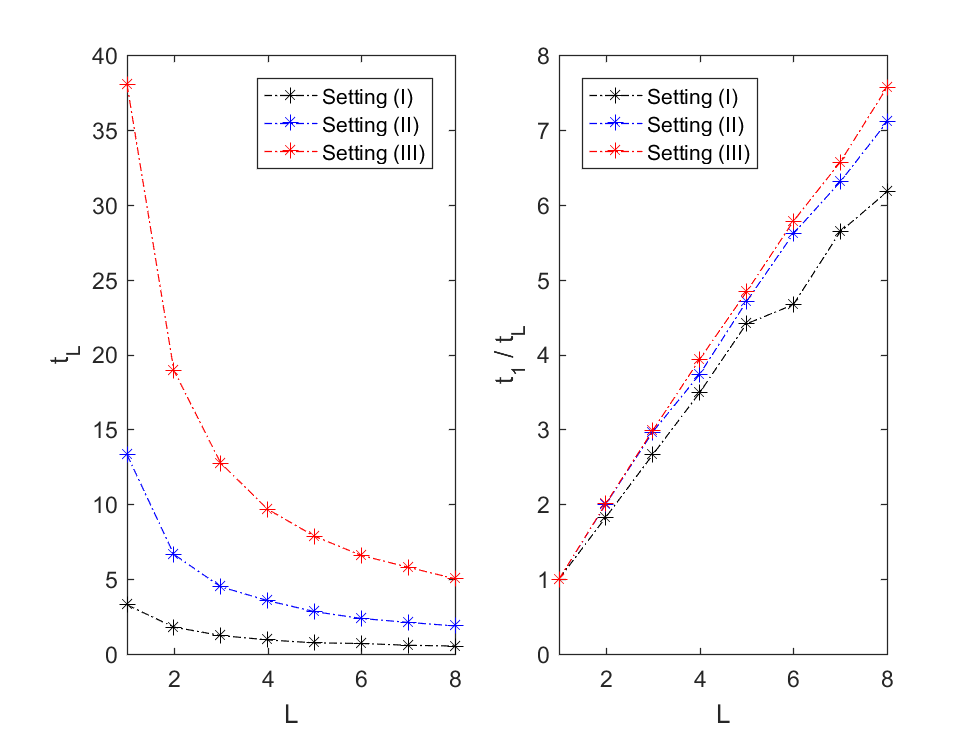}
    \caption{Time cost illustration for logistic model with three settings (black for Setting (I), blue for Setting (II) and red for Setting (III)). In each setting, the left panel shows $t_L$ while the right panel shows $t_1 / t_L$, for $L = 1, \ldots, 8$.}
    \label{fig:parallel-time-cost}
\end{figure}

\subsection{Application: Logistic Model}
\label{sec:simu-logistic}

We do $\mathrm{rep} = 20$ independent experiments, in each of which we construct a logistic model (\Cref{thm:model-logistic}), and then compare GLBI with other methods. Specifically, suppose that $\beta^{\star}$ has a support set $S = \{1,\ldots,s\}$ without loss of generality. $\alpha^{\star}, \beta_j^{\star}\ (j\in S)$ are independent, each has a uniform distribution on $[-2M,-M]\cup[M,2M]$. Each row of $X \in \mathbb{R}^{n\times p}$ is i.i.d. sampled from $N(0,\Sigma)$, where $\Sigma$ is a Toeplitz matrix satisfying $\Sigma_{j,k} = r^{|j-k|}$. When $X$ and $(\alpha^{\star}, \beta^{\star})$ are determined, we generate $y\in \mathbb{R}^n$ as in \Cref{thm:model-logistic}.

After getting the sample $(X,y)$, consider GLBI \cref{eq:glbi-show} and $\ell_1$ optimization \cref{eq:pen-opt}, both with logistic loss \cref{eq:loss-logistic}. For GLBI, set $\kappa = 10$. For \cref{eq:pen-opt}, apply a grid search for differently penalized problems, for which we use \texttt{glmnet} -- a popular package available in Matlab/R that can be applied on $\ell_1$ regularization for sparse logistic regression models.

For each algorithm, we use $K$-fold ($K = 5$) cross validation (CV) to pick up an estimator from the path calculate based on the smallest CV estimate of \emph{prediction error}. Specifically, we split the data into $K$ roughly equal-sized parts. For a certain position on paths ($t$ for GLBI, or $\lambda$ for \texttt{glmnet}) and $k\in \{1,\ldots,K\}$, we obtain a corresponding estimator based on the data with the $k$-th part removed, use the estimator to build a classifier, and get the mis-classification error on the $k$-th part. Averaging the value for $k \in \{1,\ldots,K\}$, we obtain the CV estimate of prediction error, for the obtained estimator corresponding to a certain position on paths. Among all positions, we pick up the estimator producing the smallest CV estimate of prediction error. Besides, we calculate \emph{AUC (Area Under Curve)}, for evaluating the path performance of learning sparsity patterns without choosing a best estimator. 

Results for $p = 80,\ s = 20,\ M = 1$ are summarized in \Cref{tab:simu-logistic-p80-s20-m1}. We see that in terms of CV estimate of prediction error, GLBI is generally better than \texttt{glmnet}. Besides, GLBI is competitive with $\ell_1$ regularization method in variable selection, in terms of AUC. Similar observations for more settings are listed in \Cref{tab:simu-logistic-p80-s20-m2}, \ref{tab:simu-logistic-p200-s50-m1} and \ref{tab:simu-logistic-p200-s50-m2} in \Cref{sec:exp-ext}. Apart from these tables, we can also see the outperformance of CV estimate of prediction error \Cref{fig:simu-logistic-cv} in \Cref{sec:exp-ext}, while in that figure we can see that GLBI further reduces bias, as well as provides us a relatively good estimator with small prediction error if a proper early stopping is equipped.

\begin{table}[h]
    \caption{Comparisons between GLBI and \texttt{glmnet}, for logistic models with $p = 80,\ s = 20,\ M = 1$. For each algorithm, we run $\mathrm{rep} = 20$ independent experiments.}
    \label{tab:simu-logistic-p80-s20-m1}
    \centering
    \begin{tabular}{ccccccc}
        \toprule
        & & \multicolumn{2}{c}{AUC} & \multicolumn{2}{c}{prediction error}\\
        \cmidrule(lr){3-4} \cmidrule(lr){5-6}
        $r$ & $n$ & GLBI & \texttt{glmnet} & GLBI & \texttt{glmnet}\\
        \cmidrule(lr){1-6}
        $0.25$ & $400$ & $.9902$ & $\boldsymbol{.9906}$ & $\boldsymbol{.1221}$ & $.1355$\\
               &       & $(.0065)$ & $(.0062)$ & $(.0218)$ & $(.0223)$\\
               & $800$ & $\boldsymbol{.9991}$ & $.9990$ & $\boldsymbol{.1082}$ & $.1132$\\
               &       & $(.0020)$ & $(.0022)$ & $(.0125)$ & $(.0104)$\\
        \cmidrule(lr){1-6}
        $0.5$  & $400$ & $\boldsymbol{.9690}$ & $.9681$ & $\boldsymbol{.1321}$ & $.1379$\\
               &       & $(.0180)$ & $(.0165)$ & $(.0268)$ & $(.0289)$\\
               & $800$ & $\boldsymbol{.9925}$ & $.9921$ & $\boldsymbol{.1139}$ & $.1197$\\
               &       & $(.0069)$ & $(.0076)$ & $(.0138)$ & $(.0134)$\\
        \bottomrule
    \end{tabular}
\end{table}

\subsection{Application: Ising Model with 4-Nearest-Neighbor Grid}
\label{sec:simu-ising}

\begin{table}[h]
    \caption{Comparisons of GLBI1 (GLBI + composite), GLBI2 (GLBI + MPF), and \texttt{glmnet}, for Ising models with $p = 36$. For each algorithm, we run $\mathrm{rep} = 20$ independent experiments.}
    \label{tab:simu-ising-p36}
    \centering
    \begin{tabular}{ccccc}
        \toprule
        & & \multicolumn{3}{c}{AUC}\\
        \cmidrule(lr){3-5}
        $T$ & $n$ & GLBI1 & GLBI2 & \texttt{glmnet}\\
        \cmidrule(lr){1-5}
        $1.25$ & $500$ & $.9754$ & $\boldsymbol{.9867}$ & $.9774$\\
               &       & $(.0277)$ & $(.0128)$ & $(.0265)$\\
        \cmidrule(lr){2-5}
               & $750$ & $.9868$ & $\boldsymbol{.9919}$ & $.9891$\\
               &       & $(.0137)$ & $(.0082)$ & $(.0134)$\\
        \cmidrule(lr){1-5}
        $1.5$  & $500$ & $.9915$ & $\boldsymbol{.9963}$ & $.9929$\\
               &       & $(.0110)$ & $(.0033)$ & $(.0104)$\\
        \cmidrule(lr){2-5}
               & $750$ & $.9963$ & $\boldsymbol{.9980}$ & $.9975$\\
               &       & $(.0041)$ & $(.0029)$ & $(.0042)$\\
        \bottomrule
    \end{tabular}
    \begin{tabular}{ccccc}
        \toprule
        & & \multicolumn{3}{c}{2nd order MDC}\\
        \cmidrule(lr){3-5}
        $T$ & $n$ & GLBI1 & GLBI2 & \texttt{glmnet}\\
        \cmidrule(lr){1-5}
        $1.25$ & $500$ & $\boldsymbol{.9762}$ & $.9758$ & $.9744$\\
               &       & $(.0079)$ & $(.0079)$ & $(.0086)$\\
        \cmidrule(lr){2-5}
               & $750$ & $\boldsymbol{.9840}$ & $.9830$ & $.9827$\\
               &       & $(.0053)$ & $(.0066)$ & $(.0061)$\\
        \cmidrule(lr){1-5}
        $1.5$  & $500$ & $\boldsymbol{.9655}$ & $.9646$ & $.9630$\\
               &       & $(.0087)$ & $(.0099)$ & $(.0094)$\\
        \cmidrule(lr){2-5}
               & $750$ & $\boldsymbol{.9774}$ & $.9766$ & $.9756$\\
               &       & $(.0060)$ & $(.0066)$ & $(.0070)$\\
        \bottomrule
    \end{tabular}
\end{table}

We do $\mathrm{rep} = 20$ independent experiments, in each of which we construct an ising model (\Cref{thm:model-ising}), and then compare GLBI with other methods. Specifically, construct an $N \times N$ 4-nearest neighbor grid (with aperiodic boundary conditions) to be graph $G$, with node set $V$ and edge set $E$. The distribution of a random vector $x$ is given by \cref{eq:model-ising} ($p = N^2$), where $\alpha_j^{\star}$'s and $\beta_{j,j'}^{\star}$'s $( (j,j')\in E )$ are i.i.d. and each has a uniform distribution on $[-2/T, -1/T] \cup [1/T, 2/T]$. Let $X\in \mathbb{R}^{n\times p}$ represents $n$ samples drawn from the distribution of $x$ via Gibbs sampling.

After getting the sample $X$, consider GLBI1 (GLBI with composite loss \cref{eq:loss-ising-composite}), GLBI2 (GLBI with MPF loss \cref{eq:loss-ising-mpf}) and $\ell_1$ optimization \cref{eq:pen-opt} with logistic loss (see \Cref{thm:model-ising} for \emph{neighborhood-based logistic regression} applied on Ising models, or see \citet{ravikumar_high-dimensional_2010}). For GLBI1 and GLBI2, set $\kappa = 10$. For \cref{eq:pen-opt}, apply a grid search for differently penalized problems; we still use \texttt{glmnet}.

For each algorithm, we calculate the \emph{AUC (Area Under Curve)}, popular for evaluating the path performance of learning sparsity patterns. Besides, we apply $K$-fold ($K = 5$) cross validation (CV) to pick up an estimator from the path, with the \emph{largest CV estimate of 2nd order marginal distribution correlation (2nd order MDC)} in the same way as the CV process done in \Cref{sec:simu-logistic}, here the 2nd order MDC, defined in the next paragraph, is calculated based on two samples of the same size: the $k$-th part original data, and the newly sampled Ising model data (based on learned parameters) with the same size of the $k$-th part.

For any sample matrix $X'\in \{1,-1\}^{n'\times p}$, we construct $d_2(X')$, the 2nd marginal empirical distribution matrix of $X'$, defined as follows. $d_2(X') = (d_2(X')_{[j_1,j_2]})_{p\times p} \in \mathbb{R}^{2p\times 2p}$, where
\begin{equation}
    \label{eq:mgn-dist}
    \begin{aligned}
        &d_2\left( X' \right)_{[j_1, j_2]}\\
        = {}& \frac{1}{n'} \sum_{i=1}^{n'} \begin{pmatrix} 1_{\left( x_{j_1}^{(i)}, x_{j_2}^{(i)} \right) = (1,1)} & 1_{\left( x_{j_1}^{(i)}, x_{j_2}^{(i)} \right) = (1,-1)} \\ 1_{\left( x_{j_1}^{(i)}, x_{j_2}^{(i)} \right) = (-1,1)} & 1_{\left( x_{j_1}^{(i)}, x_{j_2}^{(i)} \right) = (-1,-1)} \end{pmatrix}.
    \end{aligned}
\end{equation}
For any sample matrices $X_1, X_2$ with the same sample size, we call the correlation between $\mathrm{vec}(d_2(X_1))$ and $\mathrm{vec}(d_2(X_2))$ \emph{the 2nd order marginal distribution correlation (2nd order MDC)}. This value is expected to be large as well as close to $1$ if $X_1, X_2$ come from the same model.

Results for $p = N^2 = 36$ are summarized in \Cref{tab:simu-ising-p36}. GLBI with composite/MPF loss are competitive with or better than \texttt{glmnet}. Similar observations are listed in \Cref{tab:simu-ising-p100} in \Cref{sec:exp-ext}.

\subsection{Application: Coauthorship Network in NIPS}
\label{sec:real-nips}

Consider the information of papers and authors in \emph{Advances in Neural Information Processing Systems} (NIPS) 1987--2016, collected from \url{https://www.kaggle.com/benhamner/nips-papers}. After preprocessing (e.g. author disambiguity), for simplicity, we restrict our analysis on the most productive $p = 30$ authors (\Cref{tab:real-nips}) in the largest connected component of a coauthorship network that two authors are linked if they coauthored at least 2 papers (Coauthorship (2)). The first panel of \Cref{fig:real-nips} shows this coauthorship network with edge width in proportion to the number of coauthored papers. 
There are $n = 1,028$ papers authored by at least one of these persons.

\begin{table}[h]
    \caption{Most productive $p = 30$ authors in the largest connected component of Coauthorship (2).}
    \centering
    \small
    \begin{tabular}{ll}
        \toprule
        01 Michael Jordan & 16 Inderjit Dhillon \\
        \cmidrule(lr){1-2}
        02 Bernhard Sch\"{o}lkopf & 17 Ruslan Salakhutdinov \\
        \cmidrule(lr){1-2}
        03 Geoffrey Hinton & 18 Tong Zhang\\
        \cmidrule(lr){1-2}
        04 Yoshua Bengio & 19 Thomas Griffiths\\
        \cmidrule(lr){1-2}
        05 Zoubin Ghahramani & 20 David Blei\\
        \cmidrule(lr){1-2}
        06 Terrence Sejnowski & 21 R\'{e}mi Munos\\
        \cmidrule(lr){1-2}
        07 Peter Dayan & 22 Joshua Tenenbaum\\
        \cmidrule(lr){1-2}
        08 Alex Smola & 23 Lawrence Carin\\
        \cmidrule(lr){1-2}
        09 Andrew Ng & 24 Eric Xing\\
        \cmidrule(lr){1-2}
        10 Francis Bach & 25 Richard Zemel\\
        \cmidrule(lr){1-2}
        11 Michael Mozer & 26 Martin Wainwright\\
        \cmidrule(lr){1-2}
        12 Pradeep Ravikumar & 27 Yoram Singer\\
        \cmidrule(lr){1-2}
        13 Tommi Jaakkola & 28 Han Liu\\
        \cmidrule(lr){1-2}
        14 Klaus-Robert M\"{u}ller & 29 Satinder Singh\\
        \cmidrule(lr){1-2}
        15 Yee Teh & 30 Christopher Williams\\
        \bottomrule
    \end{tabular}
    \label{tab:real-nips}
\end{table}

Let the $j$-th entry of $x^{(i)} \in \mathbb{R}^p$ be $1$ if the $j$-th person is involved in the authors of the $i$-th paper, and $-1$ otherwise. Now we fit the data $x^{(i)}\ (1\le i\le n)$ by a sparse Ising model \cref{eq:model-ising} with parameter $(\hat{\alpha},\hat{\beta})$. Note that $\hat{\beta}_{j,j'} = 0$ indicates that $j$ and $j'$ are conditional independent on coauthorship, given all the other authors; $\hat{\beta}_{j,j'}>0$ implies that $j$ and $j'$ coauthored more often than their averages, while $\hat{\beta}_{j,j'}<0$ says the opposite. 

\begin{figure}[h]
    \centering
    \includegraphics[width = 0.48\linewidth]{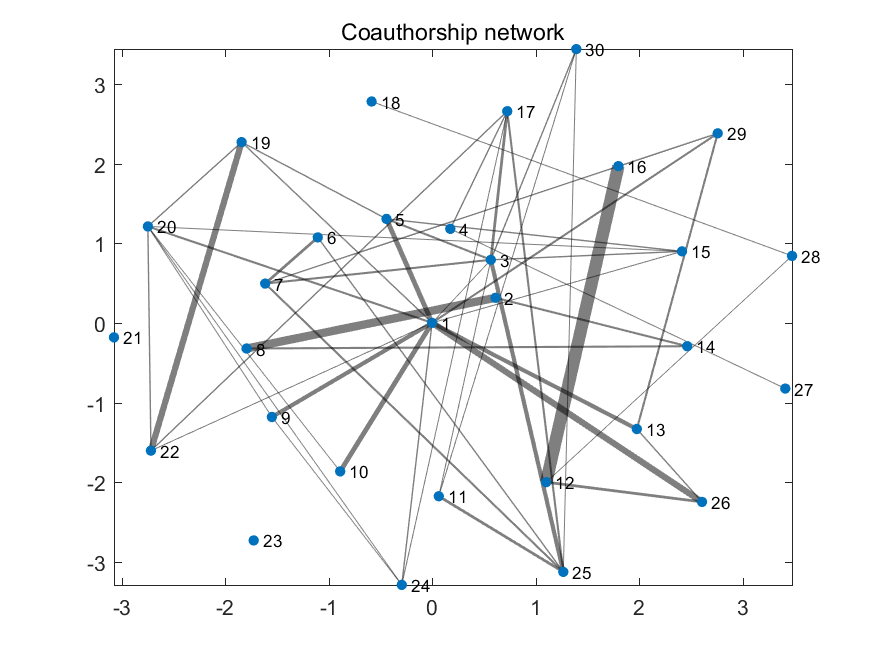}
    \includegraphics[width = 0.48\linewidth]{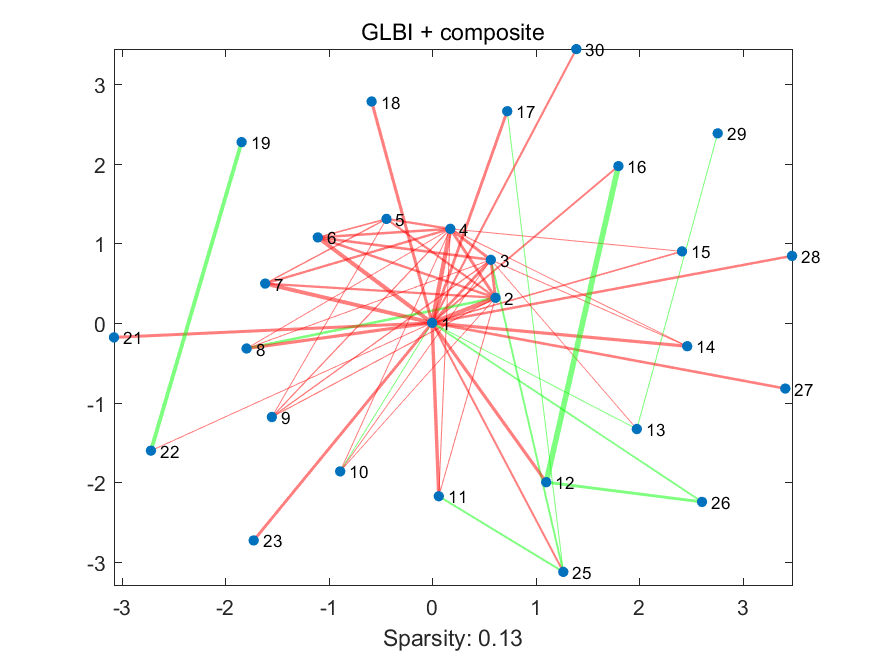}
    \includegraphics[width = 0.48\linewidth]{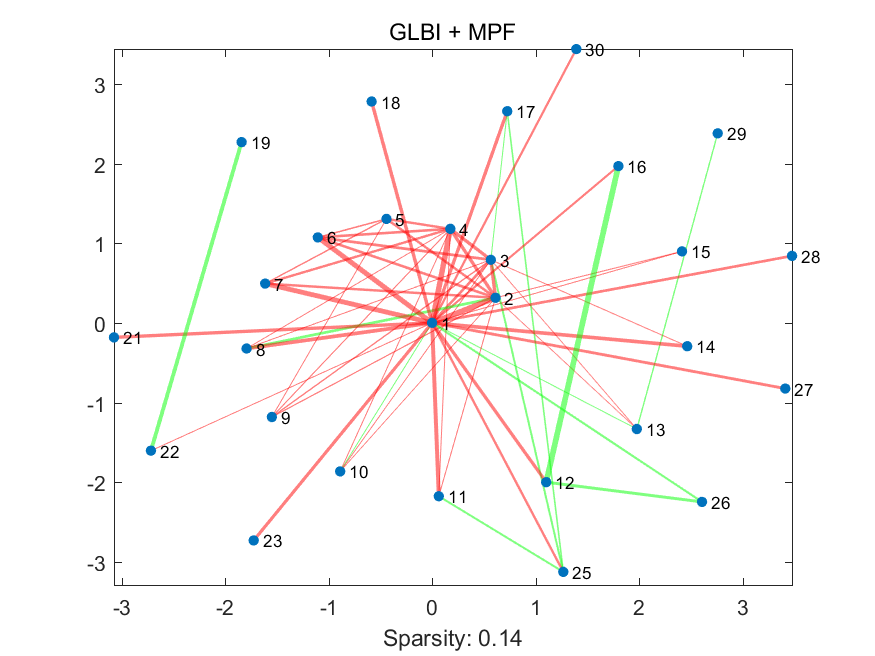}
    \includegraphics[width = 0.48\linewidth]{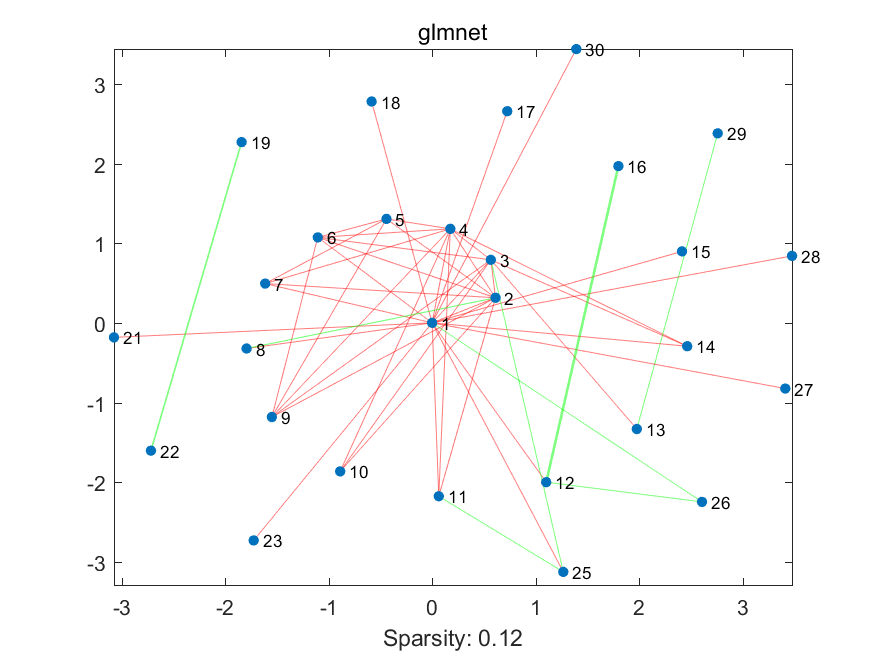}
    \caption{Top left: NIPS coauthorship network, with edge width in proportion to the number of coauthored papers. Top right: a learned graph picked from the path of GLBI1. Bottom left: from GLBI2. Bottom right: from \texttt{glmnet}. Green edges indicate positive conditional dependence of coauthorship -- the probability of coauthoring a paper significantly increases the authors' average behavior, while red edges indicating the negative coauthorship. Edge widths show the strength of such a relationship.} 
    \label{fig:real-nips}
\end{figure}

The right three panels in \Cref{fig:real-nips} compares some sparse Ising models chosen from three regularization paths at a similar sparsity level (the percentage of learned edges over the complete graph, here about $12\% \sim 14\%$): GLBI1 (GLBI with composite loss), GLBI2 (GLBI with MPF loss), and $\ell_1$ regularization (\texttt{glmnet}), respectively. For more learned graphs from these paths, see \Cref{fig:real-nips-ext} in \Cref{sec:exp-ext}. In GLBI1 and GLBI2, set $\kappa = 10$. 

We see that all the learned graphs capture some important coauthorships, such as Pradeep Ravikumar (12) and Inderjit Dhillon (16) in a thick green edge in all the three learned graphs, indicating that they collaborated more often than separately for NIPS. Besides, the most productive author Michael Jordan (01) has coauthored with a lot of other people, but is somewhat unlikely to coauthor with several other productive scholars like Yoshua Bengio (04), Terrence Sejnowski (06), etc., indicating by the red edges between Jordan and those people. Further note the edge widths in the second and third graphs are significantly larger than those in the fourth graph, implying that at a similar sparsity level, GLBI tends to provide an estimator with larger absolute values of entries than that by \texttt{glmnet}. That is because under similar sparsity patterns, GLBI may give \emph{low-biased} estimators.

%

\subsubsection*{Acknowledgements}

This work is supported in part by National Basic Research Program of China (Nos. 2015CB85600, 2012CB825501), NNSF of China (Nos. 61370004, 11421110001), HKRGC grant 16303817, as well as grants from Tencent AI Lab, Si Family Foundation, Baidu BDI, and Microsoft Research-Asia.

\newpage

\

\newpage


\bibliography{ref_local.bib}
\bibliographystyle{natbib}

\newpage

\appendix

\begin{center}
    \Large\textbf{Supplementary Material}
\end{center}

\section{GLBISS and GISS: Limit Dynamics of GLBI}

Consider a differential inclusion called \textit{Generalized Linearized Bregman Inverse Scale Space (GLBISS)}, the limit dynamics of GLBI when the step size $\delta \rightarrow 0$. This will help understanding GLBI, and the proof on sign consistency as well as $\ell_2$ consistency of GLBISS can be moved to the case of GLBI with slight modifications.

Specifically, noting by the following Moreau decomposition
\begin{equation}
    \label{eq:var-subst}
    \begin{cases}
        \rho\in \partial \left\| \beta \right\|_1,\\
        z = \rho + \beta / \kappa
    \end{cases}
    \Longleftrightarrow
    \begin{cases}
        \beta = \kappa \mathcal{S}(z,1),\\
        \rho = z - \mathcal{S}(z,1)
    \end{cases}
\end{equation}
GLBI has an equivalent form
\begin{subequations}
    \label{eq:glbi}
    \begin{align}
        \label{eq:glbi-a}
        \alpha_{k+1}/\kappa &= \alpha_k/\kappa - \delta \nabla_{\alpha} \ell\left( \alpha_k, \beta_k \right),\\
        \label{eq:glbi-b}
        \rho_{k+1} + \beta_{k+1}/\kappa &= \rho_k + \beta_k/\kappa - \delta \nabla_{\beta} \ell \left( \alpha_k, \beta_k \right),\\
        \label{eq:glbi-c}
        \rho_k &\in \partial \left\| \beta_k \right\|_1,
    \end{align}
\end{subequations}
where $\rho_0 = \beta_0 = 0$. Taking $\rho(k\delta) = \rho_k,\ \alpha(k\delta) = \alpha_k,\ \beta(k\delta) = \beta_k$, and $\delta \rightarrow 0$, \cref{eq:glbi} can be viewed as a forward Euler discretization of a differential inclusion called \textit{Generalized Linearized Bregman Inverse Scale Space (GLBISS)}
\begin{subequations}
    \label{eq:glbiss}
    \begin{align}
        \label{eq:glbiss-a}
        \dot{\alpha}(t)/\kappa &= - \nabla_{\alpha} \ell\left( \alpha(t), \beta(t) \right),\\
        \label{eq:glbiss-b}
        \dot{\rho}(t) + \dot{\beta}(t)/\kappa &= - \nabla_{\beta} \ell \left( \alpha(t), \beta(t) \right),\\
        \label{eq:glbiss-c}
        \rho(t) &\in \partial \left\| \beta(t) \right\|_1,
    \end{align}
\end{subequations}
where $\rho(0) = \beta(0) = 0$. Next taking $\kappa \rightarrow +\infty$, we reach the following \textit{Generalized Bregman Inverse Scale Space (GISS)}.
\begin{subequations}
    \label{eq:giss}
    \begin{align}
        \label{eq:giss-a}
        0 &= - \nabla_{\alpha} \ell\left( \alpha(t), \beta(t) \right),\\
        \label{eq:giss-b}
        \dot{\rho}(t) &= - \nabla_{\beta} \ell \left( \alpha(t), \beta(t) \right),\\
        \label{eq:giss-c}
        \rho(t) &\in \partial \left\| \beta(t) \right\|_1,
    \end{align}
\end{subequations}
where $\rho(0) = \beta(0) = 0$. Following the same spirit of \citet{osher_sparse_2016}, it is transparent to obtain the existence and uniqueness of the solution paths of GISS and GLBISS under mild conditions; and for GISS and GLBISS, $\ell(\alpha'(t), \beta'(t))$ is non-increasing for $t$, while for GLBI, $\ell(\alpha_k', \beta_k')$ is non-increasing for $k$ if $\kappa \delta \| \bar{H}^o(\theta_k') \|_2 < 2$.

\section{Path Consistency of GLBISS}
\label{sec:glbiss-cstc}

Now we aim to prove the path consistency of GLBISS, which will shed light on proving the path consistency of GLBI in \Cref{sec:proof-glbi-cstc}. Define the following \textit{Oracle Dynamics} of GLBISS, which is viewed as a version of \cref{eq:glbiss} with $S$ known:
\begin{subequations}
    \label{eq:glbiss-orc}
    \begin{align}
        \label{eq:glbiss-orc-a}
        \dot{\alpha}'(t) / \kappa &= - \nabla_{\alpha} \ell\left( \alpha'(t), \beta'(t) \right),\\
        \label{eq:glbiss-orc-b}
        \dot{\rho}_S'(t) + \beta_S'(t) / \kappa &= - \nabla_S \ell \left( \alpha'(t), \beta'(t) \right),\\
        \label{eq:glbiss-orc-c}
        \rho_S'(t) &\in \partial \left\| \beta_S'(t) \right\|_1,
    \end{align}
\end{subequations}
and $\rho_{S^c}'(t) = \beta_{S^c}'(t) \equiv 0_{p-s}$. Transparently, \cref{eq:glbiss-orc} has an equivalent form
\begin{subequations}
    \label{eq:glbiss-orc-show}
    \begin{align}
        \label{eq:glbiss-orc-show-a}
        \dot{\alpha}'(t) &= - \kappa \nabla_{\alpha} \ell\left( \alpha'(t), \beta'(t) \right),\\
        \label{eq:glbiss-orc-show-b}
        \dot{z}_S'(t) &= - \nabla_S \ell \left( \alpha'(t), \beta'(t) \right),\\
        \label{eq:glbiss-orc-show-c}
        \beta_S'(t) &= \kappa \mathcal{S} \left( z_S'(t), 1 \right),
    \end{align}
\end{subequations}
and $z_{S^c}'(t) = \beta_{S^c}'(t) \equiv 0_{p-s}$, according to \cref{eq:var-subst} with $z'(t) = \rho'(t) + \beta'(t) / \kappa$. Let $\theta'(t) := (\alpha'(t), \beta'(t)^T)^T$.

\subsection{Basic Assumptions}

\begin{assumption}[Restricted Strong Convexity (RSC)]
    \label{thm:rsc-glbiss}
    There exist $\lambda, \Lambda > 0$, such that for any $t\ge 0$, and for any $\theta$ on the line segment between $\theta'(t)$ and $\theta^o$, or on the line segment between $\theta^{\star}$ and $\theta^o$,
    \begin{equation}
        \label{eq:rsc-glbiss}
        \lambda I \preceq \nabla_{S_{\alpha}, S_{\alpha}}^2 \ell \left( \theta \right)\preceq \Lambda I.
    \end{equation}
\end{assumption}

\begin{assumption}[Irrepresentable Condition (IRR)]
    \label{thm:irr-glbiss}
    There exist $\eta \in (0,1]$ and $I > 0$ such that
    \begin{subequations}
        \label{eq:irr-glbiss}
        \begin{gather}
            \label{eq:irr-glbiss-a}
            \sup_{T\ge 0} \left\| \int_0^T \overline{\mathrm{irr}}(t) \begin{pmatrix} \dot{\alpha}'(t) / \kappa \\ \dot{z}_S'(t) \end{pmatrix} \mathrm{d}t \right\|_{\infty} < 1 - \frac{\eta}{2},\\
            \label{eq:irr-glbiss-b}
            \sup_{t\ge 0} \left\| \overline{\mathrm{irr}}(t) \right\|_{\infty} \le C,
        \end{gather}
    \end{subequations}
    where
    \begin{equation*}
        \overline{\mathrm{irr}}(t) := \bar{H}_{S^c, S_{\alpha}}\left( \theta'(t) \right) \cdot \bar{H}_{S_{\alpha}, S_{\alpha}}\left( \theta'(t) \right)^{-1}.
    \end{equation*}
\end{assumption}

\subsection{Properties of the Oracle Dynamics of GLBISS}

Here we state our main idea. GLBISS always start within the \emph{oracle subspace} ($\beta_{S^c}(0) = 0$), and we can prove that under IRR (\Cref{thm:irr-glbiss}) the exit time of the oracle subspace is no earlier than some large $\bar{\tau}$ (i.e. the no-false-positive condition holds before $\bar{\tau}$), with high probability. Before $\bar{\tau}$, the iteration follow the identical path of the \textbf{\emph{oracle dynamics}} \Cref{{eq:glbiss-orc}} of GLBI restricted in the oracle subspace. $\ell$ is dropping along the iterative path. Hence to monitor the distance of an estimator to the oracle estimator, define the \textit{potential function} of the oracle dynamics \cref{eq:glbiss-orc} as
\begin{align*}
    \Psi(t) &:= D^{\rho_S'(t)} \left( \beta_S^o, \beta_S'(t) \right) + d(t)^2 / (2\kappa)\\
    &= \left\| \beta_S^o \right\|_1 - \left\langle \beta_S^o, \rho_S'(t) \right\rangle + d(t)^2 / (2\kappa),
\end{align*}
where
\begin{align*}
    d(t) &:= \left\| \theta_{S_\alpha}'(t) - \theta_{S_\alpha}^o \right\|_2\\
    &= \sqrt{\left\| \alpha'(t) - \alpha^o \right\|_2^2 + \left\| \beta_S'(t) - \beta_S^o \right\|_2^2}
\end{align*}
and the Bregman divergence (distance)
\begin{align*}
    & D^{\rho_S'(t)} \left( \beta_S^o, \beta_S'(t) \right)\\
    :={}& \left\| \beta_S^o \right\|_1 - \left\| \beta_S'(t) \right\|_1 - \left\langle \beta_S^o - \beta_S'(t),\ \rho_S'(t) \right\rangle\\
    ={}& \left\| \beta_S^o \right\|_1 - \left\langle \beta_S^o, \rho_S'(t) \right\rangle.
\end{align*}
Equipped with this potential function, our dynamics can be characterized by the following inequality.

\begin{lemma}[Generalized Bihari's inequality]
    \label{thm:glbiss-orc-gbi}
    Under \Cref{thm:rsc-glbiss}, for all $t\ge 0$ we have
    \begin{equation*}
        \frac{\mathrm{d}\Psi(t)}{\mathrm{d}t} \le - \lambda F^{-1}\left( \Psi(t) \right),
    \end{equation*}
    where $\beta_{\min}^o := \min(|\beta_j^o|:\ \beta_j^o \neq 0)$ and
    \begin{align*}
        F(x) &:= \frac{x}{2\kappa} +
        \begin{cases}
            0,& 0\le x < (\beta_{\mathrm{min}}^o)^2,\\
            2x/\beta_{\mathrm{min}}^o,& (\beta_{\mathrm{min}}^o)^2 \le x < s (\beta_{\mathrm{min}}^o)^2,\\
            2\sqrt{sx},& x \ge s(\beta_{\mathrm{min}}^o)^2,
        \end{cases}\\
        F^{-1}(x) &:= \inf(y:\ F(y)\ge x)\ (y\ge 0).
    \end{align*}
\end{lemma}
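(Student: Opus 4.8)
The goal is to prove the differential inequality $\dot\Psi(t) \le -\lambda F^{-1}(\Psi(t))$ for the oracle dynamics of GLBISS.

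\textbf{Overall plan.} I would compute $\dot\Psi(t)$ directly from the definition $\Psi(t) = \|\beta_S^o\|_1 - \langle\beta_S^o, \rho_S'(t)\rangle + d(t)^2/(2\kappa)$ and the oracle dynamics \cref{eq:glbiss-orc}, then bound it from above by something of the form $-\lambda\cdot(\text{quantity comparable to } d(t))$, and finally argue that $d(t)$ (together with the Bregman term) controls $F^{-1}(\Psi(t))$ from below. The two ingredients are: (i) a one-step dissipation identity showing $\dot\Psi \le -\lambda\, d(t)^2 \cdot(\text{something})$ type bound via RSC, and (ii) a relation $\Psi(t) \le F(d(t))$ so that $F^{-1}(\Psi(t)) \le d(t)$, which combined with (i) closes the inequality.

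\textbf{Step 1: Differentiate the potential.} Using \cref{eq:glbiss-orc-b}, $\dot\rho_S' = -\nabla_S\ell(\theta') - \dot\beta_S'/\kappa$, so $-\langle\beta_S^o,\dot\rho_S'\rangle = \langle\beta_S^o, \nabla_S\ell(\theta')\rangle + \langle\beta_S^o,\dot\beta_S'\rangle/\kappa$. For the $d(t)^2/(2\kappa)$ term, $\frac{1}{2}\frac{d}{dt}d(t)^2 = \langle\alpha'-\alpha^o,\dot\alpha'\rangle + \langle\beta_S'-\beta_S^o,\dot\beta_S'\rangle$; using \cref{eq:glbiss-orc-a} and \cref{eq:glbiss-orc-b} again to substitute $\dot\alpha'/\kappa = -\nabla_\alpha\ell$ and $\dot\beta_S'/\kappa = -\nabla_S\ell - \dot\rho_S'$, the $\dot\beta_S'/\kappa$ cross-terms should cancel with the $\langle\beta_S^o,\dot\beta_S'\rangle/\kappa$ piece above, leaving $\dot\Psi(t) = -\langle\theta_{S_\alpha}' - \theta_{S_\alpha}^o, \nabla_{S_\alpha}\ell(\theta')\rangle - \langle\beta_S'-\beta_S^o,\dot\rho_S'\rangle$. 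The second inner product is nonnegative: since $\rho_S'\in\partial\|\beta_S'\|_1$ and $\beta_S^o_{,S^c}=0$, monotonicity of the subdifferential of $\|\cdot\|_1$ together with $0\in\partial(\text{something at }\beta^o)$ — more precisely the oracle optimality $\nabla_{S_\alpha}\ell(\theta^o)=0$ is not directly a subgradient statement, so I would instead use that $\langle\beta_S'-\beta_S^o,\dot\rho_S'\rangle \ge 0$ follows from $\frac{d}{dt}\|\beta_S'\|_1 = \langle\rho_S',\dot\beta_S'\rangle$ and convexity bookkeeping (this is the standard LBISS computation in \citet{osher_sparse_2016}). So $\dot\Psi(t) \le -\langle\theta_{S_\alpha}'-\theta_{S_\alpha}^o, \nabla_{S_\alpha}\ell(\theta')-\nabla_{S_\alpha}\ell(\theta^o)\rangle$, using $\nabla_{S_\alpha}\ell(\theta^o)=0$ from the oracle definition \cref{eq:orc-def-nonlinear}.

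\textbf{Step 2: Apply RSC.} By the integral mean-value form, $\nabla_{S_\alpha}\ell(\theta') - \nabla_{S_\alpha}\ell(\theta^o) = \bar H^o_{S_\alpha,S_\alpha}(\theta')(\theta_{S_\alpha}'-\theta_{S_\alpha}^o)$, and \Cref{thm:rsc-glbiss} gives $\bar H^o_{S_\alpha,S_\alpha}(\theta')\succeq\lambda I$ on the relevant segment, so $\dot\Psi(t)\le -\lambda d(t)^2$.

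\textbf{Step 3: Compare $\Psi$ with $F(d)$.} I need $\lambda d(t)^2 \ge \lambda F^{-1}(\Psi(t))$, i.e. $F(d(t)^2)\ge\Psi(t)$ — wait, one must be careful about whether $F$ takes $d$ or $d^2$ as argument; from the statement $F$ is applied to $\Psi$-scale quantities, so I'd show $\Psi(t)\le F(d(t)^2)$. This splits into bounding the Bregman term $D^{\rho_S'}(\beta_S^o,\beta_S') = \|\beta_S^o\|_1 - \langle\beta_S^o,\rho_S'\rangle$. Writing it as $\sum_{j\in S}(|\beta_j^o| - \beta_j^o\rho_j')$ with $|\rho_j'|\le 1$, each term is at most $2|\beta_j^o|$, and is zero unless $\rho_j'\ne\mathrm{sign}(\beta_j^o)$, which (by the equivalent form \cref{eq:glbiss-orc-show}) forces $|\beta_j' - \beta_j^o|\ge|\beta_j^o|\ge\beta_{\min}^o$ or similar — hence the number of ``active'' terms is at most $d(t)^2/(\beta_{\min}^o)^2$, giving the three-regime bound $D^{\rho_S'}\le\min(2d^2/\beta_{\min}^o, 2\sqrt{s}\,d)$ matching the cases in $F$. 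Adding $d^2/(2\kappa)$ reproduces $F(d(t)^2)$ exactly (up to the argument convention), so $\Psi(t)\le F(d(t)^2)$, hence $F^{-1}(\Psi(t))\le d(t)^2$ and therefore $\dot\Psi(t)\le-\lambda d(t)^2\le-\lambda F^{-1}(\Psi(t))$.

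\textbf{Main obstacle.} The delicate part is Step 3, the combinatorial counting that turns the Bregman divergence into the piecewise bound with the $\beta_{\min}^o$ thresholds — in particular correctly handling the case split (how many coordinates have $\rho_j'$ bounded away from $\mathrm{sign}(\beta_j^o)$, and relating that count to $d(t)^2$) and making sure the resulting envelope is exactly $F$ rather than merely comparable. A secondary subtlety in Step 1 is justifying the sign $\langle\beta_S'-\beta_S^o,\dot\rho_S'\rangle\ge 0$ rigorously (it uses $\frac{d}{dt}\|\beta_S'(t)\|_1 = \langle\rho_S'(t),\dot\beta_S'(t)\rangle$ a.e., which requires the absolute continuity of the oracle path established in the existence/uniqueness discussion).
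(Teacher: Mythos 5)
Your overall architecture — differentiate $\Psi$, use the oracle optimality $\nabla_{S_\alpha}\ell(\theta^o)=0$ and RSC to get $\dot\Psi(t)\le -\lambda d(t)^2$, then show $\Psi(t)\le F(d(t)^2)$ via sign‐mismatch counting on $\|\beta_S^o\|_1-\langle\beta_S^o,\rho_S'\rangle$ — is exactly the paper's approach, and your Step 3 matches the paper's (the paper sharpens your cardinality count slightly by bounding $\sum_{j\in N(t)}(\beta_j^o)^2$ directly, and makes explicit the $F(\cdot+x)\ge F(\cdot)+x/(2\kappa)$ device to absorb the $\|\alpha'-\alpha^o\|^2$ part you gloss over, but the three‐regime envelope is the same).

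Your Step 1, however, contains an algebra slip. Tracking the substitutions carefully, the residual term left over after inserting the oracle dynamics is $-\langle\beta_S'(t),\dot\rho_S'(t)\rangle$, not $-\langle\beta_S'(t)-\beta_S^o,\dot\rho_S'(t)\rangle$: the $-\langle\beta_S^o,\dot\rho_S'\rangle$ contribution from the Bregman term and the $-\langle\beta_S'-\beta_S^o,\dot\rho_S'\rangle$ contribution from the $d^2/(2\kappa)$ term merge. This matters because your claimed sign bound $\langle\beta_S'-\beta_S^o,\dot\rho_S'\rangle\ge 0$ is not generally true — rewriting it as $-\langle\beta_S^o,\dot\rho_S'\rangle=\frac{\mathrm{d}}{\mathrm{d}t}D^{\rho_S'(t)}(\beta_S^o,\beta_S'(t))$, you would be asserting monotonicity of the Bregman divergence alone, which need not hold (only $\Psi$, the sum with $d^2/(2\kappa)$, is monotone). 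The clean observation, which the paper states at the outset and which you already have implicitly via $\frac{\mathrm{d}}{\mathrm{d}t}\|\beta_S'\|_1=\langle\rho_S',\dot\beta_S'\rangle$, is that $\langle\dot\rho_S'(t),\beta_S'(t)\rangle=0$ a.e.: coordinates with $\beta_j'\ne 0$ have $\rho_j'$ locally constant so $\dot\rho_j'=0$, while coordinates with $\beta_j'=0$ kill the product. With that, the residual vanishes exactly (no sign inequality needed) and $\dot\Psi(t)=-\langle\theta_{S_\alpha}'(t)-\theta_{S_\alpha}^o,\nabla_{S_\alpha}\ell(\theta'(t))\rangle$ holds with equality, after which RSC finishes. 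So this is a bookkeeping error rather than a conceptual one — the key fact is already in your hands, just applied to the wrong bilinear form.
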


Such an inequality leads to an exponential decrease of the potential above enforcing the convergence to the oracle estimator, see \Cref{fig:saddle}. Then we can show that as long as the signal is strong enough with all the magnitudes of entries of $\beta^{\star}$ being large enough, the dynamics stopped at $\bar{\tau}$, exactly selects all nonzero entries of $\beta^o$ (\cref{eq:glbiss-orc-cstc-sign} in \Cref{thm:glbiss-orc-cstc}), hence also of $\beta^{\star}$ with high probability, achieving the sign consistency.

Even without the strong signal condition, with RSC we can also show that the dynamics, at $\bar{\tau}$, returns a good estimator of $\theta^o$ (\cref{eq:glbiss-orc-cstc-l2} in \Cref{thm:glbiss-orc-cstc}), hence also of $\theta^{\star}$, having an $\ell_2$ error (often at a minimax optimal rate) with high probability.

\begin{proof}[Proof of \Cref{thm:glbiss-orc-gbi}]
    Note that $\langle \dot{\rho}_S'(t), \beta_S'(t) \rangle = 0$ and $\nabla_{S_{\alpha}} \ell(\alpha^o, \beta^o) = 0$. Thus
    \begin{equation}
        \label{eq:dPsi-glbiss}
        \begin{split}
            & \frac{\mathrm{d}\Psi(t)}{\mathrm{d}t}\\
            ={}& \left\langle - \dot{\rho}_S'(t),\ \beta_S^o \right\rangle + \frac{1}{\kappa} \left\langle \dot{\beta}_S'(t),\ \beta_S'(t) - \beta_S^o \right\rangle\\
            & + \frac{1}{\kappa} \left\langle \dot{\alpha}'(t),\ \alpha'(t) - \alpha^o \right\rangle\\
            ={}& \left\langle \begin{pmatrix} 0\\ \dot{\rho}_S'(t) \end{pmatrix} + \frac{1}{\kappa} \begin{pmatrix} \dot{\alpha}(t)\\ \dot{\beta}_S(t) \end{pmatrix},\ \begin{pmatrix} \alpha'(t)\\ \beta_S'(t) \end{pmatrix} - \begin{pmatrix} \alpha^o\\ \beta_S^o \end{pmatrix} \right\rangle\\
            ={}& - \left\langle \theta_{S_\alpha}'(t) - \theta_{S_\alpha}^o,\ \nabla_{S_{\alpha}} \ell\left( \theta'(t) \right) - \nabla_{S_{\alpha}} \ell\left( \theta^o \right) \right\rangle\\
            ={}& - \left( \theta_{S_\alpha}'(t) - \theta_{S_\alpha}^o \right)^T \bar{H}^o_{S_{\alpha}, S_{\alpha}} \left( \theta'(t) \right) \left( \theta_{S_\alpha}'(t) - \theta_{S_\alpha}^o \right)\\
            \le{}& - \lambda d(t)^2.
        \end{split}
    \end{equation}
    It suffices to show $F(d(t)^2) \ge \Psi(t)$. Since $\| \beta_S^o \|_1 - \langle \beta_S^o, \rho_S'(t) \rangle = 0$ if $\| \beta_S'(t) - \beta_S^o \|_2^2 < (\beta_{\min}^o)^2$, and
    \begin{align*}
        & \left\| \beta_S^o \right\|_1 - \left\langle \beta_S^o, \rho_S'(t) \right\rangle\\
        \le{}& \sum_{j\in N(t) := \left\{ j:\ \mathrm{sign}\left( \beta_j'(t) \right) \neq \mathrm{sign} \left( \beta_j^o \right) \right\}} 2 \left| \beta_j^o \right|\\
        \le{}&
        \begin{cases}
            \displaystyle \frac{2}{\beta_{\min}^o} \sum_{j\in N(t)} (\beta_j^o)^2 \le \frac{2}{\beta_{\min}^o} \left\| \beta_S'(t) - \beta_S^o \right\|_2^2 \\
            \displaystyle 2 \sqrt{s \sum_{j\in N(t)} (\beta_j^o)^2}\le 2\sqrt{s \left\| \beta_S'(t) - \beta_S^o \right\|_2^2}.
        \end{cases}
    \end{align*}
    Combining with the fact that $F(\cdot + x) \ge F(\cdot) + x / (2\kappa)$, we have
    \begin{multline*}
        F\left( d(t)^2 \right) = F\left( \left\| \alpha'(t) - \alpha^o \right\|_2^2 + \left\| \beta_S'(t) - \beta_S^o \right\|_2^2 \right)\\
        \ge F\left( \left\| \beta_S'(t) - \beta_S^o \right\|_2^2 \right) + \frac{1}{2\kappa} \left\| \alpha'(t) - \alpha^o \right\|_2^2 \ge \Psi(t).
    \end{multline*}
\end{proof}

\begin{lemma}
    \label{thm:glbiss-orc-cstc}
    Under \Cref{thm:rsc-glbiss}, let
    \begin{equation*}
        \beta_{\min}^o := \min \left( \left| \beta_j^o \right|:\ \beta_j^o \neq 0 \right).
    \end{equation*}
    Then for any $0 < \mu < 1$ and any
    \begin{equation}
        \label{eq:tau-inf-def-glbiss}
        t \ge \tau_{\infty}(\mu) := \frac{1}{\kappa \lambda} \log \frac{1}{\mu} + \frac{2\log s + 4 + d(0) / \kappa}{\lambda \beta_{\min}^o},
    \end{equation}
    we have
    \begin{equation}
        \label{eq:glbiss-orc-cstc-sign}
        \begin{gathered}
            d(t) \le \mu \beta_{\min}^o\\
            \left( \Longrightarrow \mathrm{sign}\left( \beta_S'(t) \right) = \mathrm{sign}\left( \beta_S^o \right),\ \text{if $\beta_j^o \neq 0$ for $j\in S$} \right).
        \end{gathered}
    \end{equation}
    For any $t$, we have
    \begin{equation}
        \label{eq:glbiss-orc-cstc-l2}
        d(t) \le \min \left( \frac{8 \sqrt{s} + 2 d(0) / \kappa}{\lambda t},\ \sqrt{\frac{\Lambda}{\lambda}} \cdot d(0) \right).
    \end{equation}
\end{lemma}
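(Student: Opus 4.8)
The plan is to run everything off the Generalized Bihari inequality already proved, $\dot\Psi(t) \le -\lambda F^{-1}(\Psi(t))$ (\Cref{thm:glbiss-orc-gbi}), supplemented by four elementary facts: $\Psi$ is non-negative and non-increasing; $\Psi(t) \ge d(t)^2/(2\kappa)$ for all $t$ (the Bregman term in $\Psi$ is non-negative); at $t=0$, since $\theta'(0)=0$, one has $\Psi(0) = \|\beta_S^o\|_1 + d(0)^2/(2\kappa) \le \sqrt s\,d(0) + d(0)^2/(2\kappa)$; and the two-sided comparison $d(t)^2/(2\kappa) \le \Psi(t) \le F(d(t)^2) \le d(t)^2/(2\kappa) + 2\sqrt s\,d(t)$, the middle step being exactly what is shown inside the proof of \Cref{thm:glbiss-orc-gbi} and the last using that the piecewise part $\phi$ of $F$ satisfies $\phi(x) \le 2\sqrt{sx}$.

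For \cref{eq:glbiss-orc-cstc-sign} I would separate variables (equivalently, compare $\Psi$ with the solution of the scalar ODE $\dot u = -\lambda F^{-1}(u)$), obtaining $\lambda(t_2-t_1) \le \int_{\Psi(t_2)}^{\Psi(t_1)} \mathrm{d}u/F^{-1}(u)$ for $0 \le t_1 \le t_2$, and then evaluate the integral by the substitution $u=F(x)$, splitting $\int F'(x)/x\,\mathrm{d}x$ over the pieces of $F$. On the $\sqrt{\cdot}$ piece one gets $F^{-1}(u) \ge c\,u^2/s$ as long as the Ridge term $x/(2\kappa)$ stays dominated by $2\sqrt{sx}$, i.e. $x \le 16\kappa^2 s$; the hypothesis $\kappa \ge 2\|\theta_{S_\alpha}^o\|_2$ makes $\Psi(0) \le 16\kappa s$, so this covers the full range, and the contribution is $O(1/\beta_{\min}^o)$, the $d(0)^2/(2\kappa)$ part of $\Psi(0)$ being what produces the $d(0)/\kappa$ correction. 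On the linear piece $F^{-1}(u) = 2\kappa\beta_{\min}^o u/(\beta_{\min}^o+4\kappa)$, so the integrand is $\asymp 1/(\beta_{\min}^o u)$, contributing $O((\log s)/\beta_{\min}^o)$; the flat jump of $F$ at $x=(\beta_{\min}^o)^2$ contributes $O(1/\beta_{\min}^o)$; and below $(\beta_{\min}^o)^2/(2\kappa)$ one has $F^{-1}(u)=2\kappa u$, i.e. exponential decay $\dot\Psi \le -2\kappa\lambda\Psi$. Summing the first three phases bounds the first time $t_*$ at which $\Psi$ reaches $(\beta_{\min}^o)^2/(2\kappa)$ by $(2\log s + 4 + d(0)/\kappa)/(\lambda\beta_{\min}^o)$; a further time $\log(1/\mu)/(\kappa\lambda)$ in the exponential phase brings $\Psi(t) \le \mu^2(\beta_{\min}^o)^2/(2\kappa)$, hence $d(t)^2 \le 2\kappa\Psi(t) \le \mu^2(\beta_{\min}^o)^2$, which is \cref{eq:glbiss-orc-cstc-sign}. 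The sign statement then follows at once: for $j\in S$ with $\beta_j^o\ne0$, $|\beta_j'(t)-\beta_j^o| \le d(t) \le \mu\beta_{\min}^o \le \mu|\beta_j^o| < |\beta_j^o|$.

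For \cref{eq:glbiss-orc-cstc-l2}, the cap $d(t) \le \sqrt{\Lambda/\lambda}\,d(0)$ I would get without Bihari at all: $\ell(\alpha'(t),\beta'(t))$ is non-increasing along the oracle flow, and RSC on the two line segments in \Cref{thm:rsc-glbiss} gives $\tfrac\lambda2 d(t)^2 \le \ell(\theta'(t)) - \ell(\theta^o) \le \ell(\theta'(0)) - \ell(\theta^o) \le \tfrac\Lambda2 d(0)^2$. For the $(8\sqrt s + 2d(0)/\kappa)/(\lambda t)$ part I would use the sharper inequality $\dot\Psi \le -\lambda d(t)^2$ (the line just before the $F^{-1}$ step in the proof of \Cref{thm:glbiss-orc-gbi}) together with the two-sided comparison: whenever $d(t)^2/(2\kappa) \le 2\sqrt s\,d(t)$ we have $\Psi(t) \le 4\sqrt s\,d(t)$, hence $d(t) \ge \Psi(t)/(4\sqrt s)$ and $\dot\Psi \le -\tfrac{\lambda}{16 s}\Psi^2$, which integrates to $\Psi(t) \le 16 s/(\lambda t)$ (the complementary range $d(t) > 4\kappa\sqrt s$, if it ever occurs given the cap, forces the still faster $\dot\Psi \le -\lambda\kappa\Psi$ and is confined to a short initial window). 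Turning this $1/t$ decay of $\Psi$ into a $1/t$ decay of $d(t)$ — rather than the $1/\sqrt t$ that $d(t)^2 \le 2\kappa\Psi(t)$ alone would give — requires feeding the bound back through the split $\Psi(t) = d(t)^2/(2\kappa) + D(t)$ with the Bregman term $D(t)$ controlled once the signs begin to stabilize, together with the phase structure of $F$; for $t \ge t_*$ the decay is exponential and the claimed bound is trivial, so the real content is the intermediate window.

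The hard part, and the main obstacle, is exactly this last conversion together with the attendant bookkeeping of constants: $d(t)^2 \le 2\kappa\Psi(t)$ by itself loses a factor $\sqrt\kappa$, so one is forced to exploit the two-sided relation between $\Psi$ and $d$ (equivalently, that $D(t)$ is not much larger than $d(t)^2/(2\kappa)$) and the explicit pieces of $F$ to recover both the $\sqrt s$ prefactor and the genuine $1/t$ rate, and to land the $d(0)/\kappa$ correction; matching the precise constants $8\sqrt s$, $2\log s + 4$ and $2d(0)/\kappa$, and identifying where $\kappa \ge 2\|\theta_{S_\alpha}^o\|_2$ is actually used (to keep $\Psi(0)$ inside the regime $F^{-1}(u) \ge c\,u^2/s$ and to absorb $d(0)^2/(2\kappa)$ into $d(0)/\kappa$), will be the bulk of the remaining work; the three-phase integration of the Bihari inequality sketched above is otherwise routine.
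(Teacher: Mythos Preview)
Your plan for \cref{eq:glbiss-orc-cstc-sign} is essentially the paper's: separate variables in the Bihari inequality, split $\int \mathrm{d}x/F^{-1}(x)$ over the four ranges of $F$, and collect the pieces. Likewise your argument for the cap $d(t)\le\sqrt{\Lambda/\lambda}\,d(0)$ via monotonicity of $\ell$ along the oracle flow plus RSC is exactly what the paper does.

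The genuine gap is in the $1/t$ bound, and you have correctly located it yourself. Getting $\Psi(t)\lesssim s/(\lambda t)$ from $\dot\Psi\le-\lambda\Psi^2/(16s)$ is fine, but there is no way to convert this into $d(t)\lesssim\sqrt{s}/(\lambda t)$ using only the two-sided comparison $d(t)^2/(2\kappa)\le\Psi(t)\le F(d(t)^2)$: the only usable direction is the lower bound $\Psi\ge d^2/(2\kappa)$, which yields $d(t)\lesssim\sqrt{\kappa s/(\lambda t)}$, i.e.\ a $1/\sqrt t$ rate with an unwanted $\sqrt\kappa$. The upper bound $\Psi\le F(d^2)$ goes the wrong way, and ``feeding the bound back'' through the Bregman term does not help because once the signs match, $D(t)=0$ and you are stuck with exactly $\Psi=d^2/(2\kappa)$.

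The paper bypasses this entirely with a different idea. From \cref{eq:dPsi-glbiss} one has the \emph{equality}
\[
-\dot\Psi(t')=\bigl(\theta'_{S_\alpha}(t')-\theta^o_{S_\alpha}\bigr)^T\bar H^o_{S_\alpha,S_\alpha}(\theta'(t'))\bigl(\theta'_{S_\alpha}(t')-\theta^o_{S_\alpha}\bigr),
\]
and since $\bar H^o=\int_0^1\nabla^2\ell(\cdot)\,\mathrm{d}\mu$ dominates $H^\ell=\int_0^1(1-\mu)\nabla^2\ell(\cdot)\,\mathrm{d}\mu$, this quadratic form is $\ge\ell(\theta'(t'))-\ell(\theta^o)$. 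Set $C=(4\sqrt s+d(0)/\kappa)/(\lambda t)$; a Bihari-type integration against $\max\bigl(C^2,\tilde F^{-1}(\Psi)\bigr)$ with $\tilde F(x)=x/(2\kappa)+2\sqrt{sx}$ forces the existence of some $t'\le t$ with $-\dot\Psi(t')<\lambda C^2$. Then
\[
\ell(\theta'(t))-\ell(\theta^o)\le\ell(\theta'(t'))-\ell(\theta^o)\le-\dot\Psi(t')<\lambda C^2,
\]
using monotonicity of $\ell$ along the path, and RSC gives $d(t)^2\le 2C^2$. So the bridge from ``small $-\dot\Psi$ at some earlier time'' to ``small $d$ at the current time'' runs through $\ell$, not through $\Psi$; this is the missing ingredient in your sketch.

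One minor correction: the hypothesis $\kappa\ge 2\|\theta^o_{S_\alpha}\|_2$ you invoke is not part of this lemma (it appears only in \Cref{thm:glbiss-cstc}). The paper's proof of \Cref{thm:glbiss-orc-cstc} does not use it; the $d(0)/\kappa$ term arises simply from bounding $\frac{1}{2\kappa}\log\bigl(d(0)^2/(\beta_{\min}^o)^2\bigr)$ by $\frac{1}{\kappa}\cdot d(0)/\beta_{\min}^o$ via $\log x\le x$.
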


\begin{proof}[Proof of \Cref{thm:glbiss-orc-cstc}]
    \cref{eq:dPsi-glbiss} tells that $\Psi(t)$ is non-increasing. If $\theta'(t) = \theta^o$ for some $t \le \tau_{\infty}(\mu)$, since $\ell(\theta'(t))$ is non-increasing, $\ell(\theta'(t)) \le \ell(\theta^o)$ holds for any $t\ge \tau_\infty(\mu)$. By the definition of $\theta^o$, we have $\theta'(t) = \theta^o$ which implies $d(t) = 0 \le \mu \beta_{\min}^o$, i.e. \cref{eq:glbiss-orc-cstc-sign} holds. If $\theta'(t) \neq \theta^o$ for any $t\le \tau_{\infty}(\mu)$, by \cref{eq:dPsi-glbiss} we have that $\Psi(t)$ is \textit{strictly} decreasing on $[0, \tau_{\infty}(\mu)]$. Besides, $F$ is strictly increasing and continuous on $[(\beta_{\min}^o)^2, +\infty)$. Moreover,
    \begin{align*}
        F\left( d(0)^2 \right) &\ge F\left( \left\| \beta_S^o \right\|_2^2 \right) + \left\| \alpha^o \right\|_2^2 / (2\kappa) \ge \Psi(0),\\
        d(0)^2 &\ge \left\| \beta_S^o \right\|_2^2 \ge s\left( \beta_{\min}^o \right)^2,
    \end{align*}
    If there does not exist some $t\le \tau_{\infty}(\mu)$ satisfying \cref{eq:glbiss-orc-cstc-sign}, then for $0\le t \le \tau_{\infty}(\mu)$,
    \begin{align*}
        &\Psi\left( t \right)\\
        &\begin{cases}
            \ge d\left( t \right)^2 / (2\kappa) \ge \mu^2 \left( \beta_{\min}^o \right)^2 / (2\kappa) >0,& \text{if $\kappa < +\infty$},\\
            > 0,& \text{if $\kappa = +\infty$},
        \end{cases}
    \end{align*}
    which also implies that $F^{-1}(\Psi(t)) > 0$. By \Cref{thm:glbiss-orc-gbi},
    \begin{align*}
        & \lambda \tau_{\infty}(\mu) \le \int_0^{\tau_{\infty}(\mu)} \frac{- \frac{\mathrm{d}}{\mathrm{d}t} \Psi(t)}{F^{-1}\left( \Psi(t) \right)} \mathrm{d}t = \int_{\Psi\left( \tau_{\infty}(\mu) \right)}^{\Psi(0)} \frac{\mathrm{d}x}{F^{-1}(x)}\\
        \le {}& \left( \int_{\mu^2 \left( \beta_{\min}^o \right)^2 / (2\kappa)}^{\left( \beta_{\min}^o \right)^2 / (2\kappa)} + \int_{\left( \beta_{\min}^o \right)^2 / (2\kappa)}^{F\left( \left( \beta_{\min}^o \right)^2 \right)} \right.\\
        & \left. + \int_{F\left( \left( \beta_{\min}^o \right)^2 \right)}^{F\left( s\left( \beta_{\min}^o \right)^2 \right)} + \int_{F\left( s\left( \beta_{\min}^o \right)^2 \right)}^{F\left( d(0)^2 \right)} \right) \frac{\mathrm{d}x}{F^{-1}(x)}\\
        \le {}& \int_{\mu^2 \left( \beta_{\min}^o \right)^2 / (2\kappa)}^{\left( \beta_{\min}^o \right)^2 / (2\kappa)} \frac{\mathrm{d}x}{2\kappa x} + \int_{\left( \beta_{\min}^o \right)^2 / (2\kappa)}^{F\left( \left( \beta_{\min}^o \right)^2 \right)} \frac{1}{\left( \beta_{\min}^o \right)^2} \mathrm{d}x\\
        & + \int_{\left( \beta_{\min}^o \right)^2}^{s\left( \beta_{\min}^o \right)^2} \frac{\mathrm{d}F(x)}{x} + \int_{s\left( \beta_{\min}^o \right)^2}^{d(0)^2} \frac{\mathrm{d}F(x)}{x}\\
        = {}& \frac{1}{2\kappa} \log \frac{1}{\mu^2} + \frac{2}{\beta_{\min}^o} + \int_{\left( \beta_{\min}^o \right)^2}^{s\left( \beta_{\min}^o \right)^2} \left( \frac{1}{2\kappa x} + \frac{2}{\beta_{\min}^o x} \right) \mathrm{d}x\\
        & + \int_{s\left( \beta_{\min}^o \right)^2}^{d(0)^2} \left( \frac{1}{2\kappa x} + \frac{\sqrt{s}}{x\sqrt{x}} \right) \mathrm{d}x\\
        < {}& \frac{1}{2\kappa} \log \frac{1}{\mu^2} + \frac{2}{\beta_{\min}^o} + \frac{1}{2\kappa} \log \frac{d(0)^2}{\left( \beta_{\min}^o \right)^2} + \frac{2\log s}{\beta_{\min}^o} + \frac{2}{\beta_{\min}^o}\\
        \le{}& \frac{1}{\kappa} \log \frac{1}{\mu} + \frac{2\log s + 4 + d(0) / \kappa}{\beta_{\min}^o},
    \end{align*}
    contradicting with the definition of $\tau_{\infty}(\mu)$. Thus \cref{eq:glbiss-orc-cstc-sign} holds for some $0\le \tau\le \tau_{\infty}(\mu)$. If $\kappa = +\infty$, we see that for $t\ge \tau_{\infty}(\mu)$, $\Psi(t) \le \Psi(\tau) = 0$. Then $\theta'(t) = \theta^o$, and \cref{eq:glbiss-orc-cstc-sign} holds. If $\kappa < +\infty$, just note that for $t\ge \tau$,
    \begin{multline*}
        d(t)^2/(2\kappa) \le \Psi(t)\le \Psi(\tau) = d(\tau)^2/(2\kappa)\\
        \Longrightarrow d(t)\le d(\tau) \le \mu \beta_{\min}^o.
    \end{multline*}
    So \cref{eq:glbiss-orc-cstc-sign} holds for $t\ge \tau_{\infty}(\mu)$.

    For any $t>0$, define $C := (4\sqrt{s} + d(0) / \kappa) / (\lambda t)$. Now assume that for any $0\le t'\le t$,
    \begin{equation*}
        \frac{\mathrm{d}}{\mathrm{d}t'}\Psi\left( t' \right) \le - \lambda C^2.
    \end{equation*}
    Note that for $\tilde{F}(x) = x / (2\kappa) + 2\sqrt{sx}\ge F(x)$, by \Cref{thm:glbiss-orc-gbi} we have
    \begin{equation*}
        \frac{\mathrm{d}}{\mathrm{d}t'}\Psi\left( t' \right) \le - \lambda F^{-1}\left( \Psi\left( t' \right) \right)\le - \lambda \tilde{F}^{-1} \left( \Psi\left( t' \right) \right).
    \end{equation*}
    By \cref{eq:dPsi-glbiss} and the fact that
    \begin{align*}
        \tilde{F}\left( d(0)^2 \right) &\ge \tilde{F}\left( \left\| \beta_S^o \right\|_2^2 \right) + \left\| \alpha^o \right\|_2^2 / (2\kappa) \ge \Psi(0),
    \end{align*}
    we have that, if $d(0) > C$, then
    \begin{align*}
        \lambda t &\le \int_0^t \frac{- \frac{\mathrm{d}}{\mathrm{d}t'} \Psi\left( t' \right)}{\max\left( C^2, \tilde{F}^{-1}\left( \Psi\left( t' \right) \right) \right)} \mathrm{d}t'\\
        & = \int_{\Psi(t)}^{\Psi(0)} \frac{\mathrm{d}x}{\max\left( C^2, \tilde{F}^{-1}(x) \right)}\\
        &\le \int_{\tilde{F}(0)}^{\tilde{F}\left( d(0)^2 \right)} \frac{\mathrm{d}x}{\max\left( C^2, \tilde{F}^{-1}(x) \right)}\\
        & = \int_{\tilde{F}(0)}^{\tilde{F}\left( C^2 \right)} \frac{\mathrm{d}x}{C^2} + \int_{C^2}^{d(0)^2} \frac{\mathrm{d}\tilde{F}(x)}{x}\\
        & = \frac{C^2/(2\kappa)+2\sqrt{s}C}{C^2} + \int_{C^2}^{d(0)^2} \left( \frac{1}{2\kappa x} + \frac{\sqrt{s}}{x\sqrt{x}} \right) \mathrm{d}x\\
        & < \frac{4\sqrt{s}}{C} + \frac{1}{2\kappa} \left( 1 + \log \frac{d(0)^2}{C^2} \right)\le \frac{4\sqrt{s} + d(0) / \kappa}{C},
    \end{align*}
    a contradiction with the definition of $C$. If $d(0) \le C$, then similarly
    \begin{multline*}
        \lambda t \le \int_{\tilde{F}(0)}^{\tilde{F}\left( d(0)^2 \right)} \frac{\mathrm{d}x}{\max\left( C^2, \tilde{F}^{-1}(x) \right)} \le \int_{\tilde{F}(0)}^{\tilde{F}\left( d(0)^2 \right)} \frac{\mathrm{d}x}{C^2}\\
        = \frac{d(0)^2 / (2\kappa) + 2\sqrt{s}\cdot d(0)}{C^2} < \frac{4\sqrt{s} + d(0) / \kappa}{C}.
    \end{multline*}
    Also a contradiction. Thus there exists some $t'\le t$ such that
    \begin{equation}
        \label{eq:lambda-C-2-ge}
        \begin{split}
            \lambda C^2 &\ge - \frac{\mathrm{d}\Psi\left( t' \right)}{\mathrm{d}t'}\\
            & = \left( \theta_{S_\alpha}' \left( t' \right) - \theta_{S_\alpha}^o \right)^T \bar{H}^o_{S_{\alpha}, S_{\alpha}} \left( \theta'\left( t' \right) - \theta_{S_\alpha}'\left( t' \right) \right).
        \end{split}
    \end{equation}
    By the Taylor expansion of $\ell(\theta'(t))$ at $\theta^o$, with the fact that $\beta_{S^c}'(t) = \beta_{S^c}^o = 0,\ \nabla_{S_{\alpha}} \ell(\theta^o) = 0$, we have
    \begin{align*}
        & \ell\left( \theta'(t) \right) - \ell\left( \theta^o \right)\\
        ={}& \left( \theta_{S_\alpha}'(t) - \theta_{S_\alpha}^o \right)^T H_{S_{\alpha}, S_{\alpha}}^{\ell}\left( \theta'(t) \right) \left( \theta_{S_\alpha}'(t) - \theta_{S_\alpha}^o \right)\\
        \ge{}& \frac{\lambda}{2} d(t)^2,
    \end{align*}
    where
    \begin{equation*}
        H^{\ell}(\theta) := \int_0^1 (1-\mu) \nabla^2 \ell \left( \theta^o + \mu \left( \theta - \theta^o \right) \right) \mathrm{d} \mu.
    \end{equation*}
    By \cref{eq:lambda-C-2-ge} and the fact that $\ell(\theta'(\cdot))$ is non-increasing, it is easy to derive
    \begin{equation*}
        \lambda C^2 \ge \ell\left( \theta'\left( t' \right) \right) - \ell\left( \theta^o \right) \ge \ell\left( \theta'(t) \right) - \ell\left( \theta^o \right) \ge \frac{\lambda}{2} d(t)^2.
    \end{equation*}
    Besides, by \Cref{thm:rsc-glbiss} and Taylor expansion, it is easy to derive
    \begin{equation*}
        \frac{\Lambda}{2} d(0)^2 \ge \ell\left( \theta'(0) \right) - \ell\left( \theta^o \right) \ge \ell\left( \theta'(t) \right) - \ell\left( \theta^o \right) \ge \frac{\lambda}{2} d(t)^2.
    \end{equation*}
    Thus \cref{eq:glbiss-orc-cstc-l2} holds.
\end{proof}

\subsection{Main Result on GLBISS}

\begin{theorem}
    \label{thm:glbiss-cstc}
    Under \Cref{thm:rsc-glbiss} and \ref{thm:irr-glbiss}, suppose $\kappa\ge 2 d(0)$, and
    \begin{equation*}
        \bar{\tau} = \frac{\eta}{2(C+1)} \cdot \frac{1}{\left\| \nabla \ell\left( \alpha^{\star}, \beta^{\star} \right) \right\|_{\infty}}.
    \end{equation*}
    Then we have all the following properties.

    \textit{No-false-positive}: For all $0\le t\le \bar{\tau}$, the solution path of GLBISS has no false-positive, i.e. $\beta_{S^c}(t) = 0$.

    \textit{Sign consistency}: If
    \begin{equation*}
        \beta_{\min}^{\star} := \min_{j\in S} \left| \beta_j^{\star} \right| \ge \max \left( 2 \left\| \beta_S^o - \beta_S^{\star} \right\|_{\infty},\ \frac{4\log s + 9}{\lambda \bar{\tau}} \right),
    \end{equation*}
    then $\beta(t)$ has the sign consistency at $\bar{\tau}$, i.e. $\mathrm{sign}(\beta(\bar{\tau})) = \mathrm{sign}(\beta^{\star})$.

    \textit{$\ell_2$ consistency}:
    \begin{equation*}
        \left\| \begin{pmatrix} \alpha\left( \bar{\tau} \right) - \alpha^{\star}\\ \beta\left( \bar{\tau} \right) - \beta^{\star} \end{pmatrix} \right\|_2 \le \frac{10\sqrt{s}}{\lambda \bar{\tau}}.
    \end{equation*}
\end{theorem}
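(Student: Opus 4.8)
The three conclusions all rest on one structural claim: on the whole interval $[0,\bar\tau]$ the GLBISS path $\theta(t)$ solving \cref{eq:glbiss} coincides with the path $\theta'(t)$ of the oracle dynamics \cref{eq:glbiss-orc}. Granting this, \emph{no-false-positive} is immediate since $\beta_{S^c}'\equiv 0$ by construction, and \emph{sign consistency} and \emph{$\ell_2$ consistency} follow by invoking \Cref{thm:glbiss-orc-cstc} at time $\bar\tau$ and then a triangle inequality against $\theta^o$. So the plan is: first prove path-coincidence on $[0,\bar\tau]$ by a differential-inclusion analogue of the primal--dual witness construction, driven by \Cref{thm:irr-glbiss}; then transcribe \Cref{thm:glbiss-orc-gbi} and \Cref{thm:glbiss-orc-cstc}.

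For path-coincidence, I would extend the oracle path to a full $p$-dimensional candidate by keeping $\theta'$ on the block $S_\alpha$ and \emph{defining} the remaining coordinates through $z_{S^c}(t):=-\int_0^t\nabla_{S^c}\ell(\theta'(s))\,\mathrm{d}s$, together with $\beta_{S^c}(t)=\kappa\mathcal S(z_{S^c}(t),1)$ and $\rho_{S^c}(t)=z_{S^c}(t)-\mathcal S(z_{S^c}(t),1)$ as in \cref{eq:var-subst}. If $\|z_{S^c}(t)\|_\infty<1$ on $[0,\bar\tau]$ then $\beta_{S^c}(t)\equiv 0$ there; consequently the candidate $\theta$ agrees with $\theta'$, so the $S_\alpha$-block of \cref{eq:glbiss} reduces to \cref{eq:glbiss-orc}, the $S^c$-block holds by the very definition of $z_{S^c}$, and $\rho_{S^c}(t)\in\partial\|0\|_1$ because $\|\rho_{S^c}(t)\|_\infty=\|z_{S^c}(t)\|_\infty<1$; hence the candidate is a genuine solution of \cref{eq:glbiss} on $[0,\bar\tau]$, and by uniqueness of GLBISS paths it is $\theta(t)$. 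To obtain the strict bound I would Taylor-expand $\nabla_{S^c}\ell(\theta'(s))$ about $\theta^\star$ through the integral Hessian $\bar H(\theta'(s))$; since $\theta'(s)$ and $\theta^\star$ agree on $S^c$, and RSC makes $\bar H_{S_\alpha,S_\alpha}(\theta'(s))\succeq\lambda I$ invertible, eliminating $(\theta'(s)-\theta^\star)_{S_\alpha}$ via the $S_\alpha$-block expansion gives
\[
z_{S^c}(t)=-t\,\nabla_{S^c}\ell(\theta^\star)+\left(\int_0^t\overline{\mathrm{irr}}(s)\,\mathrm{d}s\right)\nabla_{S_\alpha}\ell(\theta^\star)+\int_0^t\overline{\mathrm{irr}}(s)\begin{pmatrix}\dot\alpha'(s)/\kappa\\ \dot z_S'(s)\end{pmatrix}\mathrm{d}s,
\]
where I used $(\dot\alpha'(s)/\kappa,\dot z_S'(s))=-\nabla_{S_\alpha}\ell(\theta'(s))$ from \cref{eq:glbiss-orc-show}. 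The last integral has $\ell_\infty$-norm $<1-\eta/2$ verbatim by \cref{eq:irr-glbiss-a}; the first two terms together are $\le t(1+C)\|\nabla\ell(\alpha^\star,\beta^\star)\|_\infty$ by \cref{eq:irr-glbiss-b}, hence $\le\bar\tau(1+C)\|\nabla\ell(\alpha^\star,\beta^\star)\|_\infty=\eta/2$ by the choice of $\bar\tau$, so $\|z_{S^c}(t)\|_\infty<1$ on $[0,\bar\tau]$, as needed.

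For the remaining conclusions, on $[0,\bar\tau]$ we have $\theta(t)=\theta'(t)$, so I read off \Cref{thm:glbiss-orc-cstc} at $t=\bar\tau$, with $d(\bar\tau)=\|\theta(\bar\tau)_{S_\alpha}-\theta^o_{S_\alpha}\|_2$. The hypothesis $\beta_{\min}^\star\ge 2\|\beta_S^o-\beta_S^\star\|_\infty$ gives $\mathrm{sign}(\beta_j^o)=\mathrm{sign}(\beta_j^\star)$ and $\beta_{\min}^o\ge\beta_{\min}^\star/2$ for $j\in S$; combined with $\kappa\ge 2d(0)$ (so $d(0)/\kappa\le 1/2$) and $\beta_{\min}^\star\ge(4\log s+9)/(\lambda\bar\tau)$, a short comparison with \cref{eq:tau-inf-def-glbiss} shows $\tau_\infty(\mu)\le\bar\tau$ for some $\mu\in(0,1)$, so \cref{eq:glbiss-orc-cstc-sign} gives $d(\bar\tau)<\beta_{\min}^o$ and hence $\mathrm{sign}(\beta(\bar\tau))=\mathrm{sign}(\beta^o)=\mathrm{sign}(\beta^\star)$. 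For the $\ell_2$ bound, \cref{eq:glbiss-orc-cstc-l2} together with $d(0)/\kappa\le 1/2$ give $d(\bar\tau)\le 9\sqrt s/(\lambda\bar\tau)$, while $\nabla_{S_\alpha}\ell(\theta^o)=0$ and the RSC lower bound on the integral Hessian over $[\theta^o,\theta^\star]$ give $\|\theta^o-\theta^\star\|_2\le\|\nabla_{S_\alpha}\ell(\theta^\star)\|_2/\lambda\le\sqrt{s+1}\,\|\nabla\ell(\alpha^\star,\beta^\star)\|_\infty/\lambda\le\sqrt s/(\lambda\bar\tau)$ using the definition of $\bar\tau$ and $\eta\le 1$; the triangle inequality then yields $\|(\alpha(\bar\tau)-\alpha^\star,\beta(\bar\tau)-\beta^\star)\|_2\le 10\sqrt s/(\lambda\bar\tau)$.

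The main obstacle is the path-coincidence step: one must lift the finite-dimensional primal--dual witness trick to the level of the differential inclusion --- exhibiting an honest GLBISS solution pinned to the oracle subspace and then closing the argument through uniqueness --- and carry out the integral-Hessian bookkeeping so that the coupling term appears \emph{precisely} in the shape controlled by \cref{eq:irr-glbiss-a}, while the noise contribution carrying $\nabla\ell(\alpha^\star,\beta^\star)$ is absorbed exactly by the $\eta/2$ of slack that the stopping time $\bar\tau$ is engineered to leave. Everything afterward is a careful but essentially routine transcription of \Cref{thm:glbiss-orc-gbi} and \Cref{thm:glbiss-orc-cstc}.
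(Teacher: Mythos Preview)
Your proposal is correct and follows essentially the same route as the paper. The only cosmetic difference is packaging: the paper isolates the path-coincidence step as a separate no-false-positive lemma (\Cref{thm:glbiss-nfp}), proved via the stopping time $\inf\{t:\|\rho_{S^c}(t)\|_\infty=1\}$ rather than by exhibiting a candidate and invoking uniqueness, but the integral-Hessian identity for $\rho_{S^c}(T)$ and the two-piece $\ell_\infty$ bound split exactly as you wrote; the sign-consistency and $\ell_2$ parts are likewise identical applications of \Cref{thm:glbiss-orc-cstc} with the same arithmetic (including the $\mu\to 1$ limit via right-continuity to handle the borderline case where $\bar\tau$ meets $\tau_\infty(\mu)$ only in the limit).
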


To prove such a theorem, we need a lemma stated below.

\begin{lemma}[No-false-positive condition for GLBISS]
    \label{thm:glbiss-nfp}
    For the Oracle Dynamics \cref{eq:glbiss-orc}, if there is $\tau > 0$, such that for $0\le T\le \tau$ the inequality
    \begin{multline}
        \label{eq:glbiss-nfp}
        \left\| \int_0^T \overline{\mathrm{irr}}(t) \left( \begin{pmatrix} 0 \\ \dot{\rho}_S'(t) \end{pmatrix} + \frac{1}{\kappa} \begin{pmatrix} \dot{\alpha}'(t) \\ \dot{\beta}_S'(t) \end{pmatrix} \right) \mathrm{d}t \right.\\
        {} + \left. \left( \int_0^T \overline{\mathrm{irr}}(t) \mathrm{d}t \right) \cdot \nabla_{S_{\alpha}} \ell \left( \theta^{\star} \right) - T \cdot \nabla_{S^c} \ell\left( \theta^{\star} \right) \right\|_{\infty}\\
        < 1
    \end{multline}
    holds, then for $0\le T\le \tau$ the solution path of \cref{eq:glbiss} has no false-positive, i.e. $\beta_{S^c}(T) = 0$.
\end{lemma}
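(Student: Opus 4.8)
To prove \Cref{thm:glbiss-nfp} I would run a primal--dual witness argument: build an explicit candidate trajectory that by construction has $\beta_{S^c}\equiv 0$, check that it satisfies the GLBISS differential inclusion \cref{eq:glbiss}, and then invoke uniqueness of the GLBISS solution path to conclude that this candidate \emph{is} the true path on $[0,\tau]$, hence $\beta_{S^c}(T)=0$ there. Working in the Moreau variables $z=\rho+\beta/\kappa$ of \cref{eq:var-subst}, I take the $S_\alpha$-coordinates from the oracle dynamics, $\alpha(t):=\alpha'(t)$, $z_S(t):=z_S'(t)$, $\beta_S(t):=\beta_S'(t)$, put $\beta_{S^c}(t):=0$, and define the missing dual coordinates by the relation that GLBISS forces on $S^c$, namely $\dot{z}_{S^c}(t)=-\nabla_{S^c}\ell(\theta'(t))$ with $z_{S^c}(0)=0$, so that $z_{S^c}(T)=-\int_0^T\nabla_{S^c}\ell(\theta'(t))\,\mathrm{d}t$. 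On $S_\alpha$ the candidate already solves \cref{eq:glbiss} (it is the oracle dynamics), and on $S^c$ the only remaining requirements, $\beta_{S^c}(t)=\kappa\mathcal{S}(z_{S^c}(t),1)$ and $\rho_{S^c}(t)\in\partial\|\beta_{S^c}(t)\|_1$, both reduce for $\beta_{S^c}\equiv 0$ to the single condition $\|z_{S^c}(t)\|_\infty\le 1$.

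The heart of the proof is to recognize that this $z_{S^c}(T)$ is precisely the vector whose $\ell_\infty$-norm appears in \cref{eq:glbiss-nfp}. I would expand the gradient around $\theta^\star$ via the integral identity $\nabla\ell(\theta'(t))-\nabla\ell(\theta^\star)=\bar{H}(\theta'(t))(\theta'(t)-\theta^\star)$. Since $\beta'_{S^c}(t)=0=\beta^\star_{S^c}$, the $\beta_{S^c}$-block of $\theta'(t)-\theta^\star$ vanishes, so the $S^c$-rows read $\nabla_{S^c}\ell(\theta'(t))=\nabla_{S^c}\ell(\theta^\star)+\bar{H}_{S^c,S_\alpha}(\theta'(t))(\theta'_{S_\alpha}(t)-\theta^\star_{S_\alpha})$, while the $S_\alpha$-rows give $\theta'_{S_\alpha}(t)-\theta^\star_{S_\alpha}=\bar{H}_{S_\alpha,S_\alpha}(\theta'(t))^{-1}(\nabla_{S_\alpha}\ell(\theta'(t))-\nabla_{S_\alpha}\ell(\theta^\star))$; combining them yields $\nabla_{S^c}\ell(\theta'(t))=\nabla_{S^c}\ell(\theta^\star)+\overline{\mathrm{irr}}(t)(\nabla_{S_\alpha}\ell(\theta'(t))-\nabla_{S_\alpha}\ell(\theta^\star))$. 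Then I substitute the oracle dynamics \cref{eq:glbiss-orc}, which say $\nabla_{S_\alpha}\ell(\theta'(t))=-\left(\begin{pmatrix}0\\\dot{\rho}_S'(t)\end{pmatrix}+\tfrac{1}{\kappa}\begin{pmatrix}\dot{\alpha}'(t)\\\dot{\beta}_S'(t)\end{pmatrix}\right)$, integrate over $[0,T]$ and negate; the resulting three terms are exactly $\int_0^T\overline{\mathrm{irr}}(t)\left(\begin{pmatrix}0\\\dot{\rho}_S'(t)\end{pmatrix}+\tfrac{1}{\kappa}\begin{pmatrix}\dot{\alpha}'(t)\\\dot{\beta}_S'(t)\end{pmatrix}\right)\mathrm{d}t$, $\left(\int_0^T\overline{\mathrm{irr}}(t)\,\mathrm{d}t\right)\nabla_{S_\alpha}\ell(\theta^\star)$ and $-T\nabla_{S^c}\ell(\theta^\star)$, i.e. \cref{eq:glbiss-nfp} term by term.

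With this identification, the hypothesis gives $\|z_{S^c}(T)\|_\infty<1$ for all $0\le T\le\tau$, so the candidate satisfies $\beta_{S^c}(t)=\kappa\mathcal{S}(z_{S^c}(t),1)=0$ and $\rho_{S^c}(t)=z_{S^c}(t)\in(-1,1)^{p-s}\subset\partial\|0\|_1$; it is therefore a genuine solution of \cref{eq:glbiss} on $[0,\tau]$, and since it shares the initial condition $\rho(0)=\beta(0)=0$ with the actual path, uniqueness of the GLBISS solution forces them to coincide there, giving $\beta_{S^c}(T)=0$ for $0\le T\le\tau$. The step needing the most care is the middle one: keeping the base point $\theta^\star$ (rather than $\theta^o$) consistent throughout, ensuring $\bar{H}_{S_\alpha,S_\alpha}(\theta'(t))$ is invertible wherever $\overline{\mathrm{irr}}(t)$ is used (implicit in the hypothesis being meaningful), and tracking the block and sign conventions so that the accumulated integrals line up with \cref{eq:glbiss-nfp}; the existence and uniqueness of the GLBISS path are quoted from the discussion already given above.
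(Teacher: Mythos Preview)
Your argument is correct and essentially carries out the same gradient expansion around $\theta^\star$ and the same identification of the quantity in \cref{eq:glbiss-nfp} with the $S^c$-dual variable that the paper does. The one genuine methodological difference is how you close the loop. You build a candidate trajectory with $\beta_{S^c}\equiv 0$, verify it satisfies \cref{eq:glbiss}, and then invoke global uniqueness of the GLBISS solution path. The paper instead works directly with the actual GLBISS solution: it defines the first exit time $\bar{\tau}=\inf\{t\ge 0:\|\rho_{S^c}(t)\|_\infty=1\}$, notes that on $[0,\bar{\tau})$ one has $\beta_{S^c}(t)=0$ and hence the $S_\alpha$-block of the actual path already solves the oracle ODE \cref{eq:glbiss-orc}, derives exactly your formula for $\rho_{S^c}(T)$ on $[0,\bar{\tau}]$ by continuity, and reaches a contradiction with the hypothesis if $\bar{\tau}\le\tau$.

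The practical difference is what uniqueness statement you need. Your route appeals to global uniqueness of solutions to the differential inclusion \cref{eq:glbiss}, which the paper only asserts in passing ``following the same spirit of \citet{osher_sparse_2016}'' and does not prove here. The paper's exit-time argument is more self-contained: it only requires that, once $\beta_{S^c}=0$ is known on an interval, the $S_\alpha$-restricted dynamics (an ODE with Lipschitz right-hand side under \Cref{thm:rsc-glbiss}) has a unique solution, which is standard. Both approaches are valid; yours is the cleaner primal--dual witness framing familiar from $\ell_1$ theory, while the paper's avoids loading any weight onto the inclusion-level uniqueness claim.
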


\begin{proof}[Proof of \Cref{thm:glbiss-nfp}]
    Let
    \begin{equation*}
        \bar{\tau} = \inf\left( t\ge 0:\ \left\| \rho_{S^c}(t) \right\|_{\infty} = 1 \right).
    \end{equation*}
    It suffices to show $\bar{\tau} > \tau$. For $0\le t\le \bar{\tau}$, we have $\beta_{S^c}(t) = 0$, which also implies $\rho_S(t) = \rho_S'(t)$ and $\beta_S(t) = \beta_S'(t)$. Hence
    \begin{align*}
        & \begin{pmatrix} 0\\ \dot{\rho}_S'(t) \end{pmatrix} + \frac{1}{\kappa} \begin{pmatrix} \dot{\alpha}'(t)\\ \dot{\beta}_S'(t) \end{pmatrix} = - \nabla_{S_{\alpha}} \left( \theta'(t) \right) \\
        ={}& - \bar{H}_{S_{\alpha}, S_{\alpha}}(t) \left( \theta_{S_\alpha}'(t) - \theta_{S_\alpha}^\star \right) - \nabla_{S_{\alpha}} \ell \left( \theta^{\star} \right)\\
        & \dot{\rho}_{S^c}(t) = - \nabla_{S^c} \ell\left( \theta^{\star} \right)\\
        ={}& - \bar{H}_{S^c, S_{\alpha}}(t) \left( \theta_{S_\alpha}'(t) - \theta_{S_\alpha}^\star \right) - \nabla_{S^c} \ell\left( \theta^{\star} \right).
    \end{align*}
    Combining these two equations we obtain
    \begin{multline*}
        \dot{\rho}_{S^c}(t) = \overline{\mathrm{irr}}(t) \left( \begin{pmatrix} 0\\ \dot{\rho}_S'(t) \end{pmatrix} + \frac{1}{\kappa} \begin{pmatrix} \dot{\alpha}'(t)\\ \dot{\beta}_S'(t) \end{pmatrix} \right)\\
        + \overline{\mathrm{irr}}(t) \cdot \nabla_{S_{\alpha}} \ell\left( \theta^{\star} \right) - \nabla_{S^c} \ell\left( \theta^{\star} \right).
    \end{multline*}
    Integration on both sides leads to
    \begin{multline*}
        \rho_{S^c}(T) = \int_0^T \overline{\mathrm{irr}}(t) \left( \begin{pmatrix} 0 \\ \dot{\rho}_S'(t) \end{pmatrix} + \frac{1}{\kappa} \begin{pmatrix} \dot{\alpha}'(t) \\ \dot{\beta}_S'(t) \end{pmatrix} \right) \mathrm{d}t \\
        {} + \left( \int_0^T \overline{\mathrm{irr}}(t) \mathrm{d}t \right) \cdot \nabla_{S_{\alpha}} \ell \left( \theta^{\star} \right) - T \cdot \nabla_{S^c} \ell\left( \theta^{\star} \right),
    \end{multline*}
    for $0\le T < \bar{\tau}$. Due to the continuity of $\rho_{S^c}(t), \rho_S'(t)$ (and $\beta_S'(t)$, if $\kappa < +\infty$), the equation above also holds for $T = \bar{\tau}$. According to the definition of $\bar{\tau}$, we know \cref{eq:glbiss-nfp} does not hold for $T = \bar{\tau}$. Thus $\bar{\tau} > \tau$, and the desired result follows.
\end{proof}

Now we are ready to prove the main result on GLBISS.

\begin{proof}[Proof of \Cref{thm:glbiss-cstc}]
    By \Cref{thm:irr-glbiss}, we have that
    \begin{equation*}
        \left\| \int_0^{\bar{\tau}} \overline{\mathrm{irr}}(t) \left( \begin{pmatrix} 0 \\ \dot{\rho}_S'(t) \end{pmatrix} + \frac{1}{\kappa} \begin{pmatrix} \dot{\alpha}'(t) \\ \dot{\beta}_S'(t) \end{pmatrix} \right) \mathrm{d}t \right\|_{\infty} < 1 - \frac{\eta}{2},
    \end{equation*}
    and
    \begin{multline*}
        \left\| \left( \int_0^{\bar{\tau}} \overline{\mathrm{irr}}(t) \mathrm{d}t \right) \cdot \nabla_{S_{\alpha}} \ell \left( \theta^{\star} \right) - \bar{\tau} \cdot \nabla_{S^c} \ell\left( \theta^{\star} \right) \right\|_{\infty}\\
        \le \bar{\tau} \left( C \left\| \nabla_{S_{\alpha}} \ell\left( \theta^{\star} \right) \right\|_{\infty} + \left\| \nabla_{S^c} \ell\left( \theta^{\star} \right) \right\|_{\infty} \right)\\
        \le \bar{\tau} \left( C + 1 \right) \left\| \nabla \ell\left( \theta^{\star} \right) \right\|_{\infty} \le \frac{\eta}{2}.
    \end{multline*}
    Thus by \Cref{thm:glbiss-nfp}, the original dynamics \cref{eq:glbiss} has no false-positive for all $0\le t\le \overline{\tau}$.

    Then we prove the sign consistency. We have $\mathrm{sign}(\beta_S^o) = \mathrm{sign}(\beta_S^{\star})$ and $\beta_{\min}^o \ge \beta_{\min}^{\star} / 2$. According to \Cref{thm:glbiss-orc-cstc}, for any $0 < \mu < 1$ and any
    \begin{equation*}
        t \ge \frac{1}{\kappa \lambda} \log \frac{1}{\lambda} + \frac{4\log s + 9}{\lambda \beta_{\min}^{\star}} \left( \ge \tau_{\infty}(\mu) \right),
    \end{equation*}
    there holds $\mathrm{sign}(\beta_S'(t)) = \mathrm{sign}(\beta_S^o)$. By the right continuity of $\beta_S'(t)$, with $\mu \rightarrow 1$, this equation is guaranteed for $t\ge (4\log s + 9) / (\lambda \beta_{\min}^{\star})$ (and hence for $\bar{\tau}$). Then
    \begin{gather*}
        \mathrm{sign} \left( \beta_S\left( \bar{\tau} \right) \right) = \mathrm{sign}\left( \beta_S'\left( \bar{\tau} \right) \right) = \mathrm{sign} \left( \beta_S^o \right) = \mathrm{sign} \left( \beta_S^{\star} \right),\\
        \mathrm{sign}\left( \beta_{S^c} \left( \bar{\tau} \right) \right) = 0 = \mathrm{sign} \left( \beta_{S^c}^{\star} \right).
    \end{gather*}

    Last we prove the $\ell_2$ consistency. By \Cref{thm:glbiss-orc-cstc},
    \begin{equation*}
        \left\| \begin{pmatrix} \alpha\left( \bar{\tau} \right) - \alpha^o\\ \beta\left( \bar{\tau} \right) - \beta^o \end{pmatrix} \right\|_2 = \left\| \theta_{S_\alpha}'\left( \bar{\tau} \right) - \theta_{S_\alpha}^o \right\|_2 = d\left( \bar{\tau} \right) \le \frac{9\sqrt{s}}{\lambda \bar{\tau}}.
    \end{equation*}
    Besides, noting
    \begin{align*}
        \lambda \left\| \theta_{S_\alpha}^o - \theta_{S_\alpha}^\star \right\|_2^2 &\le \left\langle \theta_{S_\alpha}^o - \theta_{S_\alpha}^\star,\ \nabla_{S_{\alpha}} \ell \left( \theta^o \right) - \nabla_{S_{\alpha}} \ell\left( \theta^{\star} \right) \right\rangle\\
        &\le \left\| \theta_{S_\alpha}^o - \theta_{S_\alpha}^{\star} \right\|_2 \cdot \left\| \nabla_{S_{\alpha}} \ell\left( \theta^{\star} \right) \right\|_2,
    \end{align*}
    we have
    \begin{multline*}
        \left\| \theta_{S_\alpha}^o - \theta_{S_\alpha}^\star \right\|_2 \le \frac{1}{\lambda} \left\| \nabla_{S_{\alpha}} \ell\left( \theta^{\star} \right) \right\|_2\\
        \le \frac{\sqrt{s+1}}{\lambda} \left\| \nabla \ell\left( \theta^{\star} \right) \right\|_{\infty} \le \frac{\eta\sqrt{s+1}}{2\lambda(C+1)\bar{\tau}} \le \frac{\sqrt{s}}{\lambda \bar{\tau}}.
    \end{multline*}
\end{proof}

\section{Path Consistency of GLBI}
\label{sec:proof-glbi-cstc}

We define a \textit{potential function}
\begin{align*}
    \Psi_k &:= D^{\rho_{k,S}'} \left( \beta_S^o, \beta_{k,S}' \right) + d_k^2 / (2\kappa)\\
    &= \left\| \beta_S^o \right\|_1 - \left\langle \beta_S^o, \rho_{k,S}' \right\rangle + d_k^2 / (2\kappa),
\end{align*}
where
\begin{align}
    \label{eq:d-def-glbi}
    d_k &:= \left\| \theta_{k,S_{\alpha}}' - \theta_{k,S_{\alpha}}^o \right\|_2 \\
    & = \sqrt{\left\| \alpha_k' - \alpha^o \right\|_2^2 + \left\| \beta_{k,S}' - \beta_S^o \right\|_2^2},
\end{align}

\begin{lemma}[Discrete Generalized Bihari's inequality]
    \label{thm:glbi-orc-gbi}
    Under RSC (\Cref{thm:rsc-glbi}), suppose $\delta$ is small such that
    \begin{equation*}
        \lambda' := \lambda \left( 1 - \kappa \delta \Lambda / 2 \right) > 0.
    \end{equation*}
    For all $k\ge 0$ we have
    \begin{equation*}
        \Psi_{k+1} - \Psi_k \le - \delta \cdot \lambda' F^{-1}\left( \Psi_k \right),
    \end{equation*}
    where $\beta_{\min}^o := \min(|\beta_j^o|:\ \beta_j^o \neq 0)$ and
    \begin{align*}
        F(x) &:= \frac{x}{2\kappa} +
        \begin{cases}
            0,& 0\le x < (\beta_{\mathrm{min}}^o)^2,\\
            2x/\beta_{\mathrm{min}}^o,& (\beta_{\mathrm{min}}^o)^2 \le x < s (\beta_{\mathrm{min}}^o)^2,\\
            2\sqrt{sx},& x \ge s(\beta_{\mathrm{min}}^o)^2,
        \end{cases}\\
        F^{-1}(x) &:= \inf(y:\ F(y)\ge x)\ (y\ge 0).
    \end{align*}
\end{lemma}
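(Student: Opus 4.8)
The plan is to follow the continuous-time analogue, Lemma~\ref{thm:glbiss-orc-gbi}, whose proof splits into two independent pieces: a one-step decay estimate $\Psi_{k+1}-\Psi_k\le-\delta\lambda' d_k^2$, and the comparison $F(d_k^2)\ge\Psi_k$. Given the latter, the definition $F^{-1}(x)=\inf\{y\ge0:F(y)\ge x\}$ forces $d_k^2\ge F^{-1}(\Psi_k)$, and combining with the former finishes the proof. The comparison $F(d_k^2)\ge\Psi_k$ is word-for-word the corresponding step in the proof of Lemma~\ref{thm:glbiss-orc-gbi} with $t$ replaced by $k$: when $\|\beta_{k,S}'-\beta_S^o\|_2<\beta_{\min}^o$ the Bregman term $\|\beta_S^o\|_1-\langle\beta_S^o,\rho_{k,S}'\rangle$ vanishes (the signs of $\beta_{k,S}'$ and $\beta_S^o$ match entrywise), otherwise it is $\le2\sum_{j\in N_k}|\beta_j^o|$ over the sign-mismatch set $N_k$, hence $\le\min(2\|\beta_{k,S}'-\beta_S^o\|_2^2/\beta_{\min}^o,\,2\sqrt s\,\|\beta_{k,S}'-\beta_S^o\|_2)$; then $F(\cdot+x)\ge F(\cdot)+x/(2\kappa)$ absorbs the $\|\alpha_k'-\alpha^o\|_2^2$ contribution, giving $F(d_k^2)\ge\Psi_k$. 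So the only genuinely new work is the one-step decay.

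For the one-step decay I would expand $\Psi_{k+1}-\Psi_k$ into a gradient-flow term plus a discretization remainder. Abbreviate the one-step increments $\Delta\alpha,\Delta z,\Delta\beta,\Delta\rho$ and recall the oracle updates $\Delta\alpha/\kappa=-\delta\nabla_\alpha\ell(\theta_k')$ and $\Delta z=\Delta\rho+\Delta\beta/\kappa=-\delta\nabla_S\ell(\theta_k')$. Expanding the squared norms in $d_{k+1}^2-d_k^2$ and using the subgradient inequality \emph{at step $k+1$}, namely $\langle\rho_{k,S}',\beta_{k+1,S}'\rangle\le\|\beta_{k+1,S}'\|_1=\langle\rho_{k+1,S}',\beta_{k+1,S}'\rangle$ (since $\|\rho_{k,S}'\|_\infty\le1$), which yields $-\langle\beta_S^o,\Delta\rho\rangle\le\langle\Delta\rho,\beta_{k+1,S}'-\beta_S^o\rangle$, one regroups everything into
\[
\Psi_{k+1}-\Psi_k\ \le\ \big\langle-\delta\nabla_{S_\alpha}\ell(\theta_k'),\ \theta_{k,S_\alpha}'-\theta_{S_\alpha}^o\big\rangle+E_k,\qquad E_k:=\langle\Delta\rho,\Delta\beta\rangle+\tfrac1{2\kappa}\big(\|\Delta\alpha\|_2^2+\|\Delta\beta\|_2^2\big).
\]
Since $\theta_k',\theta^o$ lie in the oracle subspace and $\nabla_{S_\alpha}\ell(\theta^o)=0$, we have $\nabla_{S_\alpha}\ell(\theta_k')=\bar H^o_{S_\alpha,S_\alpha}(\theta_k')\,v$ with $v:=\theta_{k,S_\alpha}'-\theta_{S_\alpha}^o$, so the linear term equals $-\delta\,v^\top\bar H^o_{S_\alpha,S_\alpha}(\theta_k')\,v\le-\delta\lambda d_k^2$ by the RSC lower bound (\Cref{thm:rsc-glbi}).

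The remainder $E_k$ is the crux, and is exactly where the replacement $\lambda\mapsto\lambda'$ is produced. Using $\Delta\rho=\Delta z-\Delta\beta/\kappa$ gives $E_k=\langle\Delta z,\Delta\beta\rangle-\tfrac1{2\kappa}\|\Delta\beta\|_2^2+\tfrac1{2\kappa}\|\Delta\alpha\|_2^2$; writing $\Delta\beta=\kappa\big(\mathcal S(z_{k+1,S}',1)-\mathcal S(z_{k,S}',1)\big)$ and invoking firm nonexpansiveness of $\mathcal S(\cdot,1)=\prox_{\|\cdot\|_1}$, one gets $\langle\Delta z,\Delta\beta\rangle-\tfrac1{2\kappa}\|\Delta\beta\|_2^2\le\tfrac\kappa2\|\Delta z\|_2^2$, whence $E_k\le\tfrac{\kappa\delta^2}{2}\big(\|\nabla_S\ell(\theta_k')\|_2^2+\|\nabla_\alpha\ell(\theta_k')\|_2^2\big)=\tfrac{\kappa\delta^2}{2}\|\nabla_{S_\alpha}\ell(\theta_k')\|_2^2$. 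Finally $\|\nabla_{S_\alpha}\ell(\theta_k')\|_2^2=\|\bar H^o_{S_\alpha,S_\alpha}(\theta_k')v\|_2^2\le\Lambda\,v^\top\bar H^o_{S_\alpha,S_\alpha}(\theta_k')v$ because $\Lambda I-\bar H^o_{S_\alpha,S_\alpha}(\theta_k')\succeq0$ by RSC. Putting the two bounds together,
\[
\Psi_{k+1}-\Psi_k\ \le\ -\delta\Big(1-\tfrac{\kappa\delta\Lambda}{2}\Big)\,v^\top\bar H^o_{S_\alpha,S_\alpha}(\theta_k')\,v\ \le\ -\delta\,\lambda\Big(1-\tfrac{\kappa\delta\Lambda}{2}\Big)d_k^2=-\delta\lambda'd_k^2,
\]
using $\lambda'>0$. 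I expect the main obstacle to be precisely this bookkeeping: getting the sign right forces the $\ell_1$-subgradient inequality to be evaluated at $k+1$ rather than $k$, after which the firm nonexpansiveness of the shrinkage is what confines the discretization error $E_k$ to a $\tfrac{\kappa\delta\Lambda}{2}$-fraction of the decay term; everything else is inherited verbatim from the continuous-time proof.
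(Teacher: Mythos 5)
Your proof is correct and takes essentially the same route as the paper: expand $\Psi_{k+1}-\Psi_k$, use the subgradient inequality $\langle\rho_{k+1,S}'-\rho_{k,S}',\,\beta_{k+1,S}'\rangle\ge 0$ evaluated at step $k+1$, absorb the remainder via a complete-the-square bound, and then invoke both sides of RSC to produce $\lambda'$. One cosmetic note: the inequality $\langle\Delta z,\Delta\beta\rangle-\tfrac{1}{2\kappa}\|\Delta\beta\|_2^2\le\tfrac{\kappa}{2}\|\Delta z\|_2^2$ is just completing the square (equivalently Young's inequality) and does not actually require firm nonexpansiveness of the shrinkage---firm nonexpansiveness gives a lower bound on $\langle\Delta z,\Delta\beta\rangle$, not the upper bound you need---so the attribution is misleading even though the stated inequality itself is correct; the paper does the same step by expanding $\tfrac{\kappa}{2}\|(0,\Delta\rho)+\tfrac{1}{\kappa}\Delta\theta\|_2^2$.
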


\begin{proof}[Proof of \Cref{thm:glbi-orc-gbi}]
    Similar to the continuous case, we have $F(d_k^2) \ge \Psi_k$, so it suffices to show
    \begin{equation*}
        \Psi_{k+1} - \Psi_k \le - \delta \lambda' d_k^2.
    \end{equation*}
    Note that $\beta_{k+1,S}'(\rho_{k+1,S}' - \rho_{k,S}') \ge 0$. We have
    \begin{align*}
        & \Psi_{k+1} - \Psi_k\\
        ={} & - \left\langle \begin{pmatrix} \alpha^o \\ \beta_S^o \end{pmatrix},\ \begin{pmatrix} 0\\ \rho_{k+1,S}' - \rho_{k,S}' \end{pmatrix} \right\rangle\\
        & + \frac{1}{2\kappa} \left( \left\| \begin{pmatrix} \alpha_{k+1}' - \alpha^o\\ \beta_{k+1,S}' - \beta_S^o \end{pmatrix} \right\|_2^2 - \left\| \begin{pmatrix} \alpha_k' - \alpha^o\\ \beta_{k,S}' - \beta_S^o \end{pmatrix} \right\|_2^2 \right)\\
        ={} & \left( \theta_{k,S_\alpha}' - \theta_{S_\alpha}^o \right)^T \begin{pmatrix} 0\\ \rho_{k+1,S}' - \rho_{k,S}' \end{pmatrix}\\
        & - \theta_{k,S_\alpha}'^T \begin{pmatrix} 0\\ \rho_{k+1,S}' - \rho_{k,S}' \end{pmatrix} + \frac{1}{2\kappa} \left\| \theta_{k+1,S_\alpha}' - \theta_{k,S_\alpha}' \right\|_2^2\\
        & + \frac{1}{\kappa} \left( \theta_{k+1,S_\alpha}' - \theta_{k,S_\alpha}' \right)^T \left( \theta_{k,S_\alpha}' - \theta_{S_\alpha}^o \right)\\
        ={} & \left( \theta_{k,S_\alpha}' - \theta_{S_\alpha}^o \right)^T \cdot \left( \begin{pmatrix} 0\\ \rho_{k+1,S}' - \rho_{k,S}' \end{pmatrix} \right.\\
        & \left. \quad + \frac{1}{\kappa} \left( \theta_{k+1,S_\alpha}' - \theta_{k,S_\alpha}' \right) \right)\\
        & + \frac{1}{2\kappa} \left\| \theta_{k+1,S_\alpha}' - \theta_{k,S_\alpha}' \right\|_2^2 - \theta_{k,S_\alpha}'^T \begin{pmatrix} 0\\ \rho_{k+1,S}' - \rho_{k,S}' \end{pmatrix}\\
        \le{} & - \delta \left( \theta_{k,S_\alpha}' - \theta_{S_\alpha}^o \right)^T \left( \nabla_{S_{\alpha}} \ell\left( \theta_k' \right) - \nabla_{S_{\alpha}} \ell\left( \theta^o \right) \right)\\
        & + \frac{1}{2\kappa} \left\| \theta_{k+1,S_\alpha}' - \theta_{k,S_\alpha}' \right\|_2^2\\
        & + \left( \theta_{k+1,S_\alpha}' - \theta_{k,S_\alpha}' \right)^T \begin{pmatrix} 0\\ \rho_{k+1,S}' - \rho_{k,S}' \end{pmatrix}\\
        \le{} & - \delta \left( \theta_{k,S_\alpha}' - \theta_{S_\alpha}^o \right)^T \bar{H}_{S_{\alpha}, S_{\alpha}}^o \left( \theta_k' \right) \left( \theta_{k,S_\alpha}' - \theta_{S_\alpha}^o \right)\\
        & + \frac{\kappa}{2} \left\| \begin{pmatrix} 0\\ \rho_{k+1,S}' - \rho_{k,S}' \end{pmatrix} + \frac{1}{\kappa} \left( \theta_{k+1,S_\alpha}' - \theta_{k,S_\alpha}' \right) \right\|_2^2\\
        ={} & - \delta \left( \theta_{k,S_\alpha}' - \theta_{S_\alpha}^o \right)^T \bar{H}_{S_{\alpha}, S_{\alpha}}^o \left( \theta_k' \right) \left( \theta_{k,S_\alpha}' - \theta_{S_\alpha}^o \right) \\
        & + \frac{\kappa \delta^2}{2} \left\| \nabla_{S_{\alpha}} \ell\left( \theta_k' \right) - \nabla_{S_{\alpha}} \ell\left( \theta^o \right) \right\|_2^2\\
        ={} & - \delta \left( \theta_{k,S_\alpha}' - \theta_{S_\alpha}^o \right)^T\\
        & \quad \cdot \left( \bar{H}_{S_{\alpha}, S_{\alpha}}^o \left( \theta_k' \right) - \frac{\kappa \delta}{2} \bar{H}_{S_{\alpha}, S_{\alpha}}^o \left( \theta_k' \right)^2 \right) \left( \theta_{k,S_\alpha}' - \theta_{S_\alpha}^o \right)\\
        & \le - \delta \lambda' d_k^2.
    \end{align*}
\end{proof}

\begin{lemma}[Consistency of the Oracle Iteration of GLBI]
    \label{thm:glbi-orc-cstc}
    Under \Cref{thm:rsc-glbi}, suppose $\delta$ is small such that
    \begin{equation}
        \label{eq:lambda-prime}
        \lambda' := \lambda \left( 1 - \kappa \delta \Lambda / 2 \right) > 0.
    \end{equation}
    Let
    \begin{equation*}
        \beta_{\min}^o := \min \left( \left| \beta_j^o \right|:\ \beta_j^o \neq 0 \right)
    \end{equation*}
    and $d_k$ defined as \cref{eq:d-def-glbi}. Then for any $0 < \mu < 1$ and any $k$ such that
    \begin{gather}
        \label{eq:tau-inf-def-glbi}
        k \delta \ge \tau_{\infty}'(\mu) := \frac{1}{\kappa \lambda'} \log \frac{1}{\mu} + \frac{2\log s + 4 + d_0 / \kappa}{\lambda' \beta_{\min}^o} + 4\delta,
    \end{gather}
    we have
    \begin{equation}
        \label{eq:glbi-orc-cstc-sign}
        \begin{gathered}
            d_k \le \mu \beta_{\min}^o\\
            \left( \Longrightarrow \mathrm{sign}\left( \beta_{k,S}' \right) = \mathrm{sign}\left( \beta_S^o \right),\ \text{if $\beta_j^o \neq 0$ for $j\in S$} \right).
        \end{gathered}
    \end{equation}
    For any $k$, we have
    \begin{equation}
        \label{eq:glbi-orc-cstc-l2}
        d_k \le \min \left( \frac{8 \sqrt{s} + 2 d_0 / \kappa}{\lambda' k \delta},\ \sqrt{\frac{\Lambda}{\lambda}} \cdot d_0 \right).
    \end{equation}
\end{lemma}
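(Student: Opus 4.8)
The plan is to transcribe the continuous-time proof of \Cref{thm:glbiss-orc-cstc} into discrete form: the differential Bihari inequality is replaced by \Cref{thm:glbi-orc-gbi}, the rate $\lambda$ by $\lambda'$, and time integrals by iteration sums. Throughout I will use three facts, all immediate from (the proof of) \Cref{thm:glbi-orc-gbi}: (i) $\Psi_k$ is non-increasing, since $F^{-1}\ge 0$ and $\lambda'>0$; (ii) $\Psi_k\ge d_k^2/(2\kappa)$; and (iii) $F(d_k^2)\ge\Psi_k$. I also record that $d_0^2\ge\|\beta_S^o\|_2^2\ge s(\beta_{\min}^o)^2$, which locates the upper endpoint of the relevant integrals in the third regime of $F$.

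For the threshold statement \cref{eq:glbi-orc-cstc-sign}: fix any $k$ with $k\delta\ge\tau_\infty'(\mu)$ and suppose, toward a contradiction, that $d_j>\mu\beta_{\min}^o$ for all $0\le j\le k$. Then $\Psi_j\ge d_j^2/(2\kappa)>\mu^2(\beta_{\min}^o)^2/(2\kappa)>0$, so $F^{-1}(\Psi_j)>0$ and \Cref{thm:glbi-orc-gbi} gives $\delta\lambda'\le(\Psi_j-\Psi_{j+1})/F^{-1}(\Psi_j)$ for $j<k$. Since $F^{-1}$ is non-decreasing, each summand obeys $(\Psi_j-\Psi_{j+1})/F^{-1}(\Psi_j)\le\int_{\Psi_{j+1}}^{\Psi_j}\mathrm{d}x/F^{-1}(x)$, so summing telescopes into $k\delta\lambda'\le\int_{\mu^2(\beta_{\min}^o)^2/(2\kappa)}^{F(d_0^2)}\mathrm{d}x/F^{-1}(x)$; splitting this integral over the three regimes of $F$ exactly as in the proof of \Cref{thm:glbiss-orc-cstc} bounds the right side by $\lambda'\big(\tau_\infty'(\mu)-4\delta\big)$, contradicting $k\delta\ge\tau_\infty'(\mu)$. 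Hence some $k^\ast\le k$ has $d_{k^\ast}\le\mu\beta_{\min}^o<\beta_{\min}^o$, so $F(d_{k^\ast}^2)=d_{k^\ast}^2/(2\kappa)$ and thus $\Psi_{k^\ast}\le\mu^2(\beta_{\min}^o)^2/(2\kappa)$; monotonicity of $\Psi$ then forces $d_k^2/(2\kappa)\le\Psi_k\le\Psi_{k^\ast}\le\mu^2(\beta_{\min}^o)^2/(2\kappa)$, i.e. $d_k\le\mu\beta_{\min}^o$, and the sign implication follows from $\|\beta_{k,S}'-\beta_S^o\|_\infty\le d_k<\beta_{\min}^o$.

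For \cref{eq:glbi-orc-cstc-l2} I again mirror the continuous argument with $\tilde F(x):=x/(2\kappa)+2\sqrt{sx}\ge F(x)$. Put $C:=(4\sqrt s+d_0/\kappa)/(\lambda'k\delta)$ and suppose $\Psi_{j+1}-\Psi_j\le-\delta\lambda'C^2$ for all $j<k$. Since \Cref{thm:glbi-orc-gbi} with $\tilde F\ge F$ also gives $\Psi_{j+1}-\Psi_j\le-\delta\lambda'\tilde F^{-1}(\Psi_j)$, we get $\Psi_{j+1}-\Psi_j\le-\delta\lambda'\max(C^2,\tilde F^{-1}(\Psi_j))$; summing and telescoping as above (using $\Psi_0\le\tilde F(d_0^2)$ and $\Psi_k\ge 0$) yields $k\delta\lambda'\le\int_0^{\tilde F(d_0^2)}\mathrm{d}x/\max(C^2,\tilde F^{-1}(x))$, and the explicit evaluation of this integral, treating the cases $d_0\ge C$ and $d_0<C$ separately exactly as in the proof of \Cref{thm:glbiss-orc-cstc}, bounds the right side by $(4\sqrt s+d_0/\kappa)/C=k\delta\lambda'$, a contradiction. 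So some $j<k$ has $\Psi_{j+1}-\Psi_j>-\delta\lambda'C^2$, which by the inequality chain inside the proof of \Cref{thm:glbi-orc-gbi} gives $(\theta_{j,S_\alpha}'-\theta_{S_\alpha}^o)^T\big(\bar{H}^o_{S_\alpha,S_\alpha}(\theta_j')-\tfrac{\kappa\delta}{2}\bar{H}^o_{S_\alpha,S_\alpha}(\theta_j')^2\big)(\theta_{j,S_\alpha}'-\theta_{S_\alpha}^o)<\lambda'C^2$. Using the step-size condition to write $\bar{H}^o-\tfrac{\kappa\delta}{2}(\bar{H}^o)^2\succeq(1-\kappa\delta\Lambda/2)\bar{H}^o$, together with $\bar{H}^o_{S_\alpha,S_\alpha}(\theta_j')\succeq H^{\ell}_{S_\alpha,S_\alpha}(\theta_j')$ (because $1-\mu\le 1$) and the Taylor identity $\ell(\theta_j')-\ell(\theta^o)=(\theta_{j,S_\alpha}'-\theta_{S_\alpha}^o)^TH^{\ell}_{S_\alpha,S_\alpha}(\theta_j')(\theta_{j,S_\alpha}'-\theta_{S_\alpha}^o)$, this reduces to $\lambda C^2>\ell(\theta_j')-\ell(\theta^o)$. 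Finally, since $\ell(\theta_k')$ is non-increasing (valid because $\lambda'>0$ forces $\kappa\delta\Lambda<2$) and $\ell(\theta_k')-\ell(\theta^o)\ge\tfrac{\lambda}{2}d_k^2$ by RSC, we obtain $\tfrac{\lambda}{2}d_k^2\le\ell(\theta_k')-\ell(\theta^o)\le\ell(\theta_j')-\ell(\theta^o)<\lambda C^2$, i.e. $d_k\le\sqrt2\,C\le 2C$; the bound $d_k\le\sqrt{\Lambda/\lambda}\,d_0$ follows the same way from $\tfrac{\Lambda}{2}d_0^2\ge\ell(\theta_0')-\ell(\theta^o)\ge\ell(\theta_k')-\ell(\theta^o)\ge\tfrac{\lambda}{2}d_k^2$.

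The main obstacle, compared with the continuous case, is the bookkeeping in the sum-to-integral passage: $\Psi$ can jump across a regime boundary of $F$ in a single step, so one must be sure that the comparison $(\Psi_j-\Psi_{j+1})/F^{-1}(\Psi_j)\le\int_{\Psi_{j+1}}^{\Psi_j}\mathrm{d}x/F^{-1}(x)$ uses only monotonicity of $F^{-1}$ and nothing about smoothness of $\Psi$. The second delicate point is that discretization introduces the extra term $-\tfrac{\kappa\delta}{2}(\bar{H}^o)^2$ in the energy dissipation; it must be absorbed into $\lambda'$ rather than destroying the decay, which is exactly where the step-size condition $\lambda'>0$ enters, and it is also what produces the harmless $+4\delta$ slack in $\tau_\infty'(\mu)$ that cushions the integer rounding of the stopping index.
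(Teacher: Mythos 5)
Your proof is correct, and while it follows the paper's overall strategy (discretize the Bihari argument, replace $\lambda$ by $\lambda'$), it handles the sum-to-integral passage in a genuinely different and cleaner way. The paper partitions the iterations into stages, tracking the first indices $k_1,\ldots,k_4$ at which $\Psi_k$ crosses the breakpoints of $F$, and explicitly discards the four boundary terms where $\Psi_j$ and $\Psi_{j+1}$ may lie in different regimes of $F$; the $+4\delta$ in $\tau'_\infty(\mu)$ exists precisely to absorb those four discarded iterations (which is why it works with $\lambda'(k_4-4)\delta$ rather than $\lambda' k_4 \delta$). You sidestep the stage bookkeeping entirely: since $F^{-1}$ is non-decreasing, $(\Psi_j-\Psi_{j+1})/F^{-1}(\Psi_j)\le\int_{\Psi_{j+1}}^{\Psi_j}\mathrm{d}x/F^{-1}(x)$ holds for every $j$ regardless of regime crossings, the discrete sum telescopes exactly into the continuous integral, and the evaluation from the proof of \Cref{thm:glbiss-orc-cstc} applies verbatim. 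This is slightly tighter --- in your version the $+4\delta$ is pure slack --- but both routes are valid. For the $\ell_2$ bound the paper says only ``same spirit''; your explicit account is correct, and the chain $\bar{H}^o-\tfrac{\kappa\delta}{2}(\bar{H}^o)^2\succeq(1-\kappa\delta\Lambda/2)\bar{H}^o\succeq(1-\kappa\delta\Lambda/2)H^\ell$ is exactly what converts the discrete dissipation bound into $\lambda C^2>\ell(\theta'_j)-\ell(\theta^o)$, with $\kappa\delta\Lambda<2$ used twice (for $\lambda'>0$ and for monotonicity of $\ell(\theta'_k)$). One small inaccuracy in your closing commentary: in the paper's proof the $+4\delta$ compensates the stage-boundary discards, not the $-\tfrac{\kappa\delta}{2}(\bar{H}^o)^2$ term (the latter is absorbed into $\lambda'$); your own argument does not rely on this distinction.
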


\begin{proof}[Proof of \Cref{thm:glbi-orc-cstc}]
    The proof is almost a discrete version of the continuous case. The only non-trivial thing is described as follows. First, suppose there does not exist $k \le \tau_{\infty}'(\mu) / \delta$ satisfying \cref{eq:glbi-orc-cstc-sign}, then for any $k \le \tau_{\infty}'(\mu)$, we have $\Psi_k > \mu^2 (\beta_{\min}^o)^2 / (2\kappa)$. Letting $k_0 = 0$, then $\Psi_{k_0} = \Psi_0 \le F(d_0^2)$. Suppose that
    \begin{multline*}
        F\left( d_0^2 \right) \ge \Psi_{k_0}, \ldots, \Psi_{k_1-1} > F\left( s \left( \beta_{\min}^o \right)^2 \right)\\
        \ge \Psi_{k_1}, \ldots, \Psi_{k_2-1} > F\left( \left( \beta_{\min}^o \right)^2 \right)\\
        \ge \Psi_{k_2}, \ldots, \Psi_{k_3-1} > \left( \beta_{\min}^o \right)^2 / (2\kappa)\\
        \ge \Psi_{k_3}, \ldots, \Psi_{k_4-1} > \mu^2 \left( \beta_{\min}^o \right)^2 / (2\kappa)\ge \Psi_{k_4}, \ldots
    \end{multline*}
    Then $k_4 \delta > \tau_{\infty}'(\mu)$. Besides, by \Cref{thm:glbi-orc-gbi},
    \begin{equation*}
        \delta \le \frac{\Psi_k - \Psi_{k+1}}{\lambda' F^{-1}(\Psi_k)}\ (0\le k\delta\le \tau_{\infty}'(\mu)).
    \end{equation*}
    Thus $\lambda' (k_4-4) \delta$ is not greater than
    \begin{align*}
        & \left( \sum_{k=k_3}^{k_4-2} + \sum_{k=k_2}^{k_3-2} + \sum_{k=k_1}^{k_2-2} + \sum_{k=k_0}^{k_1-2} \right) \frac{\Psi_k - \Psi_{k+1}}{F^{-1}(\Psi_k)}\\
        \le{}& \sum_{k=k_3}^{k_4-2} \frac{\Psi_k - \Psi_{k+1}}{2\kappa \Psi_k} + \sum_{k=k_2}^{k_3-2} \frac{\Psi_k - \Psi_{k+1}}{\left( \beta_{\min}^o \right)^2}\\
        & + \sum_{k=k_1}^{k_2-2} \frac{F(\Delta_k) - F(\Delta_{k+1})}{\Delta_k} + \sum_{k=k_0}^{k_1-2}\frac{F(\Delta_k) - F(\Delta_{k+1})}{\Delta_k}\\
        & \left( \Delta_k := F^{-1}(\Psi_k) \right)\\
        ={} & \sum_{k=k_3}^{k_4-2} \frac{\Psi_k - \Psi_{k+1}}{2\kappa \Psi_k} + \sum_{k=k_2}^{k_3-2} \frac{\Psi_k - \Psi_{k+1}}{\left( \beta_{\min}^o \right)^2}\\
        & + \sum_{k=k_1}^{k_2-2} \left( \frac{\Delta_k - \Delta_{k+1}}{2\kappa \Delta_k} + \frac{2(\Delta_k - \Delta_{k+1})}{\beta_{\min}^o \Delta_k} \right)\\
        & + \sum_{k=k_0}^{k_1-2} \left( \frac{\Delta_k - \Delta_{k+1}}{2\kappa \Delta_k} + \frac{2\sqrt{s}\left( \sqrt{\Delta_k} - \sqrt{\Delta_{k+1}} \right)}{\Delta_k} \right).
    \end{align*}
    By $(u-v)/u \le \log (u/v)$ and $(\sqrt{u} - \sqrt{v})/u \le 1/\sqrt{v} - 1/\sqrt{u}$ for $u \ge v > 0$, the quantity above is not greater than
    \begin{align*}
        & \frac{\log \left( \Psi_{k_3} / \Psi_{k_4-1} \right)}{2\kappa} + \frac{\Psi_{k_2} - \Psi_{k_3-1}}{\left( \beta_{\min}^o \right)^2} + \frac{\log \left( \Delta_{k_0} / \Delta_{k_2-1} \right)}{2\kappa}\\
        & + \frac{2 \log \left( \Delta_{k_1} / \Delta_{k_2-1} \right)}{\beta_{\min}^o} + 2\sqrt{s} \left( \frac{1}{\sqrt{\Delta_{k_1-1}}} - \frac{1}{\sqrt{\Delta_{k_0}}} \right)\\
        <{} & \frac{\log \left( 1 / \mu^2 \right)}{2\kappa} + \frac{2 \beta_{\min}^o}{\left( \beta_{\min}^o \right)^2} + \frac{\log \left( d_0^2 / \left( \beta_{\min}^o \right)^2 \right)}{2\kappa}\\
        & + \frac{2\log s}{\beta_{\min}^o} + \frac{2\sqrt{s}}{\sqrt{s \left( \beta_{\min}^o \right)^2}}.
    \end{align*}
    Therefore we get
    \begin{multline*}
        \lambda' \left( \tau_{\infty}'(\mu) - 4\delta \right) < \lambda' \left( k_4-4 \right)\delta\\
        < \frac{1}{\kappa} \log \frac{1}{\mu} + \frac{2\log s + 4 + d_0 / \kappa}{\beta_{\min}^o},
    \end{multline*}
    a contradiction with the definition of $\tau_{\infty}'(\mu)$. So there exists some $k \le \tau_{\infty}'(\mu) / \delta$ satisfying \cref{eq:glbi-orc-cstc-sign}. Then continue to imitate the proof in the continous version, we obtain \cref{eq:glbi-orc-cstc-sign} for all $k\ge \tau_{\infty}'(\mu) / \delta$. The proof of \cref{eq:glbi-orc-cstc-l2} follows the same spirit.
\end{proof}

\begin{proof}[Proof of \Cref{thm:glbi-cstc}]
    It is merely discrete version of the proof of \Cref{thm:glbiss-cstc}. In the proof, \Cref{thm:glbi-orc-gbi} and \ref{thm:glbi-orc-cstc} are applied, instead of \Cref{thm:glbiss-orc-gbi} and \ref{thm:glbiss-orc-cstc}.
\end{proof}

\section{Proof of Proposition \ref{thm:rsc-irr-logistic} on RSC and IRR for Typical Sparse Logistic Regression Models}
\label{sec:rsc-irr-logistic}

In this section we will prove the continuous form of \Cref{thm:rsc-irr-logistic} (then with slight modification we obtain the discrete form \Cref{thm:rsc-irr-logistic}). More specifically, suppose $x^{(i)}$'s are i.i.d. drawn from some $X \sim N(0, \Sigma)$, where $\Sigma_{j,j}\le 1\ (1\le j\le p)$. We are going to prove that there exist $C_0, C_1, C_2 > 0$ such that the continuous version of RSC and IRR (\Cref{thm:rsc-glbiss} and \ref{thm:irr-glbiss}) hold with probability not less than $1 - C_0 / p$, as long as $\kappa$ is sufficiently large and \cref{eq:rsc-irr-logistic-cond} holds. For simplicity, we will take $\kappa \rightarrow +\infty$ in our proof.

For $\theta = (\alpha, \beta^T)^T$, let
\begin{gather*}
    L(\theta) = L(\alpha, \beta) := \mathbb{E} \left[ \ell (\alpha, \beta) \right],\\
    \Theta_S := \left\{ \theta:\ \beta_{S^c} = 0 \right\},\\
\end{gather*}
Note that
\begin{align*}
    \nabla L(\theta) &= \mathbb{E} \left[ \nabla \ell(\alpha, \beta) \right],\\
    &= \mathbb{E} \left[ \frac{1}{1 + \exp\left( \alpha + \beta^T X \right)}\cdot \left( 1, X^T \right)^T y \right],\\
    \nabla^2 L(\theta) &= \mathbb{E} \left[ \nabla^2 \ell(\alpha, \beta) \right]\\
    &= \mathbb{E} \left[ \left( 1, X^T \right)^T \eta \left( \alpha, \beta; X \right) \left( 1, X^T \right) \right],
\end{align*}
where
\begin{equation*}
    \eta\left( \alpha, \beta; x \right) := \frac{\exp\left( \alpha + \beta^T x \right)}{\left( 1 + \exp\left( \alpha + \beta^T x \right) \right)^2}.
\end{equation*}
Since
\begin{equation*}
    \mathbb{E} \left[ \left( 1, X_S^T \right)^T \left( 1, X_S^T \right) \right] = \begin{pmatrix} 1 & 0\\ 0 & \Sigma_{S,S} \end{pmatrix} \succ 0,
\end{equation*}
there must exist some $c > 0$ such that
\begin{equation*}
    \mathbb{E} \left[ 1_{\|X_S\|_2 \le c} \left( 1, X_S^T \right)^T \left( 1, X_S^T \right) \right] \succ 0,
\end{equation*}
Then for any $\theta = (\alpha, \beta^T)^T\in \Theta_S$,
\begin{align*}
    & \nabla_{S_{\alpha}, S_{\alpha}}^2 L(\theta)\\
    \succeq{}& \mathbb{E} \left[ 1_{\|X_S\|_2 \le c} \cdot \left( 1, X_S^T \right)^T \eta \left( \alpha, \beta_S; X_S \right) \left( 1, X_S^T \right) \right]\\
    \succeq{}& \eta\left( |\alpha|, \|\beta_S\|_2; c \right) \cdot \mathbb{E} \left[ 1_{\|X_S\|_2 \le c} \left( 1, X_S^T \right)^T \left( 1, X_S^T \right) \right]
\end{align*}
which is positive-definite. Hence $L(\theta)$ is strictly convex on $\Theta_S$, with the only global minimum obtained at $\theta^{\star}$ (since $\mathbb{E}[\nabla L(\theta^{\star})] = 0$). Now we know
\begin{equation*}
    L\left( \theta^{\star} \right) < L(0) \left( = \log 2 \right) < 1 < \sup_{\theta\in \Theta_S} L(\theta) = +\infty.
\end{equation*}
Let
\begin{equation*}
    D := \left\{ \theta \in \Theta_S:\ L(\theta) \le 1 \right\}.
\end{equation*}
Intuitively, $D$ is the set of all estimators $\theta$'s which are not much worse than the ``trivial estimator'' $0$, and later we will show $\theta^o,\ \theta^{\star},\ \theta'(t)\ (t\ge 0)$ all drop in $D$ with high probability. Obviously $D$ is a compact subset of $\Theta_S$, and there exist $\lambda, \Lambda > 0$ such that
\begin{equation*}
    2 \lambda I \preceq \nabla_{S_{\alpha}, S_{\alpha}}^2 L(\theta) \preceq \frac{\Lambda}{2} I\ \left( \theta \in D \right).
\end{equation*}
Besides, for $\theta\in D$, by the Taylor expansion of $L(\theta)$ at $\theta^\star$, with the fact that $\beta_{S^c} = \beta_{S^c}^\star = 0,\ \nabla L(\theta^\star) = 0$, we have
\begin{align*}
    1 - 0 &\ge L(\theta) - L\left( \theta^\star \right)\\
    & = \left( \theta_{S_\alpha} - \theta_{S_\alpha}^\star \right)^T H_{S_{\alpha}, S_{\alpha}}^L (\theta) \left( \theta_{S_\alpha} - \theta_{S_\alpha}^\star \right)\\
    & \ge \frac{\lambda}{2} \left\| \theta_{S_\alpha} - \theta_{S_\alpha}^\star \right\|_2^2 = \frac{\lambda}{2} \left\| \theta - \theta^\star \right\|_2^2,
\end{align*}
where
\begin{equation*}
    H^L(\theta) = \int_0^1 (1-\mu) \nabla^2 L \left( \theta^\star + \mu \left( \theta - \theta^\star \right) \right) \mathrm{d} \mu.
\end{equation*}
So
\begin{equation}
    \label{eq:theta-minus-star-upper-l2}
    \left\| \theta - \theta^\star \right\|_2 \le \sqrt{2/\lambda}\ (\theta\in D)
\end{equation}
and
\begin{equation}
    \label{eq:theta-upper-l2}
    \left\| \theta \right\|_2 \le \left\| 0 - \theta^\star \right\|_2 + \left\| \theta - \theta^\star \right\|_2 \le \sqrt{8/\lambda}\ (\theta\in D).
\end{equation}
Now we need some lemmas.

\begin{lemma}
    \label{thm:lemma-HI}
    Let $x^{(1)}, \ldots, x^{(n)}$ are i.i.d. drawn from a random variable $X$ on $\left( \Omega, \mathscr{F}, \mathbb{P} \right)$. $h:\ \Omega \rightarrow \mathbb{R}$, and for $E_h := \mathbb{E}\left[ h(X) \right]$ there exists some $\sigma > 0$ and $0 < t_0 \le +\infty$ such that
    \begin{equation*}
        \mathbb{E} \left[ \exp \left( t \left( h(X) - E_h \right) \right) \right] \le \exp \left( \frac{\sigma^2 t^2}{2} \right)\ \left( |t| < t_0 \right)
    \end{equation*}
    Then for any $\epsilon > 0$,
    \begin{multline}
        \label{eq:lemma-HI}
        \mathbb{P} \left( \left| \frac{1}{n} \sum_{i=1}^n h\left( x^{(i)} \right) - E_h \right| \ge \epsilon \right)\\
        \le 2 \exp \left( - \frac{n \epsilon^2}{2\left( \sigma^2 + \epsilon / t_0 \right)} \right).
    \end{multline}
    Consequently, if
    \begin{equation*}
        n\ge \frac{2 \left( \sigma^2 + \epsilon / t_0 \right)}{\epsilon^2} \log \frac{2}{\delta},
    \end{equation*}
    then the left hand side of \cref{eq:lemma-HI} is not greater than $\delta$.
\end{lemma}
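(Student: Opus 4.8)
The plan is to run a Chernoff bound adapted to the fact that the moment-generating-function estimate is only assumed on the interval $|t|<t_0$. First I would reduce to a one-sided statement: writing $Y_i:=h(x^{(i)})-E_h$, it suffices to bound $\mathbb{P}(\frac1n\sum_i Y_i\ge\epsilon)$ and $\mathbb{P}(\frac1n\sum_i Y_i\le-\epsilon)$ separately and then union bound, which produces the leading factor $2$. The lower tail is the upper tail for $-h$, and $-h$ satisfies the same hypothesis (the bound on $\mathbb{E}[\exp(t(h(X)-E_h))]$ for $|t|<t_0$ holds equally for $-t$), so it is enough to treat the upper tail.

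For the upper tail, fix any $t\in(0,t_0)$; Markov's inequality applied to $\exp(t\sum_i Y_i)$ together with independence of the $x^{(i)}$ gives
\[
\mathbb{P}\Big(\tfrac1n\textstyle\sum_i Y_i\ge\epsilon\Big)\le e^{-tn\epsilon}\prod_{i=1}^n\mathbb{E}\big[e^{tY_i}\big]\le \exp\!\Big(n\big(\tfrac{\sigma^2t^2}{2}-t\epsilon\big)\Big),
\]
where the last inequality invokes the hypothesis, legitimately because $t<t_0$. It then remains to choose $t$.

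The only genuine subtlety is the choice of $t$: the unconstrained minimizer of $\frac{\sigma^2t^2}{2}-t\epsilon$ is $t=\epsilon/\sigma^2$, which can exceed $t_0$ when $\epsilon$ is large, so one cannot optimize freely over $(0,\infty)$. Instead I would take $t=\epsilon/(\sigma^2+\epsilon/t_0)$, which is admissible since $t<\epsilon/(\epsilon/t_0)=t_0$. Substituting and simplifying — writing $a:=\sigma^2+\epsilon/t_0$, one has $\frac{\sigma^2t^2}{2}-t\epsilon=\frac{\epsilon^2(\sigma^2-2a)}{2a^2}\le-\frac{\epsilon^2}{2a}$, using $\sigma^2-2a=-\sigma^2-2\epsilon/t_0\le-a$ — yields the exponent $-n\epsilon^2/(2(\sigma^2+\epsilon/t_0))$, so after the union bound over the two tails we obtain \cref{eq:lemma-HI}. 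Finally, the ``consequently'' clause is pure algebra: if $n\ge \frac{2(\sigma^2+\epsilon/t_0)}{\epsilon^2}\log\frac2\delta$ then $\frac{n\epsilon^2}{2(\sigma^2+\epsilon/t_0)}\ge\log\frac2\delta$, whence the right-hand side of \cref{eq:lemma-HI} is at most $2\cdot\frac\delta2=\delta$. I expect the main point to be exactly the selection of $t$ that both respects $t<t_0$ and still delivers the clean Bernstein-type exponent; the remainder is the standard Chernoff computation.
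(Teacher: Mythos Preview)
Your proof is correct and follows the same Chernoff-bound strategy as the paper. The only difference is in how $t$ is selected: the paper performs a case split on whether the unconstrained minimizer $\epsilon/\sigma^2$ lies below $t_0$ (using $t=\epsilon/\sigma^2$ when it does and $t\uparrow t_0$ when it does not, then bounding the resulting $\max$ by the Bernstein exponent), whereas you make the single admissible choice $t=\epsilon/(\sigma^2+\epsilon/t_0)$ and verify directly that it yields the exponent $-\epsilon^2/(2(\sigma^2+\epsilon/t_0))$. Your route is slightly cleaner in that it avoids the case analysis and lands on the exact exponent in one step; the paper's route makes the sub-Gaussian/sub-exponential regimes more visible but then needs an extra line to merge them. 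Both are standard, and the union bound for the two-sided statement and the final ``consequently'' algebra are identical.
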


\begin{proof}[Proof of \Cref{thm:lemma-HI}]
    Note that for arbitrary $0 < t < t_0$,
    \begin{align*}
        & \mathbb{P} \left( \frac{1}{n} \sum_{i=1}^n h\left( x^{(i)} \right) - E_h \ge \epsilon \right)\\
        ={}& \mathbb{P} \left( \exp \left( t \sum_{i=1}^n \left( h \left( x^{(i)} \right) - E_h \right) \right) \ge \exp \left( n t \epsilon \right) \right)\\
        \le{}& \exp\left( - n t \epsilon \right) \left( \mathbb{E} \left[ \exp \left( t \left( h(X) - E_h \right) \right) \right] \right)^n\\
        \le{}& \exp\left( \frac{n \sigma^2 t^2}{2} - n t \epsilon \right).
    \end{align*}
    and
    \begin{equation*}
        \min_{0 < t\le t_0} \left( \frac{n \sigma^2 t^2}{2} - nt\epsilon \right)
        \begin{cases}
            = - (n\epsilon^2)/(2\sigma^2),& \epsilon / \sigma^2 < t_0,\\
            \le - (n\epsilon)/(2/t_0),& \epsilon / \sigma^2 \ge t_0.
        \end{cases}
    \end{equation*}
    Thus
    \begin{multline*}
        \mathbb{P} \left( \frac{1}{n} \sum_{i=1}^n h\left( x^{(i)} \right) - E_h \ge \epsilon \right)\\
        \le \max \left( \exp\left( - \frac{n\epsilon^2}{2\sigma^2} \right),\ \exp\left( - \frac{n\epsilon}{2/t_0} \right) \right)\\
        \le 2 \exp \left( - \frac{n \epsilon^2}{2\left( \sigma^2 + \epsilon / t_0 \right)} \right).
    \end{multline*}
    In a similar way, we obtain the inequality on the other side.
\end{proof}

\begin{lemma}
    \label{thm:lemma-ULLN}
    Let $x^{(1)}, \ldots, x^{(n)}$ are i.i.d. drawn from a random variable $X$ on $\left( \Omega, \mathscr{F}, \mathbb{P} \right)$, and
    \begin{equation*}
        \Xi = \left\{ \xi \in \mathbb{R}^d:\ \left\| \xi - \xi_0 \right\|_2 \le C \right\} \subseteq \mathbb{R}^d
    \end{equation*}
    where $C > 0$ and $\xi_0 \in \mathbb{R}^d$. $h:\ \Xi\times \Omega \rightarrow \mathbb{R}$ satisfies
    \begin{equation}
        \label{eq:h-lipschitz}
        \left| h\left( \xi; X \right) - h\left( \xi'; X \right) \right| \le M(X) \left\| \xi - \xi' \right\|_2
    \end{equation}
    for any $\xi, \xi' \in \Xi,\ x\in \Omega$, where
    \begin{equation*}
        V_M := \mathbb{E} \left[ M(X)^2 \right] < +\infty,
    \end{equation*}
    and for $E_h(\xi) := \mathbb{E}\left[ h(\xi; X) \right]$ there exists some $\sigma > 0$ and $0 < t_0 \le +\infty$ such that
    \begin{multline*}
        \mathbb{E} \left[ \exp\left( t \left( h(\xi; X) - E_h (\xi) \right) \right) \right]\\
        \le \exp \left( \frac{\sigma^2 t^2}{2} \right)\ \left( \xi \in \Xi,\ |t| < t_0 \right).
    \end{multline*}
    Then for any $\epsilon > 0$ and $0 < \delta \le 1$,
    \begin{multline}
        \label{eq:lemma-ULLN}
        \mathbb{P} \left( \sup_{\xi\in \Xi} \left| \frac{1}{n} \sum_{i=1}^n h\left( \xi; x^{(i)} \right) - E_h(\xi) \right| \ge \epsilon \right) \le \frac{\delta}{2} + {}\\
        2 \exp\left( d \log \left( 1 + \frac{16 C \sqrt{V_M}}{\epsilon \delta} \sqrt{d} \right) - \frac{n\epsilon^2}{8 \left( \sigma^2 + \epsilon / t_0 \right)} \right).
    \end{multline}
    Consequently, if
    \begin{equation*}
        n \ge \frac{8 \left( \sigma^2 + \epsilon / t_0 \right)}{\epsilon^2} \left( \log \frac{4}{\delta} + d \log \left( 1 + \frac{16 C \sqrt{V_M}}{\epsilon \delta} \sqrt{d} \right) \right),
    \end{equation*}
    then the left hand side of \cref{eq:lemma-ULLN} is not greater than $\delta$.
\end{lemma}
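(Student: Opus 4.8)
The plan is the standard reduction of a uniform law of large numbers to a pointwise concentration bound by a covering (net) argument. Write $G_n(\xi) := \tfrac1n\sum_{i=1}^n h(\xi;x^{(i)}) - E_h(\xi)$, so the quantity of interest is $\mathbb P\big(\sup_{\xi\in\Xi}|G_n(\xi)|\ge\epsilon\big)$. First I would fix a scale $\gamma>0$ and choose a finite $\gamma$-net $\mathcal N\subseteq\Xi$ of $\Xi$ in the Euclidean metric. Since $\Xi$ is a Euclidean ball of radius $C$ in $\mathbb R^d$, it sits inside an $\ell_\infty$-cube of side $2C$; covering that cube by an $\ell_\infty$-grid of mesh $\gamma/\sqrt d$ and using $\|\cdot\|_2\le\sqrt d\,\|\cdot\|_\infty$ produces $\mathcal N$ with $|\mathcal N|\le(1+C\sqrt d/\gamma)^d$, such that every $\xi\in\Xi$ has a projection $\pi(\xi)\in\mathcal N$ with $\|\xi-\pi(\xi)\|_2\le\gamma$.

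Second I would split $\sup_\xi|G_n(\xi)|$ into its values on $\mathcal N$ and the oscillation between $\xi$ and $\pi(\xi)$, and control the two pieces on separate good events. On the net: for each fixed $\xi'\in\mathcal N$ the sub-Gaussian-type moment generating function hypothesis lets us invoke \Cref{thm:lemma-HI} at level $\epsilon/2$, giving $\mathbb P(|G_n(\xi')|\ge\epsilon/2)\le 2\exp\!\big(-\tfrac{n\epsilon^2}{8(\sigma^2+\epsilon/t_0)}\big)$ (the denominator $\sigma^2+\epsilon/(2t_0)$ from \Cref{thm:lemma-HI} is enlarged to $\sigma^2+\epsilon/t_0$), and a union bound over $\mathcal N$ multiplies this by $|\mathcal N|$. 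For the oscillation: the Lipschitz hypothesis \cref{eq:h-lipschitz} gives, for every $\xi$, $\big|\tfrac1n\sum_i h(\xi;x^{(i)})-\tfrac1n\sum_i h(\pi(\xi);x^{(i)})\big|\le\big(\tfrac1n\sum_i M(x^{(i)})\big)\gamma$, and, since $\mathbb E[M(X)]\le\sqrt{V_M}$ by Cauchy--Schwarz, $|E_h(\xi)-E_h(\pi(\xi))|\le\sqrt{V_M}\,\gamma$. The average $\tfrac1n\sum_i M(x^{(i)})$ is nonnegative with mean at most $\sqrt{V_M}$, so Markov's inequality shows the event $\{\tfrac1n\sum_i M(x^{(i)})\le 2\sqrt{V_M}/\delta\}$ fails with probability at most $\delta/2$; on it the oscillation $\sup_\xi|G_n(\xi)-G_n(\pi(\xi))|$ is at most $(2\sqrt{V_M}/\delta+\sqrt{V_M})\gamma$.

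Third I would choose $\gamma=\epsilon\delta/(16\sqrt{V_M})$, which (using $\delta\le 1$) makes the oscillation bound at most $\epsilon/2$; then on the intersection of the Markov good event and the $|\mathcal N|$ net events, $\sup_\xi|G_n(\xi)|\le\sup_{\xi'\in\mathcal N}|G_n(\xi')|+\epsilon/2<\epsilon$. Consequently $\mathbb P\big(\sup_\xi|G_n(\xi)|\ge\epsilon\big)$ is at most $\delta/2$ (from Markov) plus $|\mathcal N|\cdot2\exp\!\big(-n\epsilon^2/(8(\sigma^2+\epsilon/t_0))\big)$, and substituting $|\mathcal N|\le(1+C\sqrt d/\gamma)^d$ with $C\sqrt d/\gamma=16C\sqrt{V_M}\sqrt d/(\epsilon\delta)$ rewrites the second term as $2\exp\!\big(d\log(1+16C\sqrt{V_M}\sqrt d/(\epsilon\delta))-n\epsilon^2/(8(\sigma^2+\epsilon/t_0))\big)$, which is exactly \cref{eq:lemma-ULLN}. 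The ``consequently'' clause then follows by forcing that exponential term below $\delta/2$: since $|\mathcal N|\cdot2e^{-t}\le2\exp\!\big(d\log(1+16C\sqrt{V_M}\sqrt d/(\epsilon\delta))-t\big)$, it suffices that $n\epsilon^2/(8(\sigma^2+\epsilon/t_0))\ge\log(4/\delta)+d\log(1+16C\sqrt{V_M}\sqrt d/(\epsilon\delta))$, which is the displayed sample-size requirement.

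The covering-number estimate and the union bound are routine; the point that genuinely needs care is the random Lipschitz constant $\tfrac1n\sum_i M(x^{(i)})$. Because only a \emph{second} moment of $M(X)$ is assumed, this average cannot be handled by a sub-Gaussian tail and must instead be bounded crudely by Markov, and it is precisely this crude step that is responsible for the additive $\delta/2$ loss and for the factor $1/\delta$ appearing inside the logarithm of \cref{eq:lemma-ULLN}. Balancing the three sources of slack — the $\epsilon/2$ split feeding \Cref{thm:lemma-HI}, the Markov threshold $2\sqrt{V_M}/\delta$, and the net scale $\gamma$ — is what produces the particular constants $8$ and $16$, and is the only nontrivial bit of arithmetic.
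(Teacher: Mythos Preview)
Your argument is correct and matches the paper's covering-plus-union-bound strategy almost step for step; the only substantive difference is that the paper controls $\tfrac1n\sum_i M(x^{(i)})$ via Chebyshev on the centered variable (threshold $\asymp\sqrt{V_M/\delta}$) rather than your Markov bound on the uncentered one (threshold $\asymp\sqrt{V_M}/\delta$), though after the paper's own relaxation $1+1/\sqrt\delta\le 2/\delta$ both routes land on the same constant $16$ inside the logarithm. One arithmetic slip to fix: a cube of side $2C$ gridded at spacing $\gamma/\sqrt d$ contains up to $(1+2C\sqrt d/\gamma)^d$ points, not $(1+C\sqrt d/\gamma)^d$; the unused slack $3\epsilon/16<\epsilon/2$ in your oscillation estimate absorbs this if you take $\gamma=\epsilon\delta/(8\sqrt{V_M})$ instead.
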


\begin{remark}
    If there exists some $M(X)$ with finite second order moment, such that $\| \nabla_\xi h(\xi; X) \|_2 \le M(X)$ almost surely for any $\xi\in \Xi$, then \cref{eq:h-lipschitz} holds according to the Mean Value Theorem.
\end{remark}

\begin{proof}[Proof of \Cref{thm:lemma-ULLN}]
    Let
    \begin{equation*}
        \zeta := \frac{\epsilon}{4 \sqrt{V_M} (1 + 1/\sqrt{\delta}) \sqrt{d}}.
    \end{equation*}
    and consider
    \begin{equation*}
        \Xi_{\mathrm{grid}} = \left\{ \left( \zeta g_1, \ldots, \zeta g_s \right)\in \Xi:\ g_1, \ldots, g_s\in \mathbb{Z} \right\}.
    \end{equation*}
    For any $\xi\in \Xi$, there is some $\xi'\in \Xi_{\mathrm{grid}}$ such that
    \begin{equation*}
        \left\| \xi' - \xi \right\|_2 \le \zeta \sqrt{d}.
    \end{equation*}
    Thus
    \begin{align*}
        & \left| \frac{1}{n} \sum_{i=1}^n h\left( \xi; x^{(i)} \right) - E_h (\xi) \right|\\
        \le{}& \left| \frac{1}{n} \sum_{i=1}^n h\left( \xi; x^{(i)} \right) - \frac{1}{n} \sum_{i=1}^n h\left( \xi'; x^{(i)} \right) \right|\\
        & + \left| \frac{1}{n} \sum_{i=1}^n h\left( \xi'; x^{(i)} \right) - E_h \left( \xi' \right) \right| + \left| E_h \left( \xi' \right) - E_h (\xi) \right|\\
        \le{}& \zeta \sqrt{d} \left( \frac{1}{n} \sum_{i=1}^n M\left( x^{(i)} \right) + \mathbb{E}[M(X)] \right)\\
        & + \left| \frac{1}{n} \sum_{i=1}^n h\left( \xi'; x^{(i)} \right) - E_h \left( \xi' \right) \right|.
    \end{align*}
    Therefore by \Cref{thm:lemma-HI}
    \begin{align*}
        & \mathbb{P} \left( \sup_{\xi\in \Xi} \left| \frac{1}{n} \sum_{i=1}^n h\left( \xi; x^{(i)} \right) - E_h (\xi) \right| \ge \epsilon \right)\\
        \le{}& \mathbb{P} \left( \frac{1}{n} \sum_{i=1}^n M\left( x^{(i)} \right) + \mathbb{E} \left[ M(X) \right] \ge \frac{\epsilon}{2} \left( \zeta \sqrt{d} \right)^{-1} \right)\\
        & + \mathbb{P} \left( \max_{\xi' \in \Xi_{\mathrm{grid}}} \left| \frac{1}{n} \sum_{i=1}^n h\left( \xi'; x^{(i)} \right) - E_h \left( \xi' \right) \right| \ge \frac{\epsilon}{2} \right)\\
        \le{}& \mathbb{P} \left( \frac{1}{n} \sum_{i=1}^n \left( M\left( x^{(i)} \right) - \mathbb{E} \left[ M(X) \right] \right) \ge \sqrt{\frac{2V_M}{\delta}} \right)\\
        & + \left| \Xi_{\mathrm{grid}} \right| \cdot 2 \exp\left( -\frac{n\epsilon^2}{8 \left( \sigma^2 + \epsilon / t_0 \right)} \right)\\
        \le{}& \frac{1}{n^2 (2V_M / \delta)} \cdot n \mathbb{E} \left[ \left( M(X) - \mathbb{E} \left[ M(X) \right] \right)^2 \right]\\
        & + 2 \left( 1 + 2 \cdot \frac{C}{\zeta} \right)^d \exp\left( -\frac{n\epsilon^2}{8 \left( \sigma^2 + \epsilon / t_0 \right)} \right)
    \end{align*}
    and now the right hand side is not greater than the right hand side of \cref{eq:lemma-ULLN}.
\end{proof}

\begin{lemma}
    \label{thm:lemma-mgf-control}
    Let $U,V$ are random variables such that $|U|\le V$ almost surely, and there exist $\sigma > 0,\ 0 < t_0 \le +\infty$ and $c \ge 1$ such that
    \begin{equation*}
        \mathbb{E}\left[ \exp(tV) \right] \le c \cdot \exp \left( \frac{\sigma^2 t^2}{2} \right)\ (0 \le t < t_0).
    \end{equation*}
    Then
    \begin{multline*}
        \mathbb{E}\left[ \exp\left( t(U-\mathbb{E}[U]) \right) \right]\\
        \le \exp\left( \mathrm{e}c^2 \left( \sigma^2 + 1 / t_0^2 \right) t^2 \right)\ (|t| < t_0).
    \end{multline*}
\end{lemma}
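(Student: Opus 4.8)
The plan is to write $W := U - \mathbb{E}[U]$ and prove directly that $\mathbb{E}[\exp(tW)] \le \exp(Kt^2)$ for $|t| < t_0$, where $K := \mathrm{e}c^2(\sigma^2 + 1/t_0^2)$. Two elementary facts are available immediately. Since $0 \le |U| \le V$ we have $tU \le |t|V$ and $-tU \le |t|V$ pointwise, hence by hypothesis $\mathbb{E}[\exp(\pm tU)] \le c\exp(\sigma^2 t^2/2)$ for $|t| < t_0$; combining this with Jensen's inequality $\exp(-t\mathbb{E}[U]) = \exp(\mathbb{E}[-tU]) \le \mathbb{E}[\exp(-tU)]$ yields the crude bound $\mathbb{E}[\exp(tW)] = \exp(-t\mathbb{E}[U])\,\mathbb{E}[\exp(tU)] \le \mathbb{E}[\exp(-tU)]\,\mathbb{E}[\exp(tU)] \le c^2\exp(\sigma^2 t^2)$ on $|t| < t_0$. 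Secondly, from $V^k \le (k!/t^k)\exp(tV)$ and the hypothesis, optimizing over $t \in (0,t_0)$ gives sub-Gaussian-type moment bounds $\mathbb{E}[|U|^k] \le \mathbb{E}[V^k] \le c\,k!\min_{0<t<t_0}\bigl(t^{-k}\exp(\sigma^2 t^2/2)\bigr)$; in particular $\mathbb{E}[V] \le c\sqrt{\mathrm{e}}\max(\sigma,1/t_0) \le c\sqrt{K}$ and $\mathbb{E}[V^2]$ is $O(K)$ — this is where the factor $\mathrm{e}$ and the term $1/t_0^2$ in $K$ originate (the minimizer of $t^{-k}\exp(\sigma^2t^2/2)$ is $\min(\sqrt{k}/\sigma,\, t_0)$). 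The crude bound is \emph{not} enough by itself because of the prefactor $c^2 \ge 1$, which must be absorbed into the exponent; removing it is exactly what the centering $\mathbb{E}[W] = 0$ buys.

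I would then split $|t| < t_0$ into two overlapping regimes. For $|t|$ bounded away from zero — precisely $|t|^2 \ge 2\log c/((\mathrm{e}-1)\sigma^2)$, or $|t| \ge t_0/\mathrm{e}$, whichever threshold is smaller — the crude bound already suffices: $c^2\exp(\sigma^2 t^2) \le \exp(Kt^2)$ is equivalent to $2\log c \le (K-\sigma^2)t^2$, which holds since $K - \sigma^2 \ge (\mathrm{e}-1)\sigma^2$ and $K - \sigma^2 \ge \mathrm{e}c^2/t_0^2$, together with the elementary inequality $\log c \le c^2/(2\mathrm{e})$ (equality at $c = \sqrt{\mathrm{e}}$). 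For the remaining small values of $|t|$ I would use the second-order Taylor estimate $\exp(x) \le 1 + x + \tfrac12 x^2\exp(|x|)$ at $x = tW$: taking expectations, the linear term vanishes by centering, so $\mathbb{E}[\exp(tW)] \le 1 + \tfrac{t^2}{2}\mathbb{E}[W^2\exp(|t||W|)]$, and it suffices to show $\mathbb{E}[W^2\exp(|t||W|)] \le 2K$ on this regime. Bounding $|W| \le V + \mathbb{E}[V] =: V + b$ with $b^2 \le K$ gives $\mathbb{E}[W^2\exp(|t||W|)] \le \exp(|t|b)\bigl(2\mathbb{E}[V^2\exp(|t|V)] + 2b^2\mathbb{E}[\exp(|t|V)]\bigr)$, and both expectations on the right are controlled by the hypothesis on $V$: $\mathbb{E}[\exp(|t|V)] \le c\exp(\sigma^2 t^2/2)$, and $\mathbb{E}[V^2\exp(|t|V)] \le 2(s'-|t|)^{-2}c\exp(\sigma^2 s'^2/2)$ via $v^2\exp(sv) \le 2(s'-s)^{-2}\exp(s'v)$ for a suitable interior $s' \in (|t|,t_0)$. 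By construction of the thresholds, the small-$|t|$ regime lies inside $\{|t|\sigma = O(\sqrt{\log c})\}$ or inside $\{|t| \le t_0/\mathrm{e}\}$ with $\sigma t_0$ then bounded, so all these factors stay controlled and one obtains $\mathbb{E}[\exp(tW)] \le 1 + C't^2 \le \exp(C't^2) \le \exp(Kt^2)$ there; finally one checks the small-$|t|$ range reaches the threshold of the first regime, so the two regimes cover all of $(-t_0, t_0)$.

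The main obstacle is the bookkeeping at the seam between the two regimes and, relatedly, obtaining the constant literally as $\mathrm{e}c^2(\sigma^2 + 1/t_0^2)$ rather than just $O(c^2\sigma^2 + c^2/t_0^2)$. The delicate point is that when $t_0$ is large (for instance $t_0 = +\infty$) the crude-bound regime only begins at $|t| \sim \sqrt{\log c}/\sigma$, a threshold independent of $t_0$, so the Taylor-remainder estimate must stay valid up to that scale; this rules out the lazy step of bounding $\exp(|t|V)$ by a constant and forces one to keep the genuine $\exp(\sigma^2 t^2/2)$ growth from the hypothesis, and to choose the auxiliary exponent $s'$ adaptively as a function of $|t|$ — interior to $(|t|, t_0)$, roughly $s' = |t| + \min(1/\sigma,\, (t_0-|t|)/2)$ — so that $(s'-|t|)^{-2}$ and $\exp(\sigma^2 s'^2/2)$ stay simultaneously bounded by the right multiples of $\sigma^2 + 1/t_0^2$ and powers of $c$.
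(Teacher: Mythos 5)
Your overall plan is structurally the same as the paper's (isolate the linear term by centering; use a crude product bound for large $|t|$; a quadratic estimate for small $|t|$; split at some threshold), but the paper replaces your Taylor remainder $\exp(x)\le 1+x+\tfrac12 x^2\exp(|x|)$ with a sharper device that eliminates the ``seam'' problem you correctly identify as the main obstacle. Concretely, the paper writes $\psi(x):=\mathrm{e}^x-x-1$ and uses the scaling inequality $\psi(tx)\le t^2\psi(x)$ for $0\le t\le 1$, $x\ge 0$ (together with $0\le\psi(-x)\le\psi(x)$). Fixing an arbitrary threshold $t'\in(0,t_0)$ and writing $U'=U-\mathbb{E}[U]$, for $|t|<t'$ this gives
\begin{equation*}
\mathbb{E}[\exp(tU')]=1+\mathbb{E}[\psi(tU')]\le 1+\mathbb{E}[\psi(|tU'|)]\le 1+\frac{t^2}{t'^2}\,\mathbb{E}[\psi(|t'U'|)]\le 1+\frac{t^2}{t'^2}\,\mathbb{E}[\exp(|t'U'|)],
\end{equation*}
and the last expectation has the auxiliary parameter $t'$ \emph{frozen}, not $|t|$, so it is controlled by the crude bound $\le c^2\exp(\sigma^2t'^2)$ uniformly in $t$. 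The crude regime $t'\le|t|<t_0$ is then massaged into exactly the same closed form $\exp\bigl(\tfrac{c^2}{t'^2}\exp(\sigma^2t'^2)\,t^2\bigr)$, after which the constant $\mathrm{e}c^2(\sigma^2+1/t_0^2)$ drops out from the single choice $t'^2=1/(\sigma^2+1/t_0^2)$, using $\exp(\sigma^2t'^2)\le\mathrm{e}$. No adaptive $s'$, no seam-matching.

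By contrast, your remainder keeps $\exp(|t||W|)$ with the running $|t|$ inside, which is what forces the adaptive interior point $s'$ and the case split. More to the point, the intermediate sufficient condition you propose --- ``it suffices to show $\mathbb{E}[W^2\exp(|t||W|)]\le 2K$ on the small-$|t|$ regime'' --- is not actually achievable up to the seam: take $t_0=+\infty$ and $c$ large, so $K=\mathrm{e}c^2\sigma^2$ and your small regime extends to $|t|\sim\sqrt{\log c}/\sigma$. Your own estimate carries a factor $\exp(|t|b)$ with $b=\mathbb{E}[V]\lesssim c\sigma$, and near the threshold this is $\exp\bigl(\Theta(c\sqrt{\log c})\bigr)$, which dwarfs $2K=2\mathrm{e}c^2\sigma^2$ for large $c$. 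So one cannot pass through $1+\tfrac{t^2}{2}\mathbb{E}[W^2\exp(|t||W|)]\le 1+Kt^2$; one would have to compare directly with the full $\exp(Kt^2)$, i.e.\ genuinely fight the seam. Your plan could probably be pushed through with a larger constant, but obtaining the stated $\mathrm{e}c^2(\sigma^2+1/t_0^2)$ along this route is not established by the argument as written, and the $\psi$-scaling identity is precisely the ingredient your sketch is missing.
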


\begin{proof}
    For $U' = U - \mathbb{E} [U]$ we have $|U'| \le |V| + \mathbb{E}[V]$ almost surely. Take an arbitrary $t'\in (0, t_0)$. For $t' \le |t| < t_0$,
    \begin{align*}
        & \mathbb{E}\left[ \exp\left( t U' \right) \right] \le \mathbb{E}\left[ \exp\left( |t|(V+\mathbb{E}[V]) \right) \right]\\
        ={}& \mathbb{E}\left[ \exp\left( |t|V \right) \right] \cdot \exp\left( |t| \mathbb{E}[V] \right) \le \left( \mathbb{E}\left[ \exp\left( |t|V \right) \right] \right)^2\\
        \le{}& c^2 \exp\left( \sigma^2 t^2 \right) \le \exp\left( \left( \sigma^2 + \frac{c^2}{t'^2} \right) t^2 \right)\\
        \le{}& \exp\left( \frac{c^2}{t'^2} \left( 1 + \sigma^2 t'^2 \right) t^2 \right) \le \exp\left( \frac{c^2}{t'^2} \exp\left( \sigma^2 t'^2 \right) t^2 \right).
    \end{align*}
    Now define $\psi(x) = \mathrm{e}^x - x - 1$. It is easy to verify that $0\le \psi(-x) \le \psi(x)\ (x\ge 0)$ and $\psi(tx)\le t^2 \psi(x)\ (0\le t\le 1,\ x\ge 0)$. So
    \begin{align*}
        & \mathbb{E}\left[ \exp\left( t U' \right) \right] = 1 + \mathbb{E}\left[ U' \right] + \mathbb{E} \left[ \psi\left( tU' \right) \right]\\
        \le{}& 1 + \mathbb{E} \left[ \psi\left( \left| tU' \right| \right) \right] = 1 + \mathbb{E} \left[ \psi\left( \left| \frac{t}{t'} \right| \left| t'U' \right| \right) \right]\\
        \le{}& 1 + \frac{t^2}{t'^2} \mathbb{E}\left[ \psi\left( \left| t'U' \right| \right) \right] \le 1 + \frac{t^2}{t'^2} \mathbb{E}\left[ \exp\left( \left| t'U' \right| \right) \right] \\
        \le{}& 1 + \frac{t^2}{t'^2} \cdot c^2 \exp\left( \sigma^2 t'^2 \right) \le \exp\left( \frac{c^2}{t'^2} \exp\left( \sigma^2 t'^2 \right) t^2 \right).
    \end{align*}
    Hence for $|t| < t_0$,
    \begin{equation*}
        \mathbb{E}\left[ \exp\left( tU' \right) \right] \le \exp\left( \frac{c^2}{t'^2} \exp\left( \sigma^2 t'^2 \right) t^2 \right).
    \end{equation*}
    Letting $t'^2 = 1 / (\sigma^2 + 1 / t_0^2) < t_0^2$, the right hand side is not greater than
    \begin{multline*}
        \exp\left( c^2 \left( \sigma^2 + 1 / t_0^2 \right) \exp\left( \frac{\sigma^2}{\sigma^2 + 1 / t_0^2} \right) t^2 \right)\\
        \le \exp\left( \mathrm{e} c^2 \left( \sigma^2 + 1 / t_0^2 \right) t^2 \right).
    \end{multline*}
\end{proof}

We then prove $\theta^\star, \theta^o, \theta'(t)\ (t\ge 0)$ all drop in $D$ with high probability. Obviously $\theta^{\star} \in D$. Let
\begin{equation*}
    \hat{D} = \left\{ \theta \in \Theta_S:\ \ell(\theta)\le \ell(0) = \log 2 \right\},
\end{equation*}
then $\theta^o, \theta'(t)\in \hat{D}$ since
\begin{equation*}
    \ell\left( \theta^o \right) = \min_{\theta\in \Theta_S} \ell(\theta) \le \ell\left( \theta'(t) \right) \le \ell(0) = \log 2.
\end{equation*}
It suffices to show $\hat{D} \subseteq D$ with high probability. If $\hat{D} \nsubseteq D$, we can find some $\theta' \in \hat{D} - D$, and then can find some $\theta''\in \hat{D}\cap \partial D\ (\in D)$ on the line segment between $0$ and $\theta'$. So
\begin{multline*}
    \sup_{\theta\in D} \left| \frac{1}{n} \sum_{i=1}^n h\left( \theta_{S_\alpha}; x_S^{(i)}, y^{(i)} \right) - \mathbb{E} \left[ h\left( \theta_{S_\alpha}; X_S, y \right) \right] \right|\\
    \ge L\left( \theta'' \right) - \ell\left( \theta'' \right) \ge 1 - \log 2,
\end{multline*}
where
\begin{equation*}
    h\left( \theta_{S_\alpha}; X_S, y \right) = \log \left( 1 + \exp\left( - \left( \alpha + \beta_S^T X_S \right) y \right) \right).
\end{equation*}
It is easy to see
\begin{align*}
    & \left\| \nabla_{\theta_{S_\alpha}} h\left( \theta_{S_\alpha}; X_S, y \right) \right\|_2 \le \sqrt{1 + \left\| X_S \right\|_2^2}
\end{align*}
holds for any $\theta\in D$, with the right hand side having second order moment $\lesssim s$. Besides,
\begin{equation*}
    \left| h\left( \theta_{S_\alpha}; X_S, y \right) \right| \le V := \log 2 + |\alpha| + |W|,
\end{equation*}
with
\begin{equation*}
    W := \beta_S^T X_S \sim N(0,\ \beta_S^T \Sigma_{S,S} \beta_S)
\end{equation*}
and
\begin{equation*}
    \beta_S^T \Sigma_{S,S} \beta_S \le \|\Sigma_{S,S}\|_{\mathrm{F}} \cdot \|\beta_S\|_2^2 \le 8s/\lambda.
\end{equation*}
So for $0\le t < 1$ we have
\begin{align*}
    & \mathbb{E} \left[ \exp\left( t V \right) \right]\\
    \le{}& \exp((\log 2 + |\alpha|) t) \cdot \mathbb{E} \left[ \exp\left( t W \right) + \exp\left( - t W \right) \right]\\
    \le{}& 2 \exp\left( \left( \log 2 + \sqrt{8/\lambda} \right)t \right) \exp\left( (4s/\lambda) t^2 \right)\\
    \le{}& 4 \exp\left( \sqrt{8/\lambda} \right) \cdot \exp\left( (4s/\lambda) t^2 \right),
\end{align*}
and by \Cref{thm:lemma-mgf-control}, for $|t|<1$,
\begin{multline*}
    \mathbb{E} \left[ \exp\left( t \left( h(\theta_{S_\alpha}; X_S, y) - \mathbb{E} \left[ h(\theta_{S_\alpha}; X_S, y) \right] \right) \right) \right]\\
    \le \exp\left( 16 \mathrm{e}\cdot \exp\left( \sqrt{32/\lambda} \right) \left( (8s/\lambda) + 1 \right) t^2 \right)\\
    \sim \exp \left( s t^2 \right).
\end{multline*}
And then by \Cref{thm:lemma-ULLN}, with probability $\lesssim 1/p$,
\begin{multline*}
    \sup_{\theta\in D} \left| \frac{1}{n} \sum_{i=1}^n h\left( \theta_S; x_S^{(i)}, y^{(i)} \right) - \mathbb{E} \left[ h\left( \theta_S; X_S, y \right) \right] \right|\\
    \ge 1 - \log 2
\end{multline*}
as long as $n \gtrsim s^2\log p$. To conclude, $\theta^\star, \theta^o, \theta'(t)\in D$ with high probability. From now we assume $\theta^{\star}, \theta^o, \theta'(t)\in D$. We then prove the left hand side of RSC \cref{eq:rsc-glbiss} holds with high probability. Since $\nabla_{S_\alpha, S_\alpha}^2 L(\theta)\succeq 2\lambda I$, it suffices to show that with high probability,
\begin{equation*}
    \left\| \nabla_{S_\alpha, S_\alpha}^2 \ell(\theta) - \nabla_{S_\alpha, S_\alpha}^2 L(\theta) \right\|_2 \le \lambda,
\end{equation*}
which is equivalent to
\begin{equation*}
    \sup_{\|\varphi\|_2 \le 1} \left| \frac{1}{n} \sum_{i=1}^n h \left( \theta_{S_\alpha}, \varphi; X_S \right) - \mathbb{E} \left[ h\left( \theta_{S_\alpha}, \varphi; x_S^{(i)} \right) \right] \right| \le \lambda,
\end{equation*}
where
\begin{equation*}
    h\left( \theta_{S_\alpha}, \varphi; X_S \right) = \eta\left( \alpha, \beta_S; X_S \right) \left( \varphi_\alpha + \varphi^T X_S \right)^2.
\end{equation*}
It is easy to find some $M(X)$ with second order moment $\lesssim s^{3/2}$, such that
\begin{equation*}
    \left\| \nabla_{(\theta_{S_\alpha}, \varphi)} h\left( \theta_{S_\alpha}, \varphi; X_S \right) \right\|_2 \le M(X)
\end{equation*}
holds for any $(\theta, \varphi)\in D \times \{\varphi\in \mathbb{R}^{s+1}:\ \|\varphi\|_2 \le 1\}$, using the fact that $\|(\theta_{S_\alpha}, \varphi)\|_2$ is bounded by a constant. Besides,
\begin{equation*}
    \left| h\left( \theta_{S_\alpha}, \varphi; X_S \right) \right| \le \left( 1 + \left\| X_S \right\|_2 \right)^2 \le V := 2 + 2 \left\| X_S \right\|_2^2,
\end{equation*}
Note that
\begin{align*}
    &\mathbb{E}\left[ V^k \right] \le \frac{(2(s+1))^k}{s+1} \mathbb{E} \left[ 1 + \sum_{j\in S} \mathbb{E} \left[ X_j^{2k} \right] \right]\\
    \le{}& (2(s+1))^k \left( 2k-1 \right)!! = (s+1)^k \frac{(2k)!}{k!}\le (2s)^k \frac{(2k)!}{k!}.
\end{align*}
So for $0\le t < 1 / (16s)$,
\begin{align*}
    &\mathbb{E}\left[ \exp\left( tV \right) \right] = \sum_{k=0}^{\infty} \frac{1}{k!} \mathbb{E} \left[ (tV)^k \right] \le \sum_{k=0}^{\infty} {2k \choose k} (2st)^k\\
    ={}& \frac{1}{\sqrt{1 - 8st}} \le \exp\left( 4st + 32s^2 t^2 \right) \le 2 \exp\left( 32 s^2 t^2 \right),
\end{align*}
and by \Cref{thm:lemma-mgf-control}, for $|t| < 1/(16s)$,
\begin{multline*}
    \mathbb{E} \left[ \exp\left( t \left( h\left( \theta_{S_\alpha}, \varphi; X_S \right) - \mathbb{E} \left[ h\left( \theta_{S_\alpha}, \varphi; X_S \right) \right] \right) \right) \right]\\
    \le \exp\left( 1280 \mathrm{e} \cdot s^2 t^2 \right) \lesssim \exp\left( s^2 t^2 \right).
\end{multline*}
And then by \Cref{thm:lemma-ULLN}, with probability $\lesssim 1/p$,
\begin{equation*}
    \sup_{\|\varphi\|_2 \le 1} \left| \frac{1}{n} \sum_{i=1}^n h \left( \theta_{S_\alpha}, \varphi; X_S \right) - \mathbb{E} \left[ h\left( \theta_{S_\alpha}, \varphi; x_S^{(i)} \right) \right] \right| \ge \lambda
\end{equation*}
as long as $n \gtrsim s^3 \log p$. To conclude, the left hand side of RSC holds with high probability. In the same way, we can deal with the right hand side. From now we assume that RSC holds.

Similarly we can prove
\begin{multline*}
    \mathbb{P} \left( \sup_{\theta\in D} \left\| \nabla_{\cdot, S_\alpha}^2 \ell(\theta) - \nabla_{\cdot, S_\alpha}^2 L(\theta) \right\|_{\infty} \ge \epsilon \right)\\
    \lesssim \exp\left( s \log p - \frac{n\epsilon^2}{s} \right)
\end{multline*}
and then obviously
\begin{multline}
    \label{eq:H-mean-upper-uniform}
    \mathbb{P} \left( \sup_{\theta\in D} \left\| \bar{H}_{\cdot, S_\alpha} (\theta) - \mathbb{E} \left[ \bar{H}_{\cdot, S_\alpha} (\theta) \right] \right\|_{\infty} \ge \epsilon \right)\\
    \lesssim \exp\left( s\log p - \frac{n\epsilon^2}{s} \right)
\end{multline}
which will be useful later.

We then prove IRR \cref{eq:irr-glbiss} holds with high probability.

\begin{lemma}
    \label{thm:lemma-irr-const}
    For any $\theta\in \Theta_S$, we have
    \begin{equation*}
        \nabla_{S^c, S_{\alpha}}^2 L(\theta) = \mathrm{irr}_0 \cdot \nabla_{S_{\alpha}, S_{\alpha}}^2 L(\theta),
    \end{equation*}
    where
    \begin{equation*}
        \mathrm{irr}_0 := \left( 0_{p-s},\ \Sigma_{S^c, S} \Sigma_{S,S}^{-1} \right) \in \mathbb{R}^{(p-s)\times (s+1)}.
    \end{equation*}
\end{lemma}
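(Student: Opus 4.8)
The plan is to read off the blocks of $\nabla^2 L(\theta)$ from the identity $\nabla^2 L(\theta) = \mathbb{E}[(1, X^T)^T \eta(\alpha,\beta;X)(1,X^T)]$ and then exploit the Gaussian structure of $X$. First I would note that for $\theta = (\alpha,\beta^T)^T \in \Theta_S$ we have $\beta_{S^c}=0$, hence $\alpha + \beta^T X = \alpha + \beta_S^T X_S$, so the scalar weight $\eta(\alpha,\beta;X)=\eta(\alpha,\beta_S;X_S)$ depends on $X$ only through $X_S$. Writing $\eta := \eta(\alpha,\beta_S;X_S)$ (a $\sigma(X_S)$-measurable random variable), the blocks of $\nabla^2 L(\theta)$ indexed by $\{0\}\cup S\cup S^c$ are the moments $\mathbb{E}[\eta]$, $\mathbb{E}[\eta X_S^T]$, $\mathbb{E}[\eta X_{S^c}]$, $\mathbb{E}[\eta X_S X_S^T]$, $\mathbb{E}[\eta X_{S^c}X_S^T]$, and transposes.

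Next I would invoke the conditional-expectation formula for $X\sim N(0,\Sigma)$: conditionally on $X_S$, the subvector $X_{S^c}$ is Gaussian with $\mathbb{E}[X_{S^c}\mid X_S] = \Sigma_{S^c,S}\Sigma_{S,S}^{-1}X_S$ (the overall mean being $0$; here $\Sigma_{S,S}\succ 0$ since $\Sigma\succ 0$, so the inverse exists). Because $\eta$ is $\sigma(X_S)$-measurable, the tower property yields
\begin{equation*}
    \mathbb{E}[\eta\, X_{S^c}] = \mathbb{E}\!\left[\eta\, \mathbb{E}[X_{S^c}\mid X_S]\right] = \Sigma_{S^c,S}\Sigma_{S,S}^{-1}\,\mathbb{E}[\eta\, X_S], \qquad \mathbb{E}[\eta\, X_{S^c}X_S^T] = \Sigma_{S^c,S}\Sigma_{S,S}^{-1}\,\mathbb{E}[\eta\, X_S X_S^T].
\end{equation*}
In block notation these are exactly $\nabla_{S^c,0}^2 L(\theta) = \Sigma_{S^c,S}\Sigma_{S,S}^{-1}\,\nabla_{S,0}^2 L(\theta)$ and $\nabla_{S^c,S}^2 L(\theta) = \Sigma_{S^c,S}\Sigma_{S,S}^{-1}\,\nabla_{S,S}^2 L(\theta)$.

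Finally I would assemble the claim by multiplying out
\begin{equation*}
    \mathrm{irr}_0 \cdot \nabla_{S_\alpha, S_\alpha}^2 L(\theta) = \left( 0_{p-s},\ \Sigma_{S^c,S}\Sigma_{S,S}^{-1} \right) \begin{pmatrix} \nabla_{0,0}^2 L & \nabla_{0,S}^2 L \\ \nabla_{S,0}^2 L & \nabla_{S,S}^2 L \end{pmatrix} = \left( \Sigma_{S^c,S}\Sigma_{S,S}^{-1}\,\nabla_{S,0}^2 L,\ \ \Sigma_{S^c,S}\Sigma_{S,S}^{-1}\,\nabla_{S,S}^2 L \right),
\end{equation*}
which by the two identities above equals $\left(\nabla_{S^c,0}^2 L,\ \nabla_{S^c,S}^2 L\right) = \nabla_{S^c, S_\alpha}^2 L(\theta)$. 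There is essentially no obstacle; the one point worth a sentence is that the intercept column of $\mathrm{irr}_0$ is $0$ — and this is forced, since the Gaussian regression of $X_{S^c}$ onto $(1,X_S)$ has no intercept term (means are zero), so no $\alpha$-coordinate enters the relation between $\mathbb{E}[\eta X_{S^c}]$ and the $S$-moments of $\eta$. Any care about $L$ being finite and smooth is inherited from the standing assumptions used to define $\nabla^2 L(\theta) = \mathbb{E}[\nabla^2\ell(\alpha,\beta)]$ earlier.
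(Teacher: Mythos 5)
Your proof is correct and follows essentially the same route as the paper: the paper writes $X_{S^c} = U + \Sigma_{S^c,S}\Sigma_{S,S}^{-1}X_S$ with $U$ independent of $X_S$ and factors the expectation, while you invoke the equivalent Gaussian conditional-expectation formula $\mathbb{E}[X_{S^c}\mid X_S]=\Sigma_{S^c,S}\Sigma_{S,S}^{-1}X_S$ together with the tower property. Both hinge on the same two observations — that $\eta$ is $\sigma(X_S)$-measurable on $\Theta_S$ and that the Gaussian regression of $X_{S^c}$ onto $X_S$ is linear with zero intercept — so the arguments are interchangeable.
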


\begin{proof}
    Note that for $U := X_{S^c} - \Sigma_{S^c, S} \Sigma_{S,S}^{-1} X_S$,
    \begin{multline*}
        \begin{pmatrix} X_S\\ U \end{pmatrix} = \begin{pmatrix} I_s & 0 \\ - \Sigma_{S^c, S} \Sigma_{S,S}^{-1} & I_{p-s} \end{pmatrix} \begin{pmatrix} X_S \\ X_{S^c} \end{pmatrix}\\
        \sim N\left( 0,\ \begin{pmatrix} \Sigma_{S,S} & 0\\ 0 & \Sigma_{S^c, S^c} - \Sigma_{S^c, S} \Sigma_{S,S}^{-1} \Sigma_{S, S^c} \end{pmatrix} \right),
    \end{multline*}
    which implies that $X_S$ is independent of $U$. So
    \begin{align*}
        & \nabla_{S^c, S_{\alpha}}^2 L(\theta)\\
        ={}& \mathbb{E} \left[ \eta\left( \alpha, \beta_S; X_S \right) X_{S^c} \left( 1,\ X_S^T \right) \right]\\
        ={}& \mathbb{E} \left[ \eta\left( \alpha, \beta_S; X_S \right) U \left( 1,\ X_S^T \right) \right]\\
        & + \mathbb{E} \left[ \eta\left( \alpha, \beta_S; X_S \right) \Sigma_{S^c, S} \Sigma_{S,S}^{-1} X_S \left( 1,\ X_S^T \right) \right]\\
        ={}& \mathbb{E} \left[ U \right] \cdot \mathbb{E} \left[ \eta \left( \alpha, \beta_S; X_S \right) \left( 1,\ X_S^T \right) \right]\\
        & + \Sigma_{S^c, S} \Sigma_{S,S}^{-1} \cdot \mathbb{E} \left[ \eta\left( \alpha, \beta_S; X_S \right) X_S \left(1, X_S^T \right) \right]\\
        ={}& \mathrm{irr}_0 \cdot \nabla_{S_{\alpha}, S_{\alpha}}^2 L(\theta).
    \end{align*}
\end{proof}

We then prove \cref{eq:irr-glbiss-b} holds with high probability. By \Cref{thm:lemma-irr-const},
\begin{align*}
    & \left\| \overline{\mathrm{irr}}(t) - \mathrm{irr}_0 \right\|_{\infty}\\
    \le{}& \left\| \left( \bar{H}_{S^c, S_{\alpha}} \left( \theta'(t) \right) \right. \right. \\
    & \quad \left. \left. - \mathbb{E} \left[ \bar{H}_{S^c, S_{\alpha}}\left( \theta'(t) \right) \right] \right) \bar{H}_{S_{\alpha}, S_{\alpha}}\left( \theta'(t) \right)^{-1} \right\|_{\infty}\\
    & + \left\| \mathbb{E} \left[ \bar{H}_{S^c, S_{\alpha}}\left( \theta'(t) \right) \right] \right.\\
    & \quad \left. \cdot \left( \bar{H}_{S_{\alpha}, S_{\alpha}}\left( \theta'(t) \right)^{-1} - \mathbb{E} \left[ \bar{H}_{S_{\alpha}, S_{\alpha}}\left( \theta'(t) \right) \right]^{-1} \right) \right\|_{\infty}\\
    \le{}& \left\| \bar{H}_{S^c, S_{\alpha}}\left( \theta'(t) \right) - \mathbb{E} \left[ \bar{H}_{S^c, S_{\alpha}}\left( \theta'(t) \right) \right] \right\|_{\infty} \\
    & \quad \cdot \left\| \bar{H}_{S_{\alpha}, S_{\alpha}}\left( \theta'(t) \right)^{-1} \right\|_{\infty}\\
    & + \left\| \mathrm{irr}_0 \cdot \mathbb{E} \left[ \bar{H}_{S_{\alpha}, S_{\alpha}}\left( \theta'(t) \right) \right] \right.\\
    & \quad \left. \cdot \left( \bar{H}_{S_{\alpha}, S_{\alpha}}\left( \theta'(t) \right)^{-1} - \mathbb{E} \left[ \bar{H}_{S_{\alpha}, S_{\alpha}}\left( \theta'(t) \right) \right]^{-1} \right) \right\|_{\infty}.
\end{align*}
The first term of the right hand side is not greater than
\begin{align*}
    & \left\| \bar{H}_{S^c, S_{\alpha}}(t) - \mathbb{E} \left[ \bar{H}_{S^c, S_{\alpha}}(t) \right] \right\|_{\infty} \sqrt{2s} \left\| \bar{H}_{S_{\alpha}, S_{\alpha}}(t)^{-1} \right\|_2\\
    \le{}& \left\| \bar{H}_{\cdot, S_{\alpha}}(t) - \mathbb{E} \left[ \bar{H}_{\cdot, S_{\alpha}}(t) \right] \right\|_{\infty} \frac{\sqrt{2s}}{\lambda},
\end{align*}
and the right hand side is not greater than
\begin{align*}
    &\left\| \mathrm{irr}_0 \right\|_{\infty} \cdot \left\| \mathbb{E} \left[ \bar{H}_{S_{\alpha}, S_{\alpha}}(t) \right] \bar{H}_{S_{\alpha}, S_{\alpha}}(t)^{-1} - I \right\|_{\infty}\\
    \le{}& (1 - \eta) \left\| \left( \mathbb{E} \left[ \bar{H}_{S_{\alpha}, S_{\alpha}}(t) \right] - \bar{H}_{S_{\alpha}, S_{\alpha}}(t) \right) \bar{H}_{S_{\alpha}, S_{\alpha}}(t)^{-1} \right\|_{\infty}\\
    \le{}& \left\| \bar{H}_{S^c, S_{\alpha}}(t) - \mathbb{E} \left[ \bar{H}_{S^c, S_{\alpha}}(t) \right] \right\|_{\infty} \left\| \bar{H}_{S_{\alpha}, S_{\alpha}}(t)^{-1} \right\|_{\infty}\\
    \le{}& \left\| \bar{H}_{\cdot, S_{\alpha}}(t) - \mathbb{E} \left[ \bar{H}_{\cdot, S_{\alpha}}(t) \right] \right\|_{\infty} \frac{\sqrt{2s}}{\lambda}.
\end{align*}
Hence
\begin{equation}
    \label{eq:irr-mean-upper-uniform}
    \left\| \overline{\mathrm{irr}}(t) - \mathrm{irr}_0 \right\|_{\infty} \le \frac{3\sqrt{s}}{\lambda} \left\| \bar{H}_{\cdot, S_{\alpha}}(t) - \mathbb{E} \left[ \bar{H}_{\cdot, S_{\alpha}}(t) \right] \right\|_{\infty}.
\end{equation}
By \Cref{eq:H-mean-upper-uniform}, the right hand side is not greater than $C = 1$ with high probability as long as $n\gtrsim s^3 \log p$. To conclude, \cref{eq:irr-glbiss-b} holds with high probability. From now we assume \cref{eq:irr-glbiss-b} holds.

Finally we prove \cref{eq:irr-glbiss-a} holds with high prbability. Since we have assumed that $\kappa \rightarrow +\infty$, the left hand side of \cref{eq:irr-glbiss-a} is not greater than
\begin{multline*}
    \left\| \int_0^T \left( \overline{\mathrm{irr}}(t) - \mathrm{irr}_0 \right) \begin{pmatrix} 0\\ \dot{\rho}_S'(t) \end{pmatrix} \mathrm{d}t \right\|_{\infty}\\
    + \left\| \mathrm{irr}_0 \right\|_{\infty} \cdot \left\| \begin{pmatrix} 0\\ \rho_S'(t) \end{pmatrix} \right\|_{\infty},
\end{multline*}
and the second term is not greater than $1 - \eta$. Thus it suffices to show
\begin{equation}
    \label{eq:R-int-upper-infty}
    \left\| \int_0^T R(t) \mathrm{d}t \right\|_{\infty} \le \frac{\eta}{2}
\end{equation}
for any $T\ge 0$, where
\begin{equation}
    \begin{split}
        R(t) &:= \left( \overline{\mathrm{irr}}(t) - \mathrm{irr}_0 \right) \begin{pmatrix} 0\\ \dot{\rho}_S'(t) \end{pmatrix}\\
        & = \left( \overline{\mathrm{irr}}(t) - \mathrm{irr}_0 \right) \left( - \nabla_{S_\alpha} \ell \left( \theta'(t) \right) \right) = R_1(t) + R_2(t),
    \end{split}
\end{equation}
and
\begin{align*}
    R_1(t) &= - \left( \overline{\mathrm{irr}}(t) - \mathrm{irr}_0 \right) \bar{H}_{S_\alpha, S_\alpha} \left( \theta'(t) \right) \left( \theta_{S_\alpha}'(t) - \theta_{S_\alpha}^\star \right),\\
    R_2(t) &= - \left( \overline{\mathrm{irr}}(t) - \mathrm{irr}_0 \right) \nabla_{S_\alpha} \ell\left( \theta^\star \right).
\end{align*}
According to the lower bound of $\beta_{\min}^{\star}$ condition and \cref{eq:glbiss-orc-cstc-sign}, we have that for $t\ge T_0 := (\eta/48) \sqrt{n/\log p}$, $d(t) = 0$, and the solution path is constant. Thus it suffices to show \cref{eq:R-int-upper-infty} for $0\le T\le T_0$. By \Cref{thm:lemma-HI}, with high probability
\begin{equation*}
    \left\| \nabla \ell\left( \theta^{\star} \right) \right\|_{\infty} \le 8 \sqrt{\frac{\log p}{n}}.
\end{equation*}
By \cref{eq:glbiss-orc-cstc-l2} and \cref{eq:theta-upper-l2}, we have that with high probability
\begin{align*}
    &\left\| R_1(t) \right\|_{\infty} \le \left\| \overline{\mathrm{irr}}(t) - \mathrm{irr}_0 \right\|_{\infty} \cdot \Lambda \left\| \theta_{S_\alpha}'(t) - \theta_{S_\alpha}^\star \right\|_2\\
    \le{}&
    \begin{cases}
        (\eta\sqrt{\lambda})/(20\Lambda),& n\gtrsim s^3 \log p,\\
        (\eta\lambda) / (12\Lambda \sqrt{s} \log n) \cdot \sqrt{s}/t,& n / (\log n)^2 \gtrsim s^4 \log p.
    \end{cases}
\end{align*}
So for $0\le T\le T_0$,
\begin{align*}
    \left\| \int_0^T R_1(t) \mathrm{d}t \right\|_{\infty} &\le \int_0^1 \left\| R_1(t) \right\|_{\infty} \mathrm{d}t + \int_1^{T_0} \left\| R_1(t) \right\|_{\infty}\\
    & \le \frac{\eta\sqrt{\lambda}}{20\Lambda} \cdot \Lambda \sqrt{\frac{8}{\lambda}} + \frac{\eta\lambda}{12\Lambda \log n} \cdot \log T\\
    & \le \frac{\eta}{6} + \frac{\eta}{6} = \frac{\eta}{3}
\end{align*}
as long as $n/(\log n)^2 \gtrsim s^4 \log p$. Besides,
\begin{align*}
    & \left\| \int_0^T R_2(t) \mathrm{d}t \right\|_{\infty} \le T_0 \cdot 8 \sqrt{\frac{\log p}{n}} = \frac{\eta}{6}
\end{align*}
with high probability. To conclude, \cref{eq:irr-glbiss-a} holds with high probability.

\section{Ising Model --- Special Case of General Discrete Markov Random Fields}

Recall \cref{eq:model-ising} for Ising model, with $(\alpha^{\star}, \beta^{\star})$ replaced by $(h^{\star}, J^{\star})$. In some research areas, people are accustomed to studying $x' = (x + 1_p) / 2$ whose distribution is given by
\begin{multline}
    \label{eq:model-ising10}
    \mathbb{P}\left( x' = \left( x_1', \ldots, x_p' \right)^T \right)\\
    \propto \exp\left( \sum_{j=1}^p \alpha_{j;1}^{\star} x_j' + \sum_{1\le j<j'\le p} \beta_{j,j';1,1}^{\star} x_j' x_{j'}' \right),\\
    x_1', \ldots, x_p' \in \left\{ 1,0 \right\},
\end{multline}
where
\begin{multline*}
    \beta_{j,j';1,1}^{\star} = 2J_{j,j'}^{\star},\ \alpha_{j;1}^{\star} = h_j^{\star} - \sum_{j'\neq j} J_{j,j'}^{\star}\\
    \Longleftrightarrow J_{j,j'}^{\star} = \frac{\beta_{j,j';1,1}^{\star}}{2},\ h_j^{\star} = \alpha_{j;1}^{\star} + \sum_{j'\neq j} \frac{\beta_{j,j'; 1,1}^{\star}}{2}.
\end{multline*}
One can still consider GLBI, with $\theta = (\alpha, \beta)$ where $\alpha = (\alpha_{j;1})_{1\le j\le p},\ \beta = (\beta_{j,j';1,1})_{j<j'}$. Besides, \cref{eq:model-ising10} can actually be viewed as a special case of general discrete Markov random fields with reduced parameters. See \Cref{sec:gdmrf}.

\section{Application: GLBI for Learning General Discrete Markov Random Fields}
\label{sec:gdmrf}

Similar with the discussion in \citet{ravikumar_high-dimensional_2010}, we may consider a random vector $x = \left( x_1, \ldots, x_p \right)^T$ whose distribution is given by
\begin{multline}
    \label{eq:model-gdmrf}
    \mathbb{P}\left( x = \left( x_1,\ldots,x_p \right)^T \right)\\
    \propto \exp\left( \sum_{\substack{1\le j\le p\\ 1\le l\le q}} \alpha_{r;l}^{\star} 1_{x_j = l} + \sum_{\substack{1\le j<j'\le p\\ 1\le l,l'\le q}} \beta_{j,j'; l,l'}^{\star} 1_{x_j=l} 1_{x_j'=l'} \right)\\
    = \exp\left( \sum_{j=1}^p \alpha_{j;x_j}^{\star} + \sum_{1\le j<j'\le p} \beta_{j,j'; x_j,x_{j'}}^{\star} \right),\\
    x_1, \ldots, x_p \in \mathcal{X} = \left\{ 1,2,\ldots,q \right\},
\end{multline}
where
\begin{gather*}
    \alpha_j^{\star} = ( \alpha_{j;l}^{\star} ) \in \mathbb{R}^q\ (1\le j\le p),\\
    \beta_{j,j'}^{\star} = ( \beta_{j,j'; l,l'}^{\star} ) \in\mathbb{R}^{q\times q}\ (1\le j<j'\le p).\protect\footnote{For convenience we let $\beta_{j,j}^{\star} = 0_{q\times q}\ (1\le j\le p)$ and $\beta_{j',j}^{\star} = \beta_{j,j'}^{\star T}\ (1\le j<j' \le p)$.}
\end{gather*}
Due to the redundancy of $1_{x_j = q}$ which equals to $1 - \sum_{l<q} 1_{x_j = l}$, we can further assume that $\alpha_{j; q}^{\star} = 0$ and $\beta_{j,j'; l,q}^{\star} = 0$ for actual needs in some cases, but in other cases we generally do not make such assumption in order to keep the symmetry of parameters.

The Potts model, which has applications in computational chemistry, is a special case of \cref{eq:model-gdmrf}. Actually, the distribution of $q$-state (standard) Potts model is given by
\begin{multline*}
    \mathbb{P}\left( x = \left( x_1, \ldots, x_p \right)^T \right)\\
    \propto \exp\left( \sum_{\substack{1\le j\le p\\ 1\le l\le q}} \alpha_{j;l}^{\star} 1_{x_j=l} + \sum_{\substack{\left( j,j' \right)\in E\\ l=l'}} \beta_{j,j'; l,l'}^{\star} 1_{x_j=l} 1_{x_{j'}=l'} \right),
\end{multline*}
where $E$ indicates the edges of the true graph.

The Ising model \cref{eq:model-ising10} is also a special case of \cref{eq:model-gdmrf}. Let $q = 2$ and reduce the parameters, the distribution of $x$ has a simple form as follows
\begin{multline*}
    \mathbb{P} \left( x = (x_1, \ldots, x_p)^T \right)\\
    \propto \exp \left( \sum_{j=1}^p \alpha_{j;1}^{\star} 1_{x_j = 1} + \sum_{1\le j<j' \le p} \beta_{j,j'; 1,1}^{\star} 1_{x_j=1} 1_{x_{j'}=1} \right).
\end{multline*}
Thus the distribution of $x' := 2\cdot 1_p - x \in \left\{ 1,0 \right\}^p$ is given by
\begin{multline*}
    \mathbb{P}\left( x' = \left( x_1', \ldots, x_p' \right)^T \right)\\
    \propto \exp\left( \sum_{j=1}^p \alpha_{j;1}^{\star} x_j' + \sum_{1\le j<j'\le p} \beta_{j,j';1,1}^{\star} x_j' x_{j'}' \right),\\
    x_1', \ldots, x_p' \in \left\{ 1,0 \right\},
\end{multline*}
which is just \cref{eq:model-ising10}.

For \cref{eq:model-gdmrf}, we assume the graph is sparse, i.e. most $q\times q$ blocks of $\beta$ are $0$. Let
\begin{equation*}
    \alpha^{\star} =
    \begin{pmatrix}
        \alpha_1\\
        \vdots\\
        \alpha_p
    \end{pmatrix}
    \in \mathbb{R}^{qp},\
    \beta^{\star} =
    \begin{pmatrix}
        \beta_{1,1}^{*} & \cdots & \beta_{1,p}^{\star}\\
        \beta_{2,1}^{*} & \cdots & \beta_{2,p}^{\star}\\
        & \ddots & \\
        \beta_{p,1}^{*} & \cdots & \beta_{p,p}^{\star}
    \end{pmatrix}
    \in \mathbb{R}^{qp\times qp}.
\end{equation*}
The goal is to find the graph structure, i.e. to determine which blocks are totally $0$, given the sample matrix $X = ( x^{(1)}, \ldots, x^{(n)} )^T\in \mathbb{R}^{n\times p}$. Here we do not reduce the parameters, and it is invalid and unnecessary to make estimations on the true parameters.

Note that for $1\le l\le q$,
\begin{multline*}
    \mathbb{P}_{\alpha, \beta} \left( x_j = l | x_{-j} \right) =\\
    \frac{\exp\left( \alpha_{j;l} + \sum_{j'\neq j,\ 1\le l'\le q} \beta_{j,j';l,l'} 1_{x_{j'} = l'} \right)}{\sum_{m=1}^q \exp\left( \alpha_{j;m} + \sum_{j'\neq j,\ 1\le l'\le q} \beta_{j,j';m,l'} 1_{x_{j'} = l'} \right)}.
\end{multline*}
Consider the negative composite conditional log-likelihood
\begin{multline*}
    \ell\left( \alpha, \beta \right) = - \frac{1}{n} \sum_{\substack{1\le j\le p\\ 1\le i\le n}} \log \mathbb{P}_{\alpha, \beta} \left( x_j = x_j^{(i)} | x_{-j} = x_{-j}^{(i)} \right)\\
    = \frac{1}{n} \sum_{\substack{1\le j\le p \\ 1\le i\le n}} \left( \sum_{l=1}^q \alpha_{j;l} 1_{x_j^{(i)} = l} + {} \right.\\
    \left. \sum_{\substack{j'\neq j\\ 1\le l,l'\le q}} \beta_{j,j';l,l'} 1_{x_j^{(i)} = l} 1_{x_{j'}^{(i)} = l'} - {} \right.\\
    \left. \log \left( \sum_{m=1}^q \exp \left( \alpha_{j;m} + \sum_{\substack{j'\neq j\\ 1\le l'\le q}} \beta_{j,j';m,l'} 1_{x_{j'}^{(i)} = l'} \right) \right) \right).
\end{multline*}
A typical regularization approach is to solve the following optimization problem
\begin{gather*}
    \left( \hat{\alpha}, \hat{\beta} \right) = \arg\min_{\alpha, \beta} \left( \ell\left( \alpha, \beta \right) + \lambda P\left( \beta \right) \right),\\
    P\left( \beta \right) = \sum_{1\le j<j'\le p} \left\| \beta_{j,j'} \right\|_{\mathrm{F}},
\end{gather*}
where $\left\| \cdot \right\|_{\mathrm{F}}$ indicates the Frobenius norm. However, the GLBISS has the form
\begin{align*}
    \dot{\alpha}(t) / \kappa &= - \nabla_{\alpha} \ell\left( \alpha(t), \beta(t) \right),\\
    \dot{\rho}(t) + \dot{\beta}(t) / \kappa &= - \nabla_{\beta} \ell\left( \alpha(t), \beta(t) \right),\\
    \rho(t)&\in \partial P \left( \beta(t) \right),
\end{align*}
where $\rho(0) = \beta(0) = 0$. Let $z(t) = \rho(t) + \beta(t)/\kappa$, and we can view $\kappa \rho(t) + \beta(t) - \kappa z(t) = 0$ as the first order optimality condition of the following optimization problem,
\begin{align*}
    &\beta(t)\\
    \in{}& \arg\min_{\beta} \left( \kappa P\left( \beta \right) + \frac{1}{2} \left\| \beta - \kappa z(t) \right\|_\mathrm{F}^2 \right)\\
    ={}& \arg\min_{\beta} \sum_{1\le j<j'\le p} \left( \kappa \left\| \beta_{j,j'} \right\|_{\mathrm{F}} + \frac{1}{2} \left\| \beta_{j,j'} - \kappa z_{j,j'}(t) \right\|_{\mathrm{F}}^2 \right).
\end{align*}
Hence
\begin{align*}
    & \mathrm{vec}\left( \beta_{j,j'}(t) \right)\\
    ={}& \arg\min_{\mathrm{vec}\left( \beta_{j,j'} \right)} \left( \kappa \left\| \mathrm{vec}\left( \beta_{j,j'} \right) \right\|_2 \right.\\
    & \left. + \left\| \mathrm{vec}\left( \beta_{j,j'} \right) - \kappa\cdot \mathrm{vec}\left( z_{j,j'}(t) \right) \right\|_2^2 / 2 \right)\\
    ={}& \mathrm{prox}_{\kappa \left\|\cdot \right\|_2} \left( \kappa\cdot \mathrm{vec}\left( z_{j,j'}(t) \right) \right)\\
    ={}& \left( 1 - \frac{\kappa}{\left\| \kappa \cdot \mathrm{vec}\left( z_{j,j'}(t) \right) \right\|_2} \right)_{+} \cdot \kappa\cdot \mathrm{vec}\left( z_{j,j'}(t) \right)\\
    \Longrightarrow {}& \beta_{j,j'}(t) = \kappa \left( 1 - \frac{1}{\left\| z_{j,j'}(t) \right\|_{\mathrm{F}}} \right)_{+} z_{j,j'}(t),
\end{align*}
and the GLBISS has an alternative form
\begin{align*}
    \dot{\alpha}(t) &= - \kappa \nabla_{\alpha} \ell\left( \beta(t) \right),\\
    \dot{z}(t) &= - \nabla_{\beta} \ell\left( \beta(t) \right),\\
    \beta_{j,j'}(t) &= \kappa \left( 1 - \left\| z_{j,j'}(t) \right\|_{\mathrm{F}}^{-1} \right)_{+} z_{j,j'}(t)\ (1\le j<j'\le p),
\end{align*}
where $z(0) = \beta(0) = 0$. One can similarly derive the form of the corresponding GLBI.

As for the exact form of $\nabla \ell(\alpha, \beta)$, it is not hard to find that
\begin{align*}
    & \nabla_{\alpha_{j; l}} \ell\left( \alpha, \beta \right)\\
    ={}& \frac{1}{n} \sum_{i=1}^n \left( \frac{\exp\left( M_{j;l}^{(i)} \right)} {\sum_{m=1}^q \exp\left( M_{j;m}^{(i)} \right)} - \hat{X}_{j;l}^{(i)} \right),\\
    & \nabla_{\beta_{j,j'; l,l'}} \ell\left( \alpha, \beta \right)\\
    ={}& \frac{1}{n} \sum_{i=1}^n \left( \frac{\exp\left( M_{j;l}^{(i)} \right)} {\sum_{m=1}^q \exp\left( M_{j;m}^{(i)} \right)} - \hat{X}_{j;l}^{(i)} \right) \hat{X}_{j';l'}^{(i)}\\
    & + \frac{1}{n} \sum_{i=1}^n \left( \frac{\exp\left( M_{j';l'}^{(i)} \right)} {\sum_{m=1}^q \exp\left( M_{j';m}^{(i)} \right)} - \hat{X}_{j';l'}^{(i)} \right) \hat{X}_{j;l}^{(i)},
\end{align*}
where
\begin{equation*}
    \hat{X} = \left( 1_{x_j^{(i)}=l} \right)\in \mathbb{R}^{n\times qp},\ M = 1_n \alpha^T + \hat{X} \beta \in \mathbb{R}^{n\times qp}.
\end{equation*}

\section{More Experimental Results}
\label{sec:exp-ext}

Here we list \Cref{tab:simu-logistic-p80-s20-m2}, \ref{tab:simu-logistic-p200-s50-m1}, \ref{tab:simu-logistic-p200-s50-m2} for \Cref{sec:simu-logistic}, and \Cref{tab:simu-ising-p100} for \Cref{sec:simu-ising}. Besides, we have \Cref{fig:real-nips-ext} for \Cref{sec:real-nips}.

For the setting $p = 80,\ s = 20,\ M = 1,\ r = 0.25$ of \Cref{sec:simu-logistic}, we can plot \Cref{fig:simu-logistic-cv}, the CV curves of prediction error. We observe that the CV curve of prediction error by GLBI often drops more rapidly as $k$ increases, than that by \texttt{glmnet} as $\lambda$ decreases, indicating bias reduction in a possible increase of variance. A proper early stopping for GLBI path often provides us a relatively good estimator with small prediction error.

\begin{figure}
    \centering
    \includegraphics[width = 0.8\linewidth]{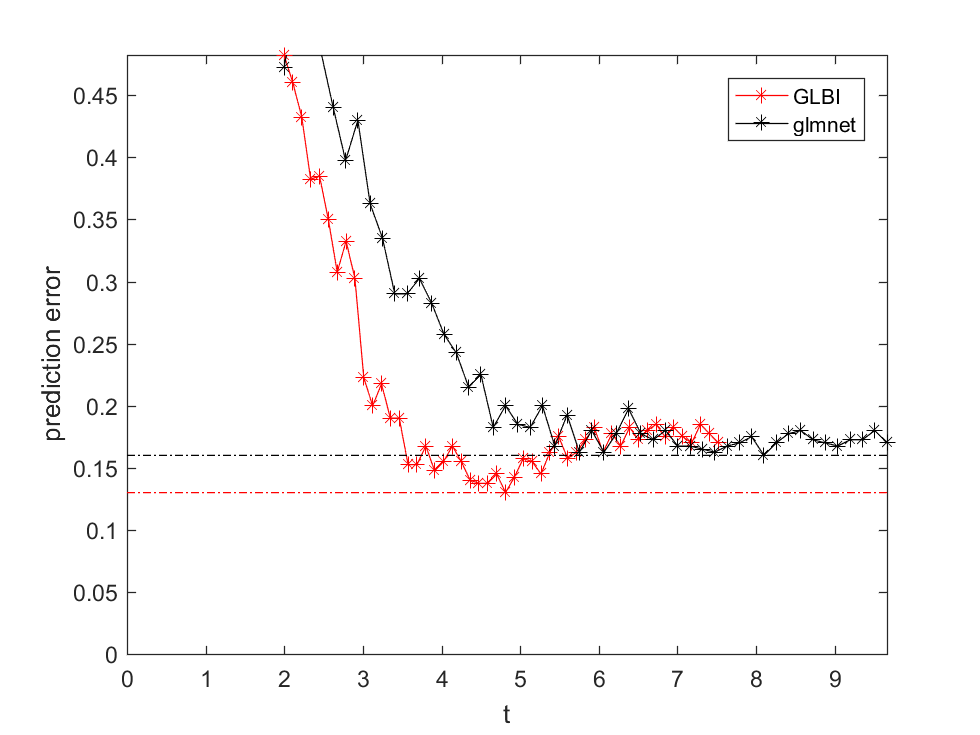}
    \caption{CV curves of prediction error. The red curve represents the CV estimates of prediction error by GLBI, and the dashed red line represents the smallest value among them. The black curve represents the CV estimates of prediction error by \texttt{glmnet}, and the dashed black line represents the smallest value among them. The CV curve of prediction error by GLBI often drops more rapidly as $k$ increases, than that by \texttt{glmnet} as $\lambda$ decreases, indicating bias reduction in a possible increase of variance. A proper early stopping for GLBI path often provides us a relatively good estimator with small prediction error.}
    \label{fig:simu-logistic-cv}
\end{figure}

\begin{table}[!t]
    \caption{Comparisons between GLBI and \texttt{glmnet}, for logistic models with $p = 80,\ s = 20,\ M = 2$. For each algorithm, we run $\mathrm{rep} = 20$ independent experiments.}
    \label{tab:simu-logistic-p80-s20-m2}
    \centering
    \begin{tabular}{cccccc}
        \toprule
        & & \multicolumn{2}{c}{AUC} & \multicolumn{2}{c}{prediction error}\\
        \cmidrule(lr){3-4} \cmidrule(lr){5-6}
        & & GLBI & \texttt{glmnet} & GLBI & \texttt{glmnet}\\
        \cmidrule(lr){1-6}
        $0.25$ & $400$ & $\boldsymbol{.9925}$ & $.9917$ & $\boldsymbol{.0871}$ & $.0935$\\
               &       & $(.0070)$ & $(.0074)$ & $(.0137)$ & $(.0126)$\\
               & $800$ & $.9994$ & $\boldsymbol{.9995}$ & $\boldsymbol{.0663}$ & $.0701$\\
               &       & $(.0018)$ & $(.0015)$ & $(.0089)$ & $(.0071)$\\
        \cmidrule(lr){1-6}
        $0.5$  & $400$ & $\boldsymbol{.9810}$ & $.9803$ & $\boldsymbol{.0914}$ & $.0961$\\
               &       & $(.0129)$ & $(.0119)$ & $(.0216)$ & $(.0202)$\\
               & $800$ & $.9932$ & $\boldsymbol{.9941}$ & $\boldsymbol{.0671}$ & $.0706$\\
               &       & $(.0073)$ & $(.0068)$ & $(.0108)$ & $(.0111)$\\
        \bottomrule
    \end{tabular}
\end{table}

\begin{table}[!t]
    \caption{Comparisons between GLBI and \texttt{glmnet}, for logistic models with $p = 200,\ s = 50,\ M = 1$. For each algorithm, we run $\mathrm{rep} = 20$ independent experiments.}
    \label{tab:simu-logistic-p200-s50-m1}
    \centering
    \begin{tabular}{cccccc}
        \toprule
        & & \multicolumn{2}{c}{AUC} & \multicolumn{2}{c}{prediction error}\\
        \cmidrule(lr){3-4} \cmidrule(lr){5-6}
        $r$ & $n$ & GLBI & \texttt{glmnet} & GLBI & \texttt{glmnet}\\
        \cmidrule(lr){1-6}
        $0.25$ & $600$  & $\boldsymbol{.9702}$ & $.9700$ & $\boldsymbol{.1248}$ & $.1358$\\
               &        & $(.0113)$ & $(.0108)$ & $(.0147)$ & $(.0151)$\\
               & $1000$ & $\boldsymbol{.9948}$ & $.9937$ & $\boldsymbol{.1001}$ & $.1124$\\
               &        & $(.0031)$ & $(.0035)$ & $(.0078)$ & $(.0104)$\\
        \cmidrule(lr){1-6}
        $0.5$  & $600$  & $\boldsymbol{.9391}$ & $.9374$ & $\boldsymbol{.1391}$ & $.1401$\\
               &        & $(.0158)$ & $(.0158)$ & $(.0161)$ & $(.0150)$\\
               & $1000$ & $\boldsymbol{.9749}$ & $.9729$ & $\boldsymbol{.1053}$ & $.1112$\\
               &        & $(.0083)$ & $(.0090)$ & $(.0125)$ & $(.0128)$\\
        \bottomrule
    \end{tabular}
\end{table}

\begin{table}[!t]
    \caption{Comparisons between GLBI and \texttt{glmnet}, for logistic models with $p = 200,\ s = 50,\ M = 2$. For each algorithm, we run $\mathrm{rep} = 20$ independent experiments.}
    \label{tab:simu-logistic-p200-s50-m2}
    \centering
    \begin{tabular}{cccccc}
        \toprule
        & & \multicolumn{2}{c}{AUC} & \multicolumn{2}{c}{prediction error}\\
        \cmidrule(lr){3-4} \cmidrule(lr){5-6}
        & & GLBI & \texttt{glmnet} & GLBI & \texttt{glmnet}\\
        \cmidrule(lr){1-6}
        $0.25$ & $600$  & $\boldsymbol{.9771}$ & $.9757$ & $\boldsymbol{.1083}$ & $.1203$\\
               &        & $(.0103)$ & $(.0087)$ & $(.0180)$ & $(.0154)$\\
               & $1000$ & $\boldsymbol{.9962}$ & $.9951$ & $\boldsymbol{.0737}$ & $.0901$\\
               &        & $(.0025)$ & $(.0024)$ & $(.0068)$ & $(.0072)$\\
        \cmidrule(lr){1-6}
        $0.5$  & $600$  & $.9445$ & $\boldsymbol{.9449}$ & $.1235$ & $\boldsymbol{.1207}$\\
               &        & $(.0206)$ & $(.0162)$ & $(.0128)$ & $(.0122)$\\
               & $1000$ & $\boldsymbol{.9800}$ & $.9780$ & $\boldsymbol{.0819}$ & $.0888$\\
               &        & $(.0065)$ & $(.0070)$ & $(.0093)$ & $(.0083)$\\
        \bottomrule
    \end{tabular}
\end{table}

\begin{table}
    \caption{Comparisons of GLBI1 (GLBI + composite), GLBI2 (GLBI + MPF), and \texttt{glmnet}, for Ising models with $p = 100$. For each algorithm, we run $\mathrm{rep} = 20$ independent experiments.}
    \label{tab:simu-ising-p100}
    \centering
    \begin{tabular}{ccccc}
        \toprule
        & & \multicolumn{3}{c}{AUC}\\
        \cmidrule(lr){3-5}
        $T$ & $n$ & GLBI1 & GLBI2 & \texttt{glmnet}\\
        \cmidrule(lr){1-5}
        $1.25$ & $1000$ & $.9794$ & $\boldsymbol{.9888}$ & $.9807$\\
               &        & $(.0163)$ & $(.0120)$ & $(.0160)$\\
        \cmidrule(lr){2-5}
               & $1500$ & $.9890$ & $\boldsymbol{.9950}$ & $.9911$\\
               &        & $(.0078)$ & $(.0088)$ & $(.0051)$\\
        \cmidrule(lr){1-5}
        $1.5$  & $1000$ & $.9941$ & $\boldsymbol{.9984}$ & $.9955$\\
               &        & $(.0054)$ & $(.0015)$ & $(.0045)$\\
        \cmidrule(lr){2-5}
               & $1500$ & $.9974$ & $\boldsymbol{.9993}$ & $.9983$\\
               &        & $(.0028)$ & $(.0011)$ & $(.0019)$\\
        \bottomrule
    \end{tabular}
    \begin{tabular}{ccccc}
        \toprule
        & & \multicolumn{3}{c}{2nd order MDC}\\
        \cmidrule(lr){3-5}
        $T$ & $n$ & GLBI1 & GLBI2 & \texttt{glmnet}\\
        \cmidrule(lr){1-5}
        $1.25$ & $1000$ & $\boldsymbol{.9872}$ & $.9865$ & $.9868$\\
               &        & $(.0046)$ & $(.0055)$ & $(.0050)$\\
        \cmidrule(lr){2-5}
               & $1500$ & $\boldsymbol{.9912}$ & $.9911$ & $.9904$\\
               &        & $(.0014)$ & $(.0014)$ & $(.0017)$\\
        \cmidrule(lr){1-5}
        $1.5$  & $1000$ & $\boldsymbol{.9820}$ & $.9817$ & $.9814$\\
               &        & $(.0057)$ & $(.0061)$ & $(.0059)$\\
        \cmidrule(lr){2-5}
               & $1500$ & $.9874$ & $\boldsymbol{.9876}$ & $.9868$\\
               &        & $(.0022)$ & $(.0021)$ & $(.0023)$\\
        \bottomrule
    \end{tabular}
\end{table}

\begin{figure}
    \centering
    \includegraphics[width = 0.325\linewidth]{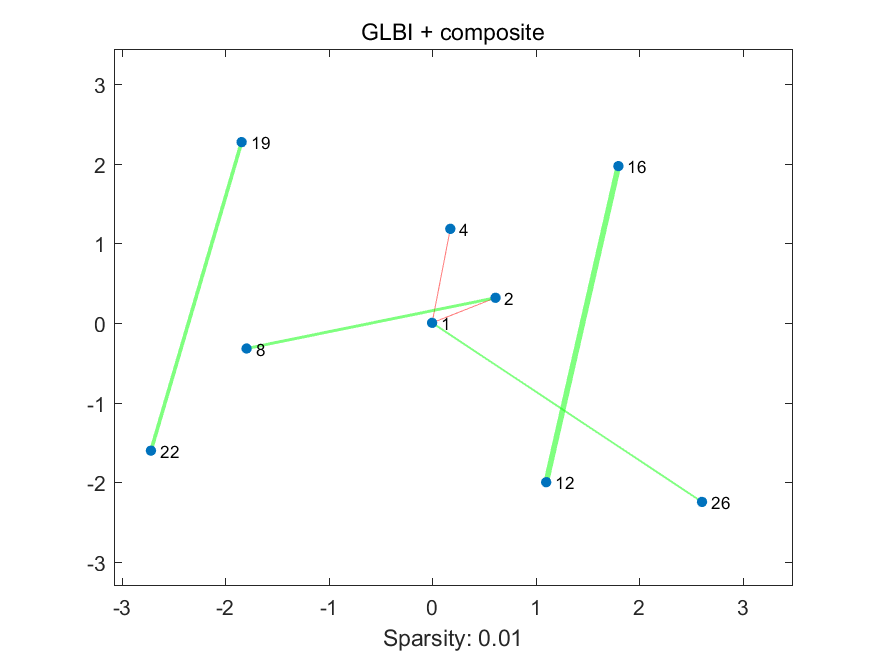}
    \includegraphics[width = 0.325\linewidth]{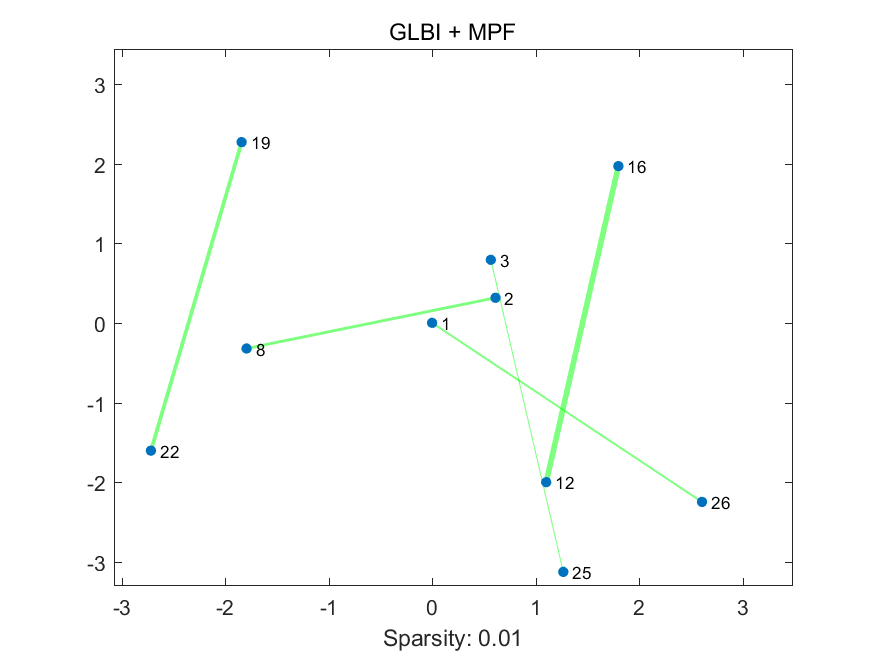}
    \includegraphics[width = 0.325\linewidth]{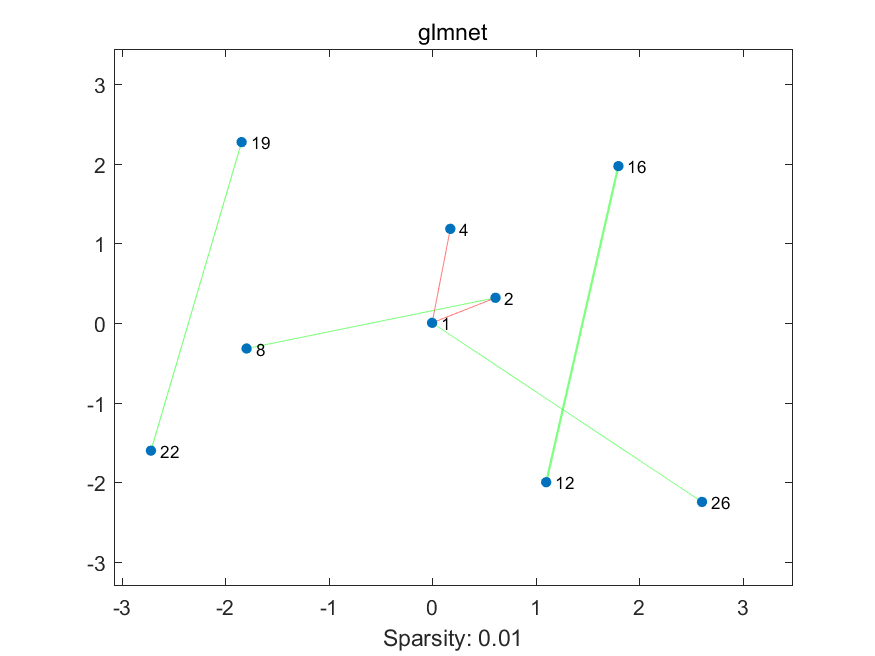}\\
    \includegraphics[width = 0.325\linewidth]{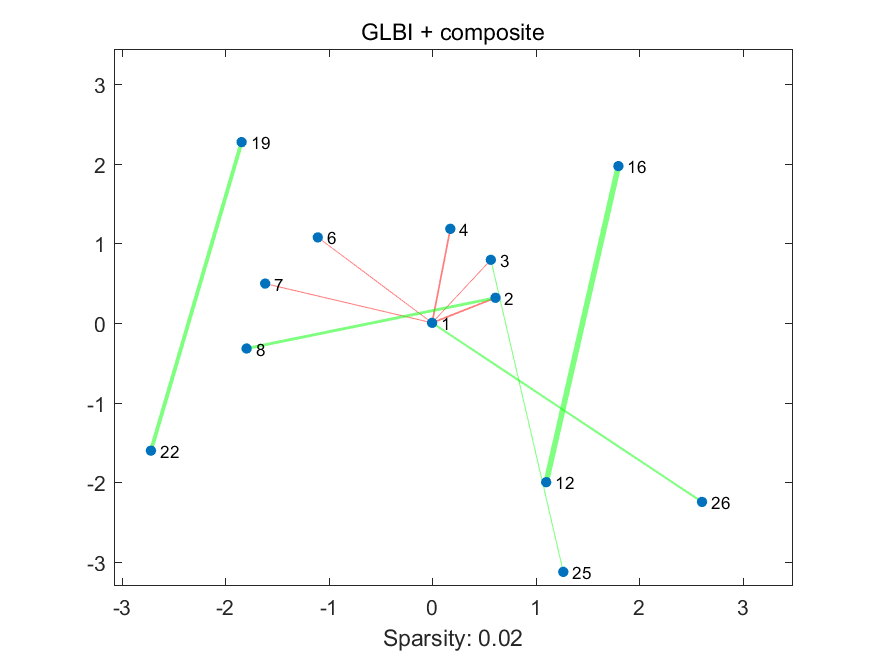}
    \includegraphics[width = 0.325\linewidth]{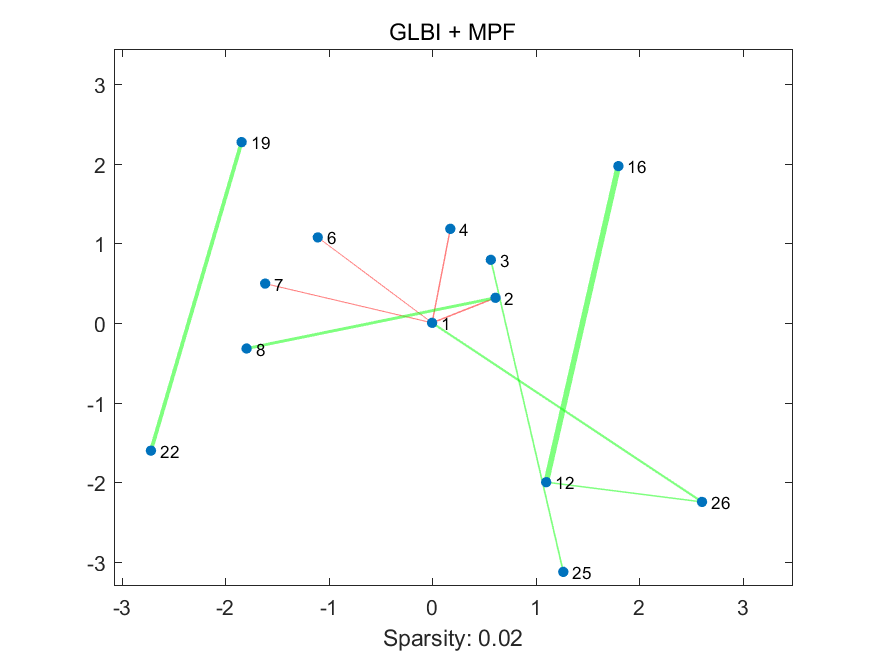}
    \includegraphics[width = 0.325\linewidth]{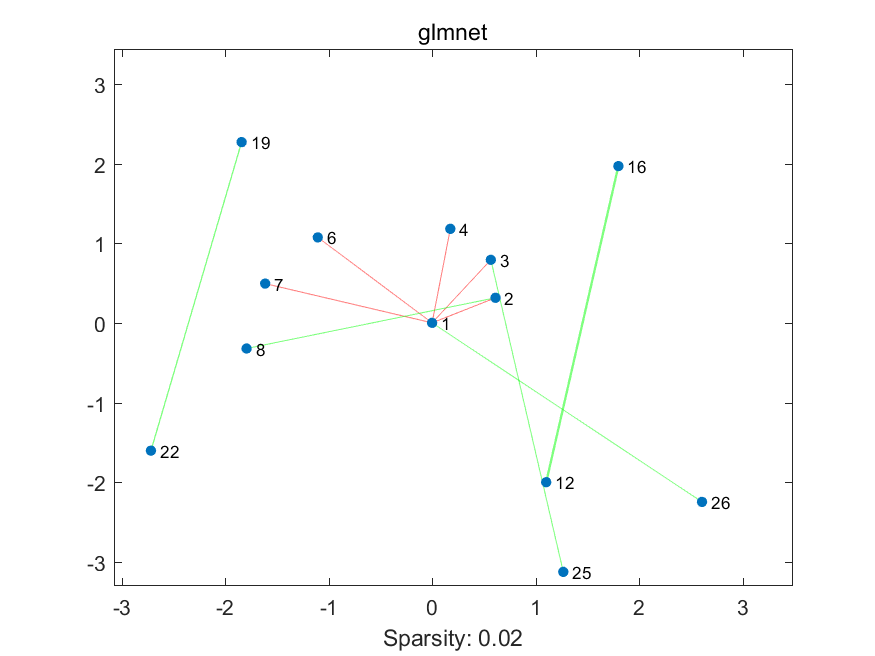}\\
    \includegraphics[width = 0.325\linewidth]{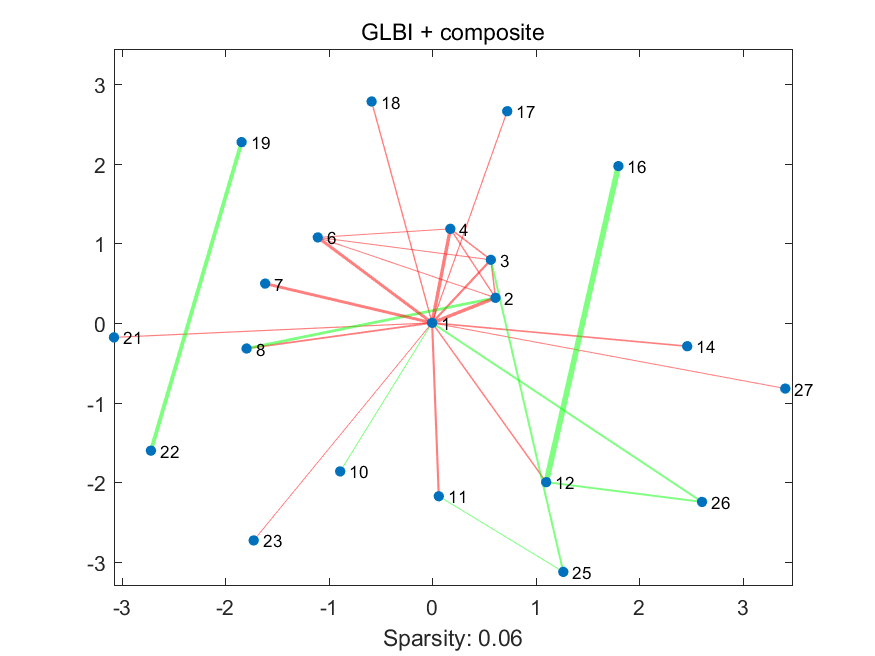}
    \includegraphics[width = 0.325\linewidth]{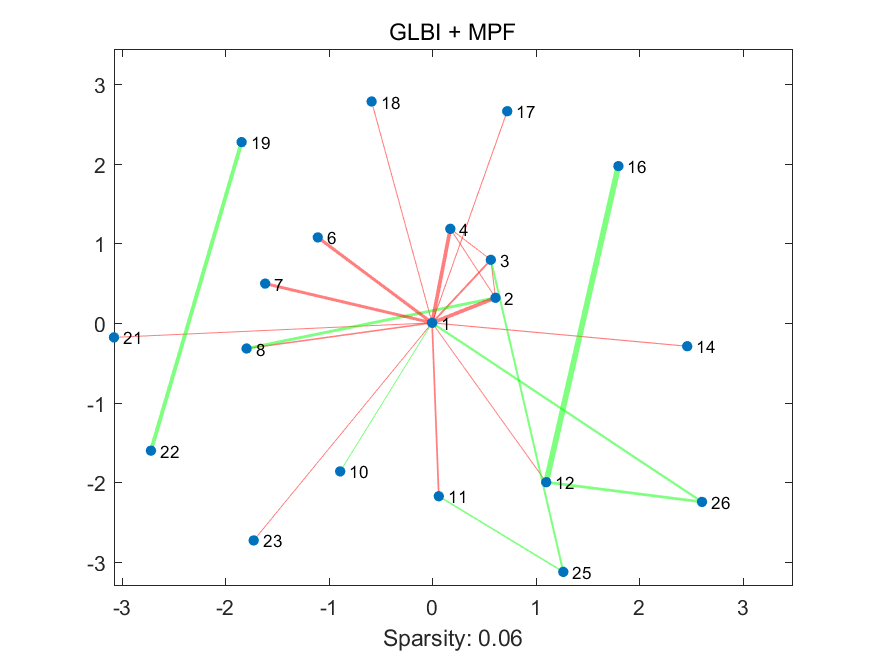}
    \includegraphics[width = 0.325\linewidth]{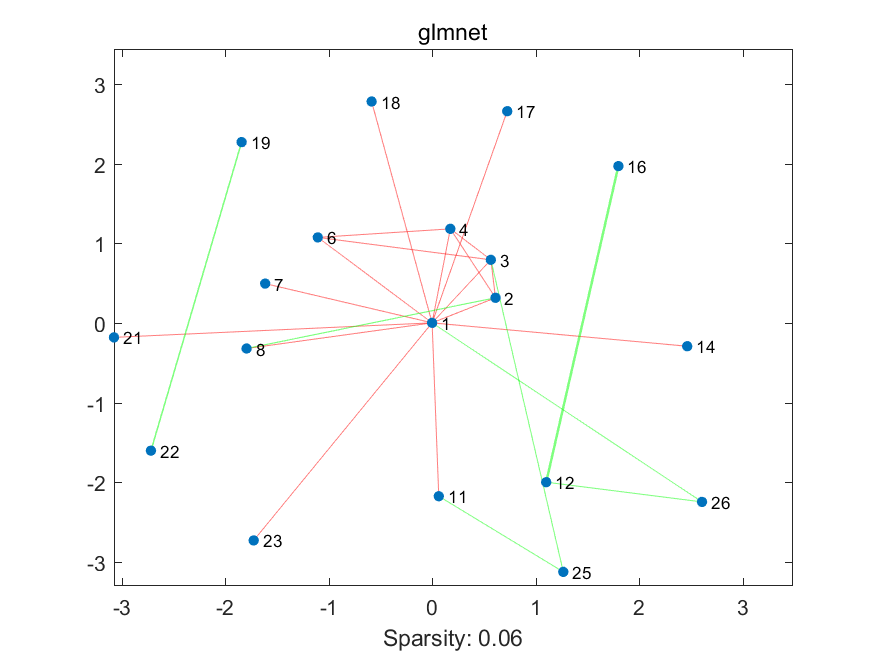}\\
    \includegraphics[width = 0.325\linewidth]{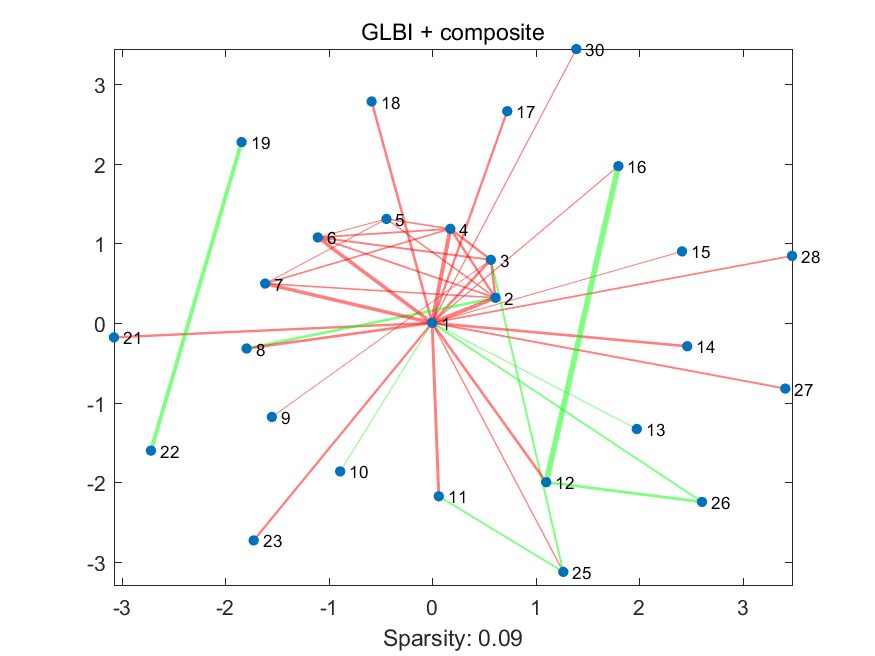}
    \includegraphics[width = 0.325\linewidth]{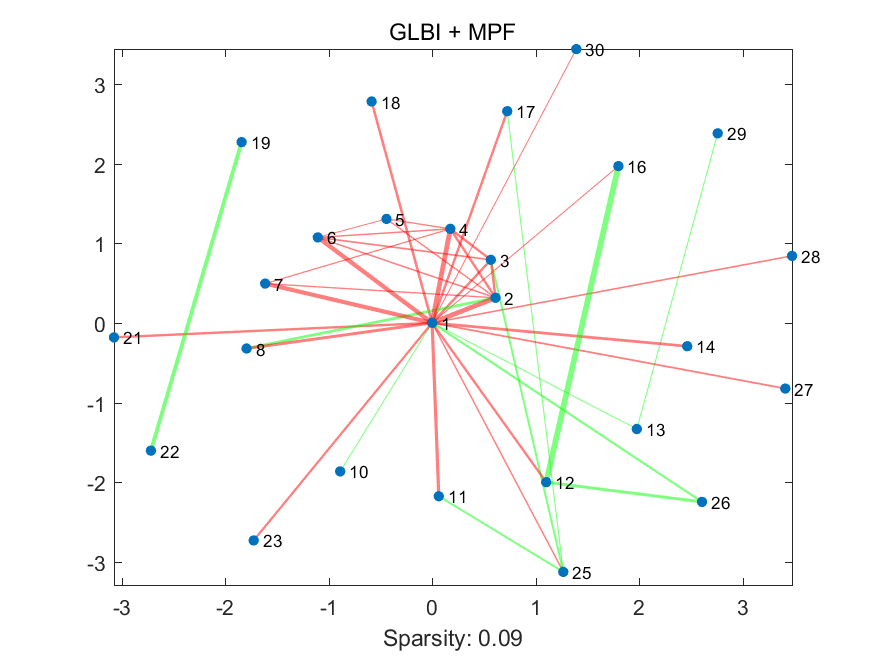}
    \includegraphics[width = 0.325\linewidth]{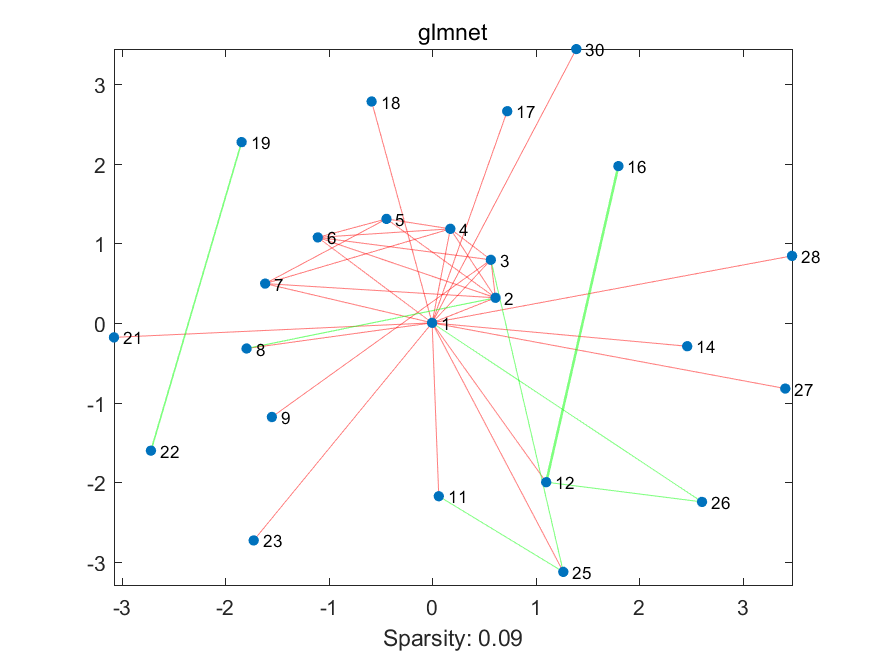}\\
    \includegraphics[width = 0.325\linewidth]{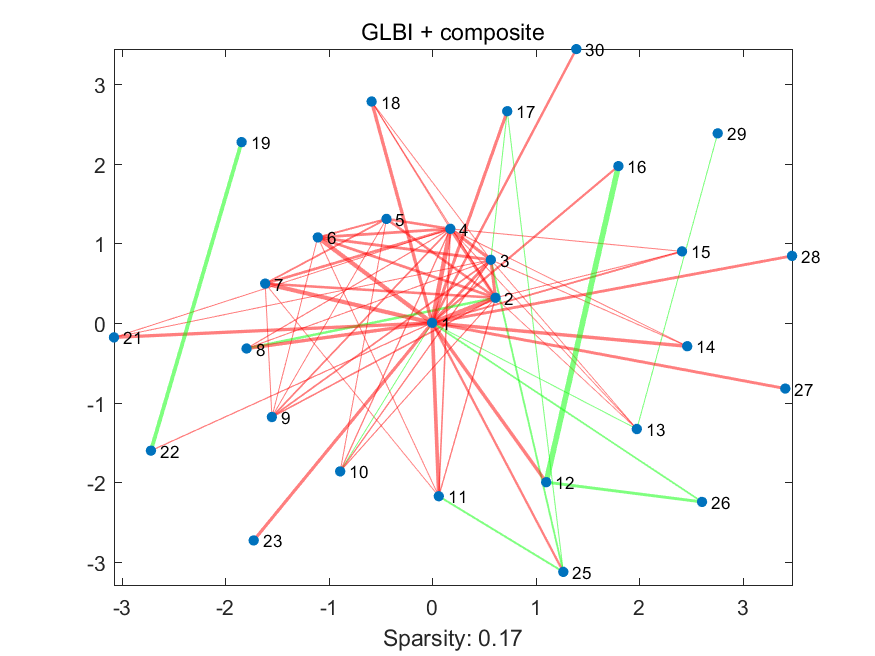}
    \includegraphics[width = 0.325\linewidth]{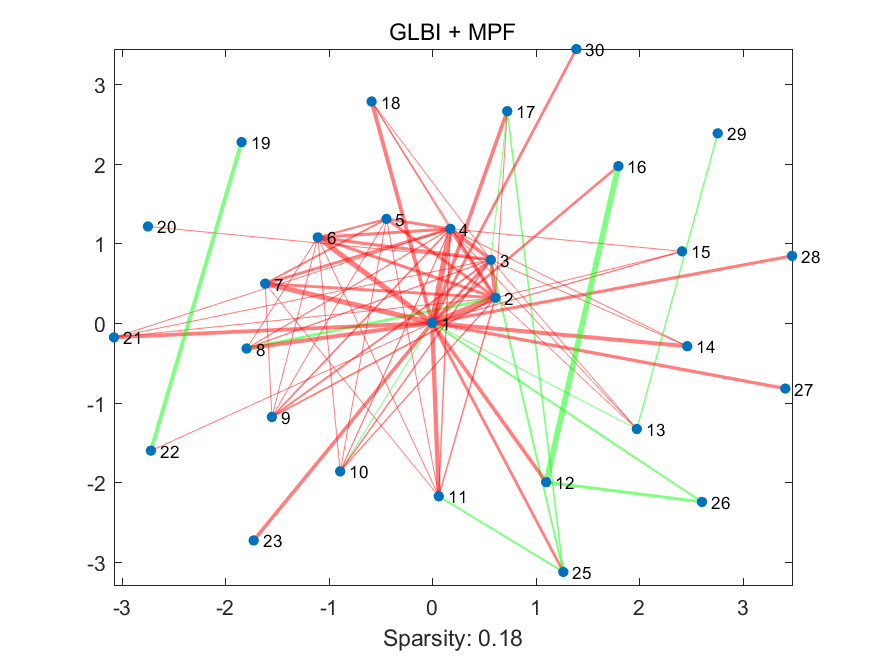}
    \includegraphics[width = 0.325\linewidth]{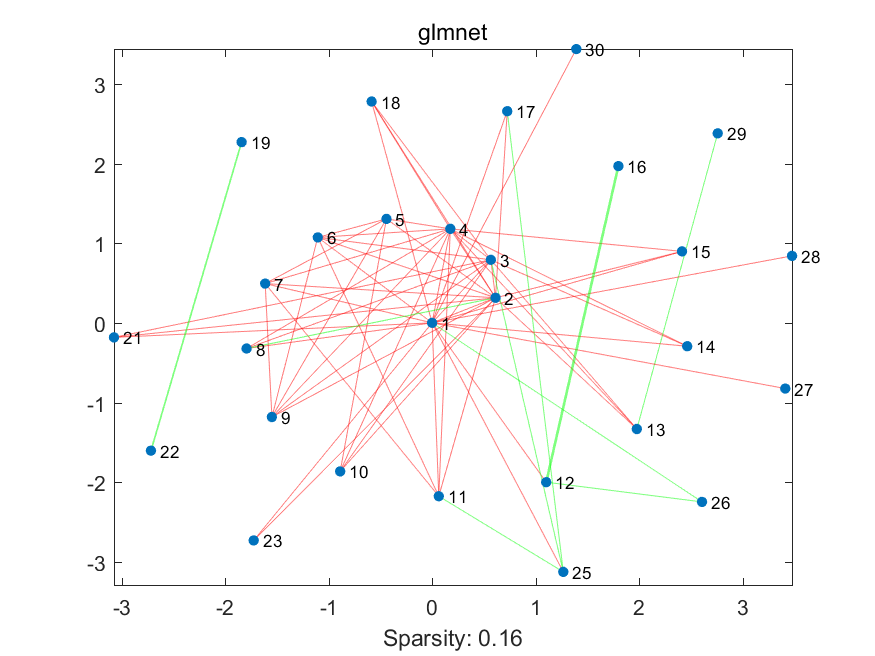}\\
    \includegraphics[width = 0.325\linewidth]{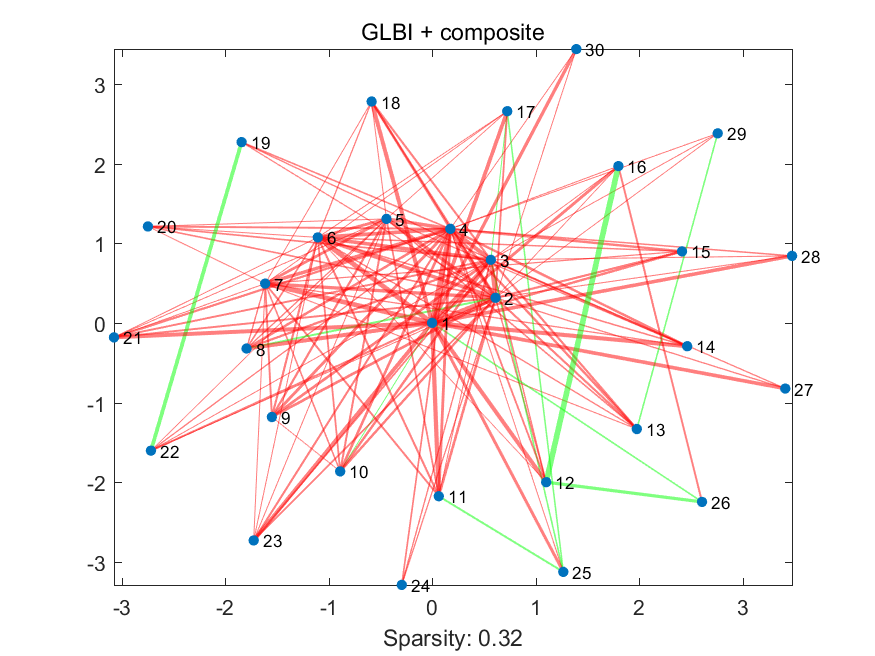}
    \includegraphics[width = 0.325\linewidth]{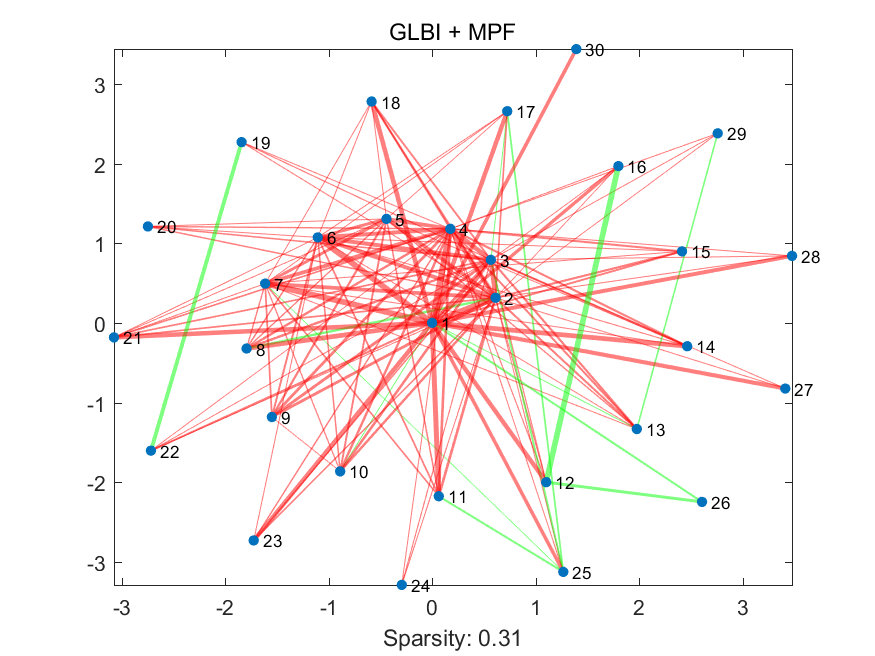}
    \includegraphics[width = 0.325\linewidth]{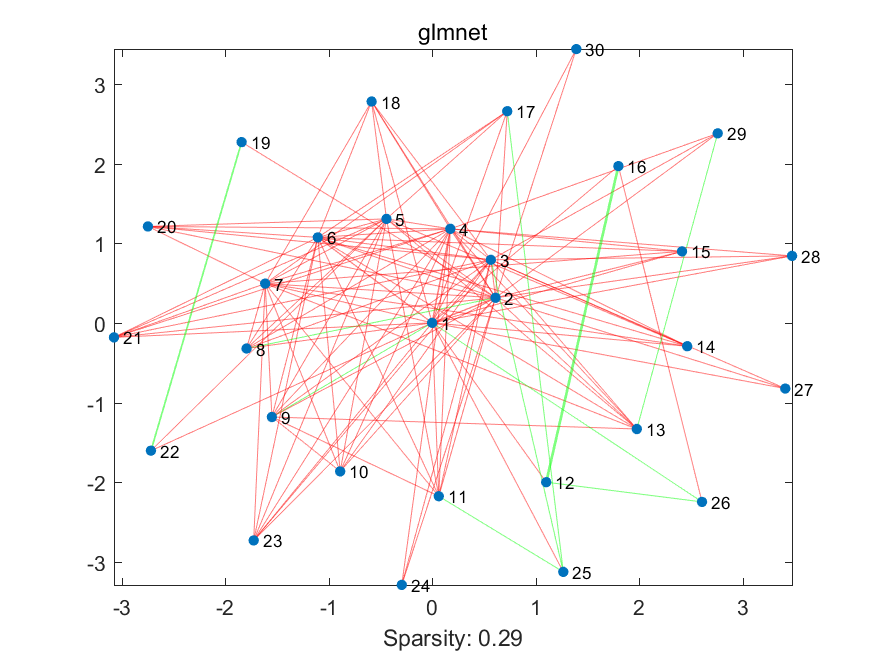}\\
    \includegraphics[width = 0.325\linewidth]{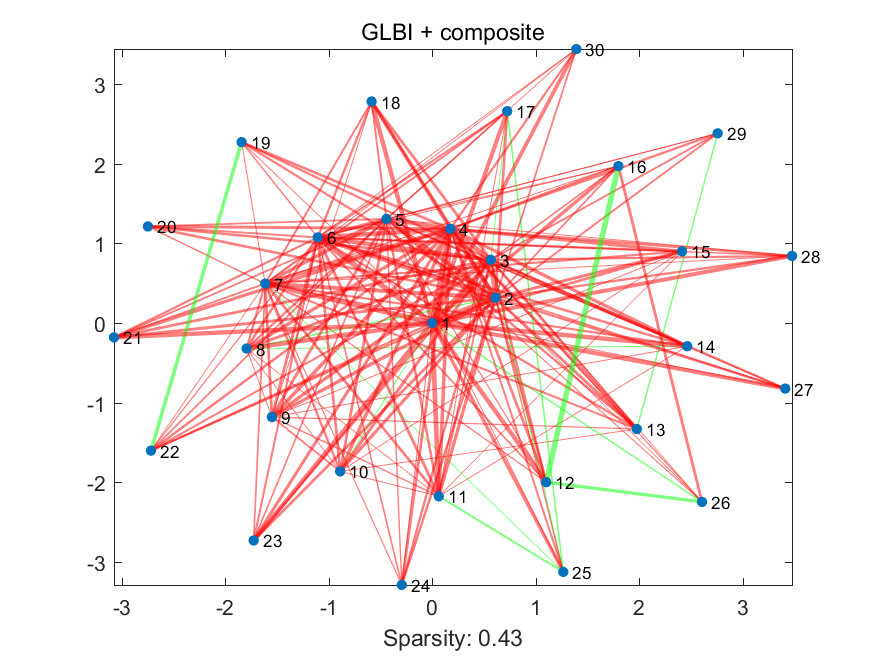}
    \includegraphics[width = 0.325\linewidth]{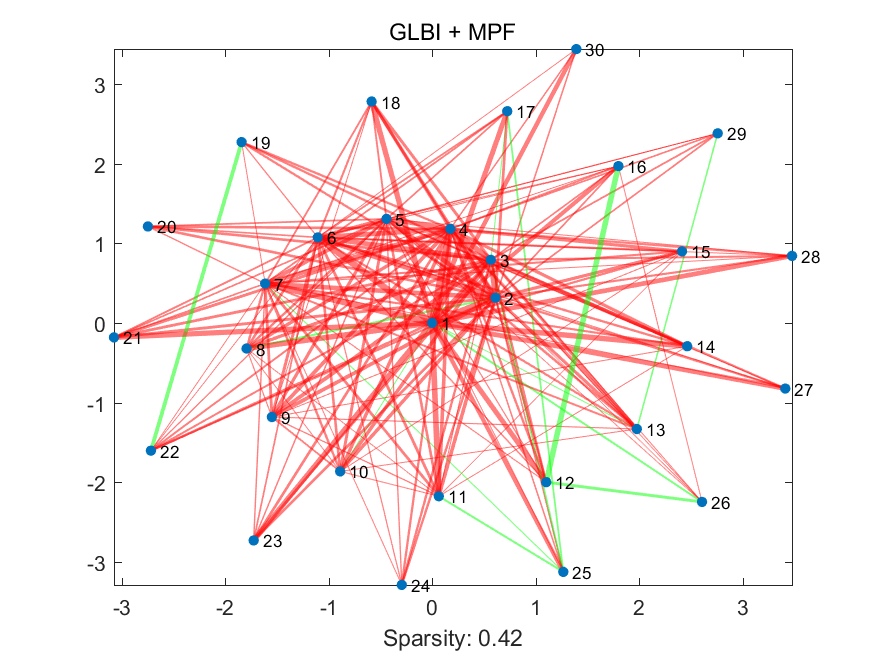}
    \includegraphics[width = 0.325\linewidth]{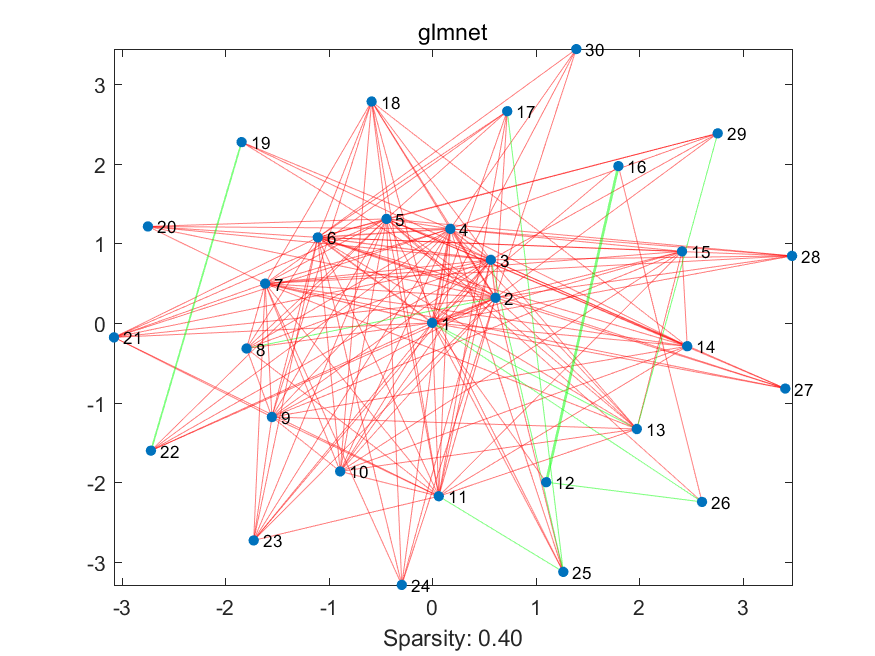}
    \caption{Left: a learned graph corresponding to an estimator picked from the path of GLBI1. Middle: from GLBI2. Right: from \texttt{glmnet}. The term \emph{sparsity level} means the ratio of the number of learned edges over the number of edges of a complete graph $K_p$. Green edges indicating positive conditional dependence of coauthorship, while red edges indicating the negative coauthorship -- the probability of coauthoring a paper significantly decreases. Strong dependence relationships are indicated by edges with large widths.}
    \label{fig:real-nips-ext}
\end{figure}

\end{document}